%% file: preprint.tex
\documentclass[10pt]{article} 

\usepackage{booktabs}       
\usepackage{amsfonts}       
\usepackage{nicefrac}       
\usepackage[final]{microtype}      
\usepackage[dvipsnames,table]{xcolor}         
\usepackage{graphicx}
\usepackage{subcaption}
\usepackage{titletoc}
\usepackage{wrapfig}

\usepackage[p,osf,scaled=.98,space]{erewhon}
\usepackage[varqu,varl]{inconsolata} 
\usepackage[type1,scaled=.95]{cabin} 
\usepackage[T1]{fontenc}    

\usepackage[
  a4paper,
  top    = 33.3mm,
  bottom = 53.9mm,
  left   = 30.8mm,
  right  = 30.8mm,
]{geometry}

\usepackage{parskip}
\usepackage[backref=true,backend=biber,style=authoryear-comp,sortcites,sorting=ynt,maxbibnames=99,maxcitenames=1,uniquelist=true,uniquename=init,giveninits]{biblatex}
\DeclareNameAlias{sortname}{family-given}

\usepackage{titletoc}
\contentsmargin{2.55em}
\dottedcontents{lsection}[3.8em]{\small \itshape}{2.3em}{1pc}
\dottedcontents{lsubsection}[6.1em]{\small \itshape}{2.3em}{1pc}
\titlecontents*{lsubsubsection}[6.1em]{\footnotesize \itshape}{}{}{}[ \textbullet\ ][]

\usepackage{amsmath}
\usepackage{amsthm}
\usepackage{amssymb}
\usepackage{mathtools}
\usepackage{multirow}
\usepackage{tabularx}
\usepackage{enumitem}
\usepackage{bm}
\usepackage{mathrsfs}

\usepackage{algorithm}
\usepackage{algpseudocode}

\usepackage{pifont}

\newcommand*{\R}{\mathbb{R}}

\newcommand*{\T}{\mathbb{T}}

\newcommand*{\bx}{{\bm{x}}}
\newcommand*{\bX}{{\bm{X}}}

\newcommand*{\bW}{{\bm{W}}}

\newcommand*{\bu}{{\bm{u}}}

\newcommand*{\bmf}{{\bm{f}}}
\newcommand*{\bg}{{\bm{g}}}

\newcommand*{\rmd}{\mathrm{d}}

\DeclareFontFamily{U}{mathx}{}
\DeclareFontShape{U}{mathx}{m}{n}{<-> mathx10}{}
\DeclareSymbolFont{mathx}{U}{mathx}{m}{n}
\DeclareMathAccent{\widecheck}{0}{mathx}{"71}

\definecolor{tufte-red}{HTML}{990000}    

\definecolor{tufte-dark}{HTML}{111111}   
\definecolor{tufte-rule}{HTML}{AAAAAA}   
\definecolor{tufte-shade}{HTML}{F2F2F0}  
\definecolor{tufte-mid}{HTML}{666666}    

\definecolor{tufte-green}{HTML}{2E6B3E}  

\definecolor{tufte-sienna}{HTML}{8B4513} 
\definecolor{tufte-sienna-lt}{HTML}{C8A882} 

\definecolor{w-crimson} {HTML}{990000}   
\definecolor{w-sienna}  {HTML}{8B4513}   
\definecolor{w-sienna2} {HTML}{C8A882}   
\definecolor{w-rust}    {HTML}{B5541C}   
\definecolor{w-mid}     {HTML}{888888}   
\definecolor{w-navy}    {HTML}{1B2A6B}   

\usepackage{pgfplots}
\usepackage[edges]{forest}
\pgfplotsset{compat=newest}
\usetikzlibrary{trees,positioning,calc,backgrounds}
\usetikzlibrary{positioning, arrows.meta, shapes.geometric, fit,
                backgrounds, calc, decorations.pathmorphing, patterns}

\pgfplotsset{
    nfeplot/.style={
        axis background/.style={fill=gray!10},
        width=\linewidth,
        height=6.5cm,
        grid=both,
        grid style={white},
        tick style={thick, white},
        major grid style={white},
        minor grid style={gray!10, dashed},
        minor x tick num=9,
        minor y tick num=0,
        xlabel={Number of Function Evaluations (NFE) $(\downarrow)$},
        ylabel={#1},
        legend pos=north east,
        legend cell align=left,
        legend style={
            font=\scriptsize,
            fill opacity=0.85,
            draw opacity=1,
            text opacity=1,
            text=black,
        },
        tick label style={font=\scriptsize},
        label style={font=\scriptsize},
        thick,
        mark size=2.5pt,
    },
    mtwoforone/.style={draw=w-crimson, mark=*,
                       mark options={fill=white}},
    mdiffusion/.style={draw=w-sienna,  mark=*,
                       mark options={fill=white}},
    mmixture/.style  ={draw=w-sienna2, mark=*,
                       mark options={fill=white}},
    mfp/.style       ={draw=w-rust,    mark=*,
                       mark options={fill=white}},
    mboth/.style     ={draw=w-mid,     mark=*,
                       mark options={fill=white}},
    mssfm/.style     ={draw=w-navy,    solid, mark=*,
                       mark options={fill=white}},
}

\newcommand{\cmark}{\textcolor{tufte-green}{\ding{51}}}
\newcommand{\xmark}{\textcolor{tufte-red}{\ding{55}}}

\PassOptionsToPackage{hyphens}{url}\usepackage[breaklinks=true,bookmarks=false]{hyperref}
\hypersetup{
    colorlinks,
    linkcolor=tufte-red,
    citecolor=tufte-red,    
    urlcolor=tufte-red
}
\usepackage[capitalise, noabbrev]{cleveref}
\crefname{assumption}{Assumption}{Assumptions}
\crefname{appendix}{Appendix}{Appendices}

\usepackage{xspace}
\makeatletter
\DeclareRobustCommand\onedot{\futurelet\@let@token\@onedot}
\def\@onedot{\ifx\@let@token.\else.\null\fi\xspace}

\def\eg{\emph{e.g}\onedot} 
\def\ie{\emph{i.e}\onedot}

 \def\vs{\emph{vs}\onedot}
\def\wrt{w.r.t\onedot} 
\makeatother

\usepackage[cachedir=minted-cache]{minted}

\usepackage[many]{tcolorbox}
\tcbuselibrary{listings, minted, skins}

\newtcolorbox{standoutbox}{
    enhanced,
    sharp corners,
    breakable,
    boxrule=1pt,
    notitle,
    colback=ctp-mantle,
    colframe=ctp-blue,
}

\usepackage{thmtools}

\declaretheoremstyle[
    headfont=\color{tufte-dark}\bfseries, 
    bodyfont=\itshape,
    headpunct={.},
    postheadspace=0.5em,
]{thmstyle}

\declaretheoremstyle[
    headfont=\color{tufte-dark}\bfseries,
    bodyfont=\normalfont,
    headpunct={.},
    postheadspace=0.5em,
]{defstyle}

\declaretheorem[name=Assumption,parent=section]{assumption}

\tcolorboxenvironment{restatable}{
    enhanced,
    breakable,
    boxrule=0pt,
    arc=6pt,
    colback=gray!10,                    
    colframe=gray!10,
}

\tcolorboxenvironment{theorem}{
    enhanced,
    arc=6pt,
    breakable,
    boxrule=0pt,
    colback=gray!10,                    
    colframe=gray!10,
}

\newtcolorbox{theorembox}{
    enhanced,
    arc=6pt,
    breakable,
    boxrule=0pt,
    colback=gray!10,                    
    colframe=gray!10,
}

\tcolorboxenvironment{definition}{
    enhanced,
    arc=6pt,
    breakable,
    boxrule=0pt,
    colback=white,
    colframe=white,
}

\tcolorboxenvironment{lemma}{
    enhanced,
    arc=6pt,
    breakable,
    boxrule=0pt,
    colback=white,
    colframe=white,
}

\newtcbinputlisting[use counter=filePrg, number within=section, list inside=mypyg]{\codeFromFile}[4]{%
    listing engine=minted,
    minted language=#1,
    listing file={#2},
    minted style=gruvbox-light,
    minted options={autogobble, tabsize=2, fontsize=\tiny, breaklines},
    listing only,
    enhanced,
    sharp corners,
    boxrule=0.5pt,
    colback=gray!10,
    colframe=gray,
    label=lst:#4
 }

 \crefname{filePrg}{Code}{Codes}

\makeatletter 
\renewcommand{\l@tcolorbox}{\@dottedtocline{1}{0pt}{2.3em}}
\makeatother

\usepackage{titlesec}

\titleformat{\section}[hang]{\Large \sffamily \bfseries}{\thesection}{1em}{}
\titleformat{\subsection}[hang]{\large \sffamily \bfseries}{\thesubsection}{1em}{}
\titleformat{\subsubsection}[hang]{\normalsize \sffamily \bfseries}{\thesubsubsection}{1em}{}
\titleformat{\paragraph}[runin]{\normalfont\normalsize\sffamily\bfseries}{\theparagraph}{1em}{}

\title{Strong Stochastic Flow Maps}
\date{}

\begin{document}

\begin{tcolorbox}[
    colback=gray!10,
    colframe=gray!10,
    boxrule=0pt,
    arc=12pt,
]

\vspace{1em}

\noindent{\rule{\linewidth}{2pt}}

\vspace{1em}

\begin{center}
    {\sffamily\bfseries\huge Strong Stochastic Flow Maps}
\end{center}

\vspace{0.5em}

\noindent{\rule{\linewidth}{1pt}}

\vspace{1em}

\begin{center}
    {\sffamily\bfseries
        Sam McCallum\textsuperscript{*\,1}\quad
        Zander W. Blasingame\textsuperscript{*\,2}\quad
        Timothy Herschell\textsuperscript{1}\quad
        Niklas Rindtorff\textsuperscript{2}\\
        Alexander Tong\textsuperscript{\textdagger\,2}\quad
        James Foster\textsuperscript{\textdagger\,1}
    }\\[0.5em]
    {\sffamily
        \textsuperscript{1}University of Bath \quad
        \textsuperscript{2}AITHYRA\\
        
    }
\end{center}

\vspace{1em}

\begin{center}
    \begin{minipage}{\textwidth}
    \small\sffamily
  Flow and diffusion models generate high-quality samples in many modalities; however, many network evaluations are required during inference due to numerical integration of an underlying differential equation. Flow maps alleviate this problem by learning the solution map of the differential equation directly, enabling few-step sampling. Yet, current methods are restricted to approximating the solution map of ODEs. These methods can be used to learn the transition kernel of an SDE, thereby obtaining a solution map that recovers the marginal distributions of the process (weak convergence) rather than the solution path (strong convergence). We propose \textsc{Strong Stochastic Flow Maps} (SSFMs) as a novel framework for learning the \textit{strong} solution map of additive-noise SDEs, directly generalizing deterministic flow maps to the stochastic setting. Further, a polynomial approximation to Brownian motion is introduced and shown to converge pathwise. These results enable a simulation-free training objective for the solution map of diffusion models. We demonstrate that SSFMs outperform previous stochastic flow map methods on image generation and enable few-step sampling of molecular systems.
    \end{minipage}
\end{center}

\vspace{1em}

{\small\sffamily
    \textbf{Code:} \url{https://github.com/sammccallum/ssfm}\\
    \textbf{Correspondence:}~\href{mailto:sm2942@bath.ac.uk}{\texttt{sm2942@bath.ac.uk}}
    and
    \href{mailto:zblasingame@aithyra.at}{\texttt{zblasingame@aithyra.at}}\\
    \textbf{Date:} 31 May, 2026 \\
    {\small\itshape
        \textsuperscript{*}Equal contribution \quad \textsuperscript{\textdagger}Equal supervision
    }
}

\hfill\includegraphics[height=0.25in]{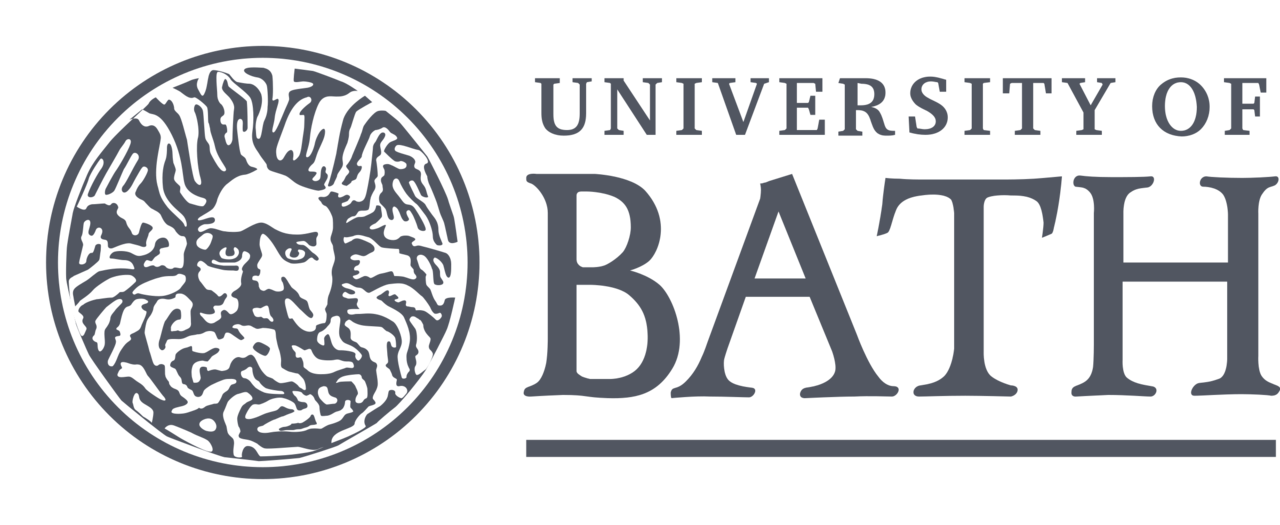}
\qquad
\includegraphics[height=0.25in]{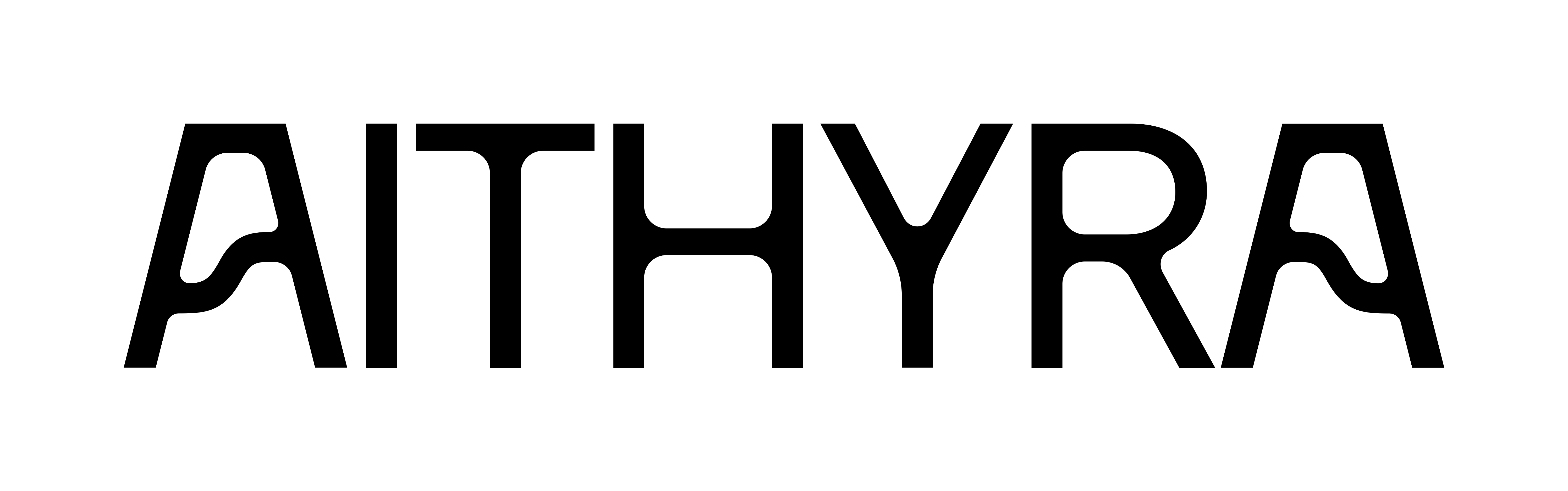}

\end{tcolorbox}

\section{Introduction}
Deep generative models based on \textit{neural differential equations} \parencite{chen2018neural,kidger_thesis} have become one of the most successful model classes for solving a variety of problems such as generative modeling \parencite{song2021scorebased,liu2023flow} and time-series data \parencite{oh2024stable,walker2024log}. 
The application of neural differential equations to generative modeling---diffusion models \parencite{song2021scorebased,ho2020diffusion,sohl2015deep} and flow matching models \parencite{peluchetti2021,lipman2023flow,liu2023flow}---have become state-of-the-art for many different tasks
including
image \parencite{rombach2022high} and video \parencite{blattmann2023align} generation,
protein design \parencite{watson2023novo,disco2026}, and Boltzmann sampling \parencite{rehman2026falcon}.

While quite expressive generative models, neural differential equations often require a large \textit{number of function evaluations} (NFEs) of the learned vector fields to integrate the underlying differential equation.
As such there has been great interest in learning how to improve the computational efficiency of these models, \eg, proposing better numerical schemes to reduce the number of NFEs whilst maintaining similar performance
\parencite{lu2022dpmsolver,zhang2023fast,gonzalez2023seeds}.
Recently, another direction has looked at how to learn the solution, or flow, map associated with a neural \textit{ordinary differential equation} (ODE) \parencite{song2023consistency,kim2024consistency,heek2024multistep,liu2023flow,boffi2024flow,geng2025mean,sabour2025align}.
These methods, which learn a \textit{neural flow map}, have obtained state-of-the-art performance with low NFEs in image generation  \parencite{geng2025mean,geng2025improved} and Boltzmann sampling \parencite{rehman2026falcon} compared to the 100s of steps required with diffusion models.

Previous work has focused on learning a \textit{deterministic} map from the source noise to the target distribution. Recent work has extended these deterministic maps to the stochastic setting by approximating the transition distributions of a stochastic process \parencite{holderrieth2026diamond,potaptchik2026meta,passaro2026stochastic}. However, such approaches do not allow for the estimation of pathwise observables as they are fundamentally decoupled from the underlying stochastic differential equation (SDE) and are only capable of computing \emph{weak} (convergence in distribution) solutions of the stochastic process \parencite{oksendal2003stochastic}.

We introduce \textsc{Strong Stochastic Flow Maps} (SSFMs), a novel framework which obtains the \emph{strong} (convergence in path) solution map to additive-noise SDEs.
This naturally extends ODE-based flow maps, which are pathwise solution maps, to the stochastic setting. 
Specifically, given a realization of the Brownian path and an initial condition, we learn the pathwise solution map of the additive-noise SDE.
To efficiently implement such a model we require a novel set of tools which we construct in this work. Our main contributions are summarized as follows:

\begin{itemize}[topsep=0pt, partopsep=0pt, itemsep=3pt, parsep=0pt, leftmargin=*]
    \item We develop a novel framework for learning the strong solution to additive-noise SDEs in contrast with prior works which consider weak approximations.
    \item We formalize a training objective for learning this map which admits a simulation-free algorithm for obtaining the strong solution map of diffusion models.
    \item We introduce a polynomial approximation of the Brownian motion and prove that it converges in $\alpha$-H\"older distance, has tractable coefficients, and admits closed form Chen relations.
    \item We demonstrate this formulation achieves state-of-the-art performance on few-step stochastic image generation and enables sampling of equilibrium molecular conformations with as few as 1-2 NFEs.
\end{itemize}

\begin{figure}[t]
    \centering
    \input{tikz/sfm_s_to_t.tex}
    \caption{%
        \textbf{Strong \vs\ weak stochastic flow maps.}
        \emph{Top}: The strong stochastic flow map solution is consistent pathwise for a given realization of the Brownian motion $\bW_t(\omega)$.
        \emph{Bottom}: The weak stochastic flow map samples independently from the marginal distribution at each time.}
    \label{fig:weak_vs_strong}
\end{figure}

\section{Background and related work}
\paragraph{Neural SDEs.}
Neural SDEs are the stochastic extension of neural ODEs \parencite{chen2018neural}, independently proposed by \textcite{li2020sdes,kidger2021efficient} which aim to learn the drift and diffusion coefficients for an arbitrary SDE.
In this work we will focus specifically on the case of additive-noise SDEs with known diffusion coefficients, given by
\begin{equation}
    \rmd \bX_t = \bmf^\theta(t, \bX_t)\;\rmd t + g(t)\;\rmd \bW_t,
\end{equation}
where $\bmf^\theta$ is the drift coefficient we aim to learn. Previous work has focused on learning these coefficients by integrating the SDE numerically and then performing backpropagation through the numerical solution. However, for many SDEs studied in generative modelling we can learn the drift coefficient via \textit{simulation-free} training which has underpinned the popularity of different \textit{matching} objectives such as score matching and flow matching. In this work, we focus on diffusion SDEs; however, our results hold for any additive-noise SDE.

\begin{wraptable}{r}{0.5\textwidth}
    \vspace{-1em}
    \centering
    \caption{Comparison of flow-based models.}
    \label{tab:method-comparison}
    \small
    \begin{tabular}{lccc}
        \toprule
        Method & \rotatebox{45}{Few-step} & \rotatebox{45}{Pathwise} & \rotatebox{45}{Stochastic}\\
        \midrule
        Flow models          & \xmark & \cmark & \xmark \\
        Diffusion models     & \xmark & \cmark & \cmark \\
        GLASS flows          & \xmark & \xmark & \cmark \\
        Flow maps            & \cmark & \cmark & \xmark \\
        Weak SFMs            & \cmark & \xmark & \cmark \\
        \textbf{SSFM (ours)} & \cmark & \cmark & \cmark \\
        \bottomrule
    \end{tabular}
    \vspace{-1em}
\end{wraptable}

\paragraph{Flow matching and diffusion.}
In this section we introduce the necessary background on flow/diffusion models \parencite{peluchetti2021,lipman2023flow,albergo2025stochastic,liu2023flow,tong2024improving}. We denote data samples as $\bX_1 \in \mathbb{R}^d$ drawn from the data distribution with density $p_1 \equiv q(\bx)$. We take the source distribution to be a unit Gaussian with density $p_0 \equiv p(\bx)$ with $\bX_0 \sim p_0$. Following \textcite{lipman2024flow-guide} we consider the scenario of \textit{affine Gaussian probability paths} where we define a random variable $\bX_t \coloneq \alpha_t \bX_1 + \sigma_t \bX_0$ with noise schedule $(\alpha_t, \sigma_t)$, where $\alpha_t, \sigma_t \geq 0$ with $\alpha_0 = \sigma_1 = 0$, $\alpha_1 = \sigma_0 = 1$, and $\alpha_t$ ($\sigma_t$ resp.) is strictly monotonically increasing (decreasing resp.) and continuously differentiable.

In the flow matching framework we learn the marginal vector field as
\begin{equation}
\bu_t(\bX) = \int_{\R^d} \bu_{t|1}(\bX|\bX_1) p_{1|t}(\bX_1|\bX)\; \rmd \bX_1, \qquad p_{1|t}(\bX_1|\bX) = \frac{p_{t|1}(\bX|\bX_1)p_1(\bX_1)}{p_t(\bX)},
\end{equation}
where $\bu_{t|1}(\cdot | \bX_1)$ is the vector field conditioned on a data sample $\bX_1$. This vector field can be shown to satisfy,
\begin{equation}
    \label{eq:fm_ode}
    \bX_0 \sim p_0, \quad \bX_0 + \int_0^t \bu_s(\bX_s)\;\rmd s = \bX_t \sim p_t,
\end{equation}
such that the solution to \eqref{eq:fm_ode} at time $t = 1$ yields samples from the data distribution $q \equiv p_1$.

The ODE in \cref{eq:fm_ode} is referred to as the \textit{probability flow ODE} \parencite{song2021scorebased} and can be written as an SDE with the same marginal distributions, given by
\begin{equation}
    \label{eq:fm_sde_form}
    \rmd \bX_t = \left[2 \bu_t(\bX_t) - \frac{\dot\alpha_t}{\alpha_t}\bX_t\right]\;\rmd t + \nu_t\; \rmd \bW_t, \quad \nu_t^2 = 2 \frac{\dot\alpha_t}{\alpha_t}\sigma_t^2 - 2\sigma_t\dot\sigma_t,
\end{equation}
where $\bW_t$ is the standard Brownian motion. This derivation follows straightforwardly from \textcite{anderson1982reverse} and has been discussed more recently in the context of diffusion models \parencites{holderrieth2026glass,song2021scorebased, maoutsa2020interacting}. The implications for model performance when sampling from either the SDE or ODE formulation with equivalent marginals is discussed in \parencite{albergo2025stochastic, nie2024blessing}.

\paragraph{Flow maps.}
Since integrating the generative models in \eqref{eq:fm_ode} or \eqref{eq:fm_sde_form} requires many function evaluations, recent work has proposed to instead learn the integral, or solution map, directly at training time. These works include consistency models \parencite{song2023consistency,heek2024multistep,kim2024consistency}, shortcut models \parencite{frans2025one}, mean flows \parencite{geng2025mean}, and flow maps \parencite{boffi2024flow,boffi2025build,sabour2025align}.
There exists a variety of techniques for training such models. However, training can largely be broken down into two loss components: one term learns the instantaneous behaviour at $s = t$ and another learns the flow map $t > s$. Training via such an objective is referred to as self-distillation \parencite{boffi2025build}.

\paragraph{Weak stochastic flow maps.}
Recently, several works have explored a stochastic extension of deterministic flow maps \parencite{kiyohara2025neural,potaptchik2026meta,holderrieth2026diamond,passaro2026stochastic}, which can be characterized as learning the transition kernel $p_{t|s}(\bX_t|\bX_s)$ of some underlying SDE. Specifically, most of these methods \parencite{potaptchik2026meta,holderrieth2026diamond,passaro2026stochastic} proceed by defining a deterministic ODE flow map where each step of the map is defined by an inner flow model,\footnote{In contrast, \textcite{kiyohara2025neural} propose a variational objective.} given by
\begin{equation}
    \label{eq:glass_ode}
    \bX_s + \int_s^t \bar\bu_\tau(\bar\bX_\tau|\bX_s, s)\;\rmd \tau = \bar\bX_t \sim p_{t|s}(\cdot | \bX_s).
\end{equation}
Since these methods learn the transition kernel $p_{t|s}$ they are described as exhibiting \emph{weak} (in distribution) convergence \parencite{oksendal2003stochastic}. This is in contrast to the pathwise solution map of the underlying additive-noise SDE which generates $\bX_t$ given an initial condition $\bX_s$ and a realization of the Brownian motion $\{\bW_u\}_{s \leq u \leq t}$. We summarize the relationship between SSFMs and prior methods in \cref{tab:method-comparison}.

\section{Strong Stochastic Flow Maps}
\label{sec:ssfm}
In this section we describe the transition from deterministic flow maps to strong stochastic flow maps. We present a natural extension of deterministic flow maps by learning the solution map from both an initial condition and a realization of the Brownian path. We then show how this solution map can be obtained by minimization of an appropriate self-distillation objective.

\subsection{The It\^o map}
Consider the following additive-noise It\^o SDE,
\begin{equation}
    \label{eq:sde}
    \rmd \bX_t = \bm f(t, \bm X_t)\;\rmd t + \bm g(t)\;\rmd\bW_t, 
\end{equation}
where $\bX_t \in \mathbb{R}^d$ is a continuous-valued stochastic process with initial condition $\bX_0$, $\bmf : \mathbb{R} \times \mathbb{R}^d \rightarrow \mathbb{R}^d$ is the drift function, $\bg : \mathbb{R} \rightarrow \mathbb{R}^{d\times w}$ is the diffusion coefficient and $\bW_t \in \mathbb{R}^w$ is Brownian motion. It is assumed that $\bmf$ and $\bg$ are suitably regular such that a unique strong solution $\bX_t$ exists (see Assumption \ref{ass:unique}).

It is natural to consider the solution map of this SDE, $\bm\Psi_{s, t}: (\bX_s, \bW_{[s, t]}) \mapsto \bX_t$, where the Brownian path is written as $\bW_{[s, t]} = \{\bW_u\}_{s\leq u \leq t}$. This map $\bm\Psi_{s, t}$ is known as the It\^o map.\footnote{Note that for a general SDE with state-dependent diffusion, $\bg(t, \bX_t)$, the It\^o map is not a well-defined continuous function. This is solved by rough path theory and the It\^o-Lyons map \parencite{lyons1998differential}.} For an SDE with additive noise and suitably regular coefficients, the It\^o map is well-posed and continuous \parencite{friz2010multidimensional}. It is this map that we aim to approximate with a neural network, $\bm \Psi^\theta_{s, t}$.

Since the It\^o map is defined with respect to both an initial condition, $\bX_0$, and a realization of $\bW_{[s, t]}$, the convergence to the solution $\bX_t$ is pathwise. In stochastic analysis, a process approximating such a pathwise solution is called \emph{strongly} convergent \parencite{oksendal2003stochastic}. We therefore refer to our method as \emph{Strong Stochastic Flow Maps}.

\subsubsection{Constructing the It\^o map}
Analogously to the deterministic flow map, we can derive a tangent condition that the It\^o map satisfies. This result will allow us to associate the vector fields of an SDE with the It\^o map.

\begin{restatable}[Tangent Condition]{lemma}{tangent}
    \label{lem:tangent}
    Let $\bm\Psi_{s, t}(\bX_s, \bW_{[s, t]})$ denote the It\^o map for \eqref{eq:sde}. Then,
    \begin{equation}
        \label{eq:tangent}
        \lim_{s \rightarrow t} \rmd \bm\Psi_{s, t}(\bX_s, \bW_{[s, t]}) = \bm f(t, \bX_t)\;\rmd t + \bm g(t)\;\rmd \bW_t.
    \end{equation}
\end{restatable}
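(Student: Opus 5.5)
The idea is to read the limit in \eqref{eq:tangent} as a statement about the increment of the It\^o map over a short window, and to derive it from the integral form of \eqref{eq:sde}. The statement is informal, so I would first make it precise. For a small window length $h>0$, let
\begin{equation*}
    \Delta_h \coloneq \bm\Psi_{t-h,t}(\bX_{t-h}, \bW_{[t-h,t]}) - \bX_{t-h},
\end{equation*}
the increment generated by the map as $s = t-h \to t$. The claim then becomes
\begin{equation*}
    \Delta_h = \bmf(t,\bX_t)\,h + \bg(t)\,(\bW_t - \bW_{t-h}) + R_h,
\end{equation*}
where the remainder $R_h$ is of lower order than both leading terms. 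Concretely, I would aim for $\mathbb{E}\|R_h\|^2 = o(h)$, or $R_h = o(h^{1/2})$ pathwise, together with an $o(h)$ bound on the conditional mean of $R_h$. These are exactly the orders that identify a drift and a diffusion, so they justify writing the differential identity.

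The first step uses the definition of the It\^o map and pathwise uniqueness of the strong solution (Assumption~\ref{ass:unique}). Together they give
\begin{equation*}
    \bm\Psi_{s,t}(\bX_s,\bW_{[s,t]}) = \bX_s + \int_s^t \bmf(u,\bX_u)\,\rmd u + \int_s^t \bg(u)\,\rmd \bW_u .
\end{equation*}
Because the noise is additive, the stochastic integral is a Wiener integral. It can therefore be defined pathwise by integration by parts, $\int_s^t \bg\,\rmd\bW = \bg(t)\bW_t - \bg(s)\bW_s - \int_s^t \dot{\bg}(u)\bW_u\,\rmd u$, assuming $\bg\in C^1$. This keeps the identity meaningful for a fixed realization of $\bW$, in line with the continuity of the It\^o map noted above.

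The second step splits the remainder as $R_h = A_h + B_h$ with
\begin{equation*}
    A_h = \int_s^t \bigl(\bmf(u,\bX_u)-\bmf(t,\bX_t)\bigr)\,\rmd u, \qquad B_h = \int_s^t \bigl(\bg(u)-\bg(t)\bigr)\,\rmd\bW_u .
\end{equation*}
For $B_h$, the It\^o isometry gives $\mathbb{E}\|B_h\|^2 = \int_s^t \|\bg(u)-\bg(t)\|^2\,\rmd u = o(h)$ by continuity of $\bg$. Equivalently, pathwise, the integration-by-parts form together with H\"older continuity of $\bW$ gives $B_h = O(h^{1+\alpha})$.

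The main obstacle is $A_h$, since it needs regularity of the drift along the path. I would use joint continuity of $\bmf$ together with a Lipschitz bound in $\bx$ from Assumption~\ref{ass:unique}, which gives
\begin{equation*}
    \|A_h\| \le \int_s^t \bigl(\omega_{\bmf}(|t-u|) + L\|\bX_u-\bX_t\|\bigr)\,\rmd u .
\end{equation*}
Continuity of the sample paths then makes the integrand vanish uniformly on $[s,t]$, so $A_h = o(h)$ almost surely. For the mean-square version I would add the standard estimate $\mathbb{E}\|\bX_u-\bX_t\|^2 \le C|t-u|$, which follows from linear growth.

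Combining the two bounds and letting $h\to 0$ gives \eqref{eq:tangent}: the drift coefficient of the map's increment is $\bmf(t,\bX_t)$ and its noise coefficient is $\bg(t)$.
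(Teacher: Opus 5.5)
Your argument is correct under the extra time-regularity you impose, but it follows a different route from the paper's.

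The paper first proves a ``stochastic Lagrangian'' lemma: for fixed $s$, the It\^o differential in $t$ of the map is $\rmd\bm\Psi_{s,t}(\bX_s,\bW_{[s,t]}) = \rmd\bX_t = \bmf(t,\bm\Psi_{s,t}(\bX_s,\bW_{[s,t]}))\,\rmd t + \bg(t)\,\rmd\bW_t$. It then sends $s\to t$ inside the drift, using continuity of the It\^o map and $\bm\Psi_{t,t}(\bX_t,\bW_{[t,t]})=\bX_t$. This reading (differentiate in $t$ at fixed $s$, then let $s\to t$) is the same one used in the proof of Proposition~\ref{prop:em-sfm}. It is very short and needs no regularity of $\bmf$ or $\bg$ in time, because the coefficients are already evaluated at the current time $t$. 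However, it never says what a limit of differentials means, and since $\bm\Psi_{s,t}(\bX_s,\bW_{[s,t]})=\bX_t$ for every $s$, the limit step is essentially immediate.

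You instead make the statement precise as a short-window expansion of the increment $\bm\Psi_{t-h,t}-\bX_{t-h}$ against the frozen coefficients $\bmf(t,\bX_t)$ and $\bg(t)$. Your remainder orders genuinely pin down a drift and a diffusion coefficient, and you prove the expansion by the $A_h$/$B_h$ split of the integral form. This gives the tangent condition checkable content, and it is the form in which one can directly read off $\bmf_{t,t}$ and $\bg_{t,t}$ from the increment of the Euler--Maruyama parameterization.

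The cost is hypotheses outside Assumption~\ref{ass:unique}: continuity of $\bmf$ in $t$, continuity of $\bg$, and $\bg\in C^1$ for the pathwise bound. Under Assumption~\ref{ass:unique} alone, which only asks for measurability in $t$, your estimates still hold, but only for Lebesgue-a.e.\ $t$. This follows from Lebesgue differentiation, with the uniform Lipschitz bound in $\bx$ handling the random point $\bX_t$.

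Two minor remarks. First, the pathwise bound $B_h=O(h^{1+\alpha})$ is a stronger statement under stronger assumptions, not something ``equivalent'' to the It\^o-isometry bound. Second, in the mean-square estimate for $A_h$ the modulus $\omega_{\bmf}$ is taken at the random point $\bX_t$. You therefore need either uniformity in $\bx$ or a dominated-convergence step using linear growth and finite second moments.
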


To construct the It\^o map we therefore propose an Euler-Maruyama step-like object that satisfies the tangent condition. We will show that such a parameterization attains the It\^o map when trained according to the objective in \eqref{eq:objective}.

\begin{restatable}[Strong Stochastic Flow Map]{proposition}{ssfm}
    \label{prop:em-sfm}
    Consider an Euler-Maruyama parameterization of the It\^o map,
    \begin{equation}
        \label{eq:em-sfm}
        \bm\Psi_{s, t}(\bX_s, \bW_{[s, t]}) = \bX_s + \bm f_{s, t}(\bX_s, \bW_{[s, t]})(t-s) + \bm g_{s, t}(\bW_{[s, t]})(\bW_t - \bW_s),
    \end{equation}
    where $\bm f, \bm g$ are twice continuously differentiable, Lipschitz in both time arguments, satisfy the conditions that $\bm f_{t, t}(\bX_t, \bW_{[s, t]}) = \bm f_{t, t}(\bX_t)$, and $\bm g_{t, t}(\bW_{[s, t]}) = \bm g_{t, t}$ (i.e. the coefficients are independent of $\bW_{[s, t]}$ for $s=t$). Then, $\bm\Psi_{s, t}$ satisfies the tangent condition \eqref{eq:tangent} if and only if
    \begin{equation*}
        \bm f_{t, t}(x_t) = \bm f(t, x_t), \quad \bm g_{t, t} = \bm g(t).
    \end{equation*}
\end{restatable}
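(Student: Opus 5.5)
We plan to compute the differential of the parameterization \eqref{eq:em-sfm} in the variable $t$, with the initial condition $\bX_s$ and the starting time $s$ held fixed. We then pass to the limit $s\to t$ and compare with the right-hand side of the tangent condition \eqref{eq:tangent}. As in \cref{lem:tangent}, ``$\lim_{s\to t}\rmd\bm\Psi_{s,t}$'' is read as the It\^o differential $\rmd_t$ of the process $t\mapsto\bm\Psi_{s,t}(\bX_s,\bW_{[s,t]})$, evaluated on the diagonal $s=t$, where $\bX_s\to\bX_t$.

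\textbf{Step 1: It\^o product rule.} Write $h:=t-s$ and $\Delta\bW:=\bW_t-\bW_s$. Applying the It\^o product rule to the drift term gives
\[
\rmd\bigl[\bm f_{s,t}\,h\bigr]=\bm f_{s,t}\,\rmd t+h\,\rmd_t\bm f_{s,t}+\rmd\langle \bm f_{s,\cdot},h\rangle_t .
\]
The last term vanishes because $h$ has finite variation. Applying it to the diffusion term gives
\[
\rmd\bigl[\bm g_{s,t}\,\Delta\bW\bigr]=\bm g_{s,t}\,\rmd\bW_t+(\rmd_t\bm g_{s,t})\,\Delta\bW+\rmd\langle \bm g_{s,\cdot},\bW\rangle_t .
\]
Here we treat $\bm f_{s,t}$ and $\bm g_{s,t}$ as semimartingales in $t$. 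This is justified because they are $C^2$ functionals of $(\bX_s,\bW_{[s,t]})$ and Lipschitz in the time arguments.

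\textbf{Step 2: take $s\to t$.} Three groups of terms vanish in the limit.
\begin{itemize}
\item The remainder terms $h\,\rmd_t\bm f_{s,t}$ and $(\rmd_t\bm g_{s,t})\Delta\bW$ carry prefactors $h\to0$ and $\Delta\bW\to0$ almost surely, by continuity of Brownian paths.
\item The covariation $\rmd\langle\bm g_{s,\cdot},\bW\rangle_t$ is bounded via Lipschitz continuity in time together with the fact that $\bm g_{t,t}$ does not depend on $\bW$. As a result, the martingale part of $\bm g_{s,t}$ degenerates as $s\to t$ and the covariation term vanishes.
\item The leading coefficients converge: $\bm f_{s,t}(\bX_s,\bW_{[s,t]})\to\bm f_{t,t}(\bX_t)$ and $\bm g_{s,t}(\bW_{[s,t]})\to\bm g_{t,t}$. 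This uses continuity of the coefficients (Lipschitz in $s,t$ and $C^2$ in the state), continuity of $\bX$, and the stated diagonal conditions.
\end{itemize}
Putting these together yields
\[
\lim_{s\to t}\rmd\bm\Psi_{s,t}=\bm f_{t,t}(\bX_t)\,\rmd t+\bm g_{t,t}\,\rmd\bW_t .
\]

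\textbf{Step 3: both directions.} If $\bm f_{t,t}=\bm f(t,\cdot)$ and $\bm g_{t,t}=\bm g(t)$, then the limit above is exactly \eqref{eq:tangent}. Conversely, suppose \eqref{eq:tangent} holds. Then
\[
\bigl(\bm f_{t,t}(\bX_t)-\bm f(t,\bX_t)\bigr)\rmd t+\bigl(\bm g_{t,t}-\bm g(t)\bigr)\rmd\bW_t=0 .
\]
By uniqueness of the semimartingale (Doob--Meyer) decomposition, the finite-variation part and the martingale part must each vanish. The martingale part gives $\int (\bm g_{u,u}-\bm g(u))\,\rmd\bW_u=0$, whose quadratic variation $\int|\bm g_{u,u}-\bm g(u)|^2\rmd u$ must therefore be zero. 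Continuity then gives $\bm g_{t,t}=\bm g(t)$ for all $t$. The finite-variation part gives $\bm f_{t,t}(\bX_t)=\bm f(t,\bX_t)$ almost surely for every $t$. Since the law of $\bX_t$ is taken to have full support (for instance, by Gaussian additive noise when $\bm g$ is nondegenerate), continuity extends this to all $x_t$.

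\textbf{Main obstacle.} We expect the hardest step to be the rigorous treatment of $\rmd_t\bm f_{s,t}$ and $\rmd_t\bm g_{s,t}$, together with the covariation term. The issue is that the coefficients are path functionals, so the It\^o formula needs a functional (Dupire-type) version. Our first approach would be to use that, in the paper's construction, the coefficients depend on $\bW_{[s,t]}$ only through finitely many polynomial coefficients. With that restriction the classical multivariate It\^o formula applies, and the $O(h)$ and $O(\Delta\bW)$ prefactors control all remainder terms.
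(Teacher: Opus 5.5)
Your proposal is essentially the paper's proof. The paper applies It\^o's formula directly to the Euler--Maruyama parameterization in $(t,\bW_t)$ and obtains the same terms as your product-rule expansion; its It\^o correction $\partial_{\bW_t}\bm g_{s,t}\,\rmd t$ is your covariation term, the $(t-s)$- and $(\bW_t-\bW_s)$-weighted terms vanish as $s\to t$, $\partial_{\bW_t}\bm g_{t,t}$ is removed via the diagonal independence of $\bm g_{t,t}$, and the coefficients are read off. Your converse via uniqueness of the semimartingale decomposition (plus the full-support remark) is a more careful version of the paper's one-line ``must have'' step, and the covariation term is better killed as the paper implicitly does, by continuity of $\partial_{\bW_t}\bm g$ together with $\partial_{\bW_t}\bm g_{t,t}=0$, rather than by time-Lipschitzness of the values, which alone does not control $\bW$-derivatives.
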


The proposed stochastic flow map in Proposition \ref{prop:em-sfm} introduces a drift $\bm f_{s, t}$ and a diffusion $\bm g_{s, t}$. The intuition behind these functions is that they act as the normalized drift and diffusion integrals, respectively. Therefore, they must depend on the underlying driving path $\bW_{[s, t]}$.

We see from Proposition \ref{prop:em-sfm} that for the stochastic flow map to satisfy the tangent condition, the drift integral must collapse to the drift function, $\bm f_{t, t}(x_t) = \bm f(t, x_t)$, and the diffusion integral must collapse to the diffusion coefficient, $\bm g_{t, t} = \bm g(t)$, as $s \rightarrow t$. This indicates that $\bm f_{t, t}(x_t)$ and $\bm g_{t, t}$ can be estimated by the matching objective in \eqref{eq:objective}.

Next, we must establish an objective that constrains the finite-time, $t > s$, behaviour of the stochastic flow map. Consider the semigroup condition, 
\begin{equation}
    \label{eq:semigroup}
    \bm\Psi_{s, t}(\bX_s, \bW_{[s, t]}) = \bm\Psi_{u, t}(\bm\Psi_{s, u}(\bX_s, \bW_{[s, u]}), \bW_{[u, t]}),
\end{equation} 
for $s < u < t$. Note this condition is satisfied by the It\^o map (see the proof of \cref{prop:semigroup}). It is possible to show that if the strong stochastic flow map construction from Proposition \ref{prop:em-sfm} satisfies the tangent condition \eqref{eq:tangent} and the semigroup property, then this map \emph{is} the It\^o map. This is given by Proposition \ref{prop:semigroup}.

\begin{restatable}[Semigroup condition]{proposition}{semigroup}
    \label{prop:semigroup}
    Let $\bm \Psi_{s, t}(\bX_s, \bW_{[s, t]})$ denote the strong stochastic flow map satisfying \eqref{eq:tangent} and \eqref{eq:em-sfm}. Then $\bm\Psi_{s, t}(\bX_s, \bW_{[s, t]})$ is the It\^o map if and only if the semigroup property \eqref{eq:semigroup} holds.
\end{restatable}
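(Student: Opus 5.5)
The forward direction is the easy one: the It\^o map itself satisfies the semigroup property. Fix $s<u<t$ and a realization $\bW(\omega)$. Let $\bX_r$ be the unique strong solution of \eqref{eq:sde} on $[s,t]$ started from $\bX_s$ (Assumption \ref{ass:unique}). Then
\begin{equation*}
\bX_t = \bX_u + \int_u^t \bmf(r,\bX_r)\,\rmd r + \int_u^t \bg(r)\,\rmd\bW_r .
\end{equation*}
So the restriction of $\bX$ to $[u,t]$ is a strong solution started at $\bX_u=\bm\Psi_{s,u}(\bX_s,\bW_{[s,u]})$ and driven by $\bW_{[u,t]}$. By pathwise uniqueness it coincides with the solution defining $\bm\Psi_{u,t}(\bX_u,\bW_{[u,t]})$, which gives \eqref{eq:semigroup}. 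Additive noise matters here. The stochastic integral $\int_u^t\bg(r)\,\rmd\bW_r$ is a deterministic functional of the increments of $\bW$ on $[u,t]$ (integrate by parts, using that $\bg$ is regular). Hence $\bm\Psi_{u,t}$ depends only on the starting point and $\bW_{[u,t]}$, as the notation requires.

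For the converse, assume $\bm\Psi$ has the form \eqref{eq:em-sfm}, satisfies the tangent condition \eqref{eq:tangent}, and is a semigroup. By Proposition \ref{prop:em-sfm}, the tangent condition means $\bmf_{t,t}=\bmf(t,\cdot)$ and $\bg_{t,t}=\bg(t)$. Fix $s$ and $\bX_s$, and define $\bm Y_t:=\bm\Psi_{s,t}(\bX_s,\bW_{[s,t]})$ for $t\ge s$. The goal is to show $\bm Y$ solves \eqref{eq:sde}; uniqueness then gives $\bm Y_t=\bX_t$, i.e. $\bm\Psi$ is the It\^o map. Take a partition $s=t_0<t_1<\dots<t_n=t$ with mesh $\to0$. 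Iterating \eqref{eq:semigroup} gives $\bm Y_{t_{k+1}}=\bm\Psi_{t_k,t_{k+1}}(\bm Y_{t_k},\bW_{[t_k,t_{k+1}]})$, so telescoping yields
\begin{equation*}
\bm Y_t-\bX_s=\sum_k \Big[\bmf_{t_k,t_{k+1}}(\bm Y_{t_k},\bW_{[t_k,t_{k+1}]})\,\Delta t_k + \bg_{t_k,t_{k+1}}(\bW_{[t_k,t_{k+1}]})\,\Delta\bW_k\Big].
\end{equation*}
Write $\bmf_{t_k,t_{k+1}}=\bmf(t_k,\bm Y_{t_k})+\big(\bmf_{t_k,t_{k+1}}-\bmf_{t_k,t_k}\big)$, and decompose $\bg_{t_k,t_{k+1}}$ the same way. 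The main terms are Riemann sums converging to $\int_s^t\bmf(r,\bm Y_r)\,\rmd r$ and to the It\^o integral $\int_s^t\bg(r)\,\rmd\bW_r$ in $L^2$ (or in probability). For the It\^o sum, it helps that $\bg(t_k)$ is deterministic and evaluated at left endpoints.

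The main obstacle is controlling the remainders. The drift remainder is $O(\Delta t_k)$ by the Lipschitz assumption in the second time argument, so it contributes $O(\text{mesh})\to0$. The diffusion remainder $\sum_k(\bg_{t_k,t_{k+1}}-\bg(t_k))\Delta\bW_k$ is the delicate term. Its coefficient depends on $\bW_{[t_k,t_{k+1}]}$, so it is not adapted and the It\^o isometry does not apply directly. I would first use the Lipschitz property to bound $|\bg_{t_k,t_{k+1}}-\bg_{t_k,t_k}|\le L\Delta t_k$, which gives
\begin{equation*}
\mathbb E\sum_k L\,\Delta t_k\,|\Delta\bW_k| \lesssim \sum_k \Delta t_k^{3/2}\to0 .
\end{equation*}
If the Lipschitz constant must instead be path-dependent, localize by stopping times, or use Cauchy--Schwarz with a moment bound on $L$. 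Passing to the limit shows $\bm Y$ satisfies the integral form of \eqref{eq:sde}, and Assumption \ref{ass:unique} concludes.
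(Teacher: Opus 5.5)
Your proposal is correct and follows essentially the same route as the paper. The forward direction restricts the strong solution to $[u,t]$ and invokes uniqueness. The converse telescopes the semigroup relation over a partition, splits each coefficient into its diagonal value plus a Lipschitz-in-time remainder, passes to Riemann and It\^o sums, and concludes by uniqueness.

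The only difference is cosmetic. You control the diffusion remainder in $L^1$ via $\sum_k \Delta t_k^{3/2}$. The paper instead bounds it pathwise by $L\,(t-s)\max_k|\Delta\bW_k|$, using uniform continuity of the Brownian path, which also makes your localization remark for a path-dependent Lipschitz constant unnecessary.
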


\subsubsection{Constructing the Brownian motion}
Since the It\^o map requires a realization of the Brownian path $\bW_{[s, t]}$, so too do the drift and diffusion integral functions $\bm f_{s, t}, \bm g_{s, t}$. These functions will ultimately be parameterized as neural networks. We therefore require an efficient characterization of $\bW_{[s, t]}$ that we can pass as input.

A na\"ive approach is to simply pass a piecewise linear approximation to $\bW_{[s, t]}$ over $N$ intervals. However, to obtain any reasonable approximation to the path, too many intervals would be required. We instead look to pass the coefficients of a polynomial expansion of $\bW_{[s, t]}$. Specifically, we consider a piecewise polynomial expansion to Brownian motion in terms of shifted Legendre polynomials \parencite{foster2020optimal, Habermann_2021}.

This polynomial expansion has a number of desirable properties: (1) the coefficients of this expansion appear in the stochastic Taylor expansion of the SDE in \eqref{eq:sde} enabling larger time-step approximation; (2) the coefficients are independently and normally distributed allowing for tractable and exact sampling; (3) the coefficients admit closed-form Chen relations \parencite{chen1954iterated,chen1957integration} such that two coefficients over the sub-intervals $[s, u]$, $[u, t]$ combine into a single coefficient over $[s, t]$. This property enables the semigroup objective to be implemented. These results are shown in \cref{thm:properties_of_polynomial}. 

The following results are theoretically dense, relying on elements of rough path theory \parencite{lyons1998differential}. However, the polynomial expansion of Brownian motion is visually intuitive; see \cref{fig:poly_brownian}. 

\begin{restatable}[Polynomial approximation for Brownian motion]{definition}{poly_approx_brownian}
    \label{def:poly_approx_brownian}
    Let $\widetilde{P}_n: [0,1] \to \R$ denote the $n$-th shifted Legendre polynomial on $[0,1]$. 
    We define the $n$-th coefficient of the polynomial expansion as,
    \begin{equation}
        \bm I_{s,t}^{(n)} = \int_s^t  \widetilde{P}_n\left(\frac{u-s}{t-s}\right) \rmd \bW_u.
    \end{equation}
    As introduced in \parencite{foster2020optimal, Habermann_2021}, the degree-$N$ polynomial approximation of the Brownian motion on $[s,t]$ takes the form,
    \begin{equation}
        \label{eq:polyapprox}
        \bW_{u,v}^{(N)} = \sum_{n=0}^{N-1} \frac{2n+1}{t-s} \bm I_{s,t}^{(n)} \int_u^v \widetilde{P}_n\left(\frac{r-s}{t-s}\right) \rmd r,
    \end{equation}
    for each increment \([u,v]\subseteq [s,t]\).
\end{restatable}
\begin{restatable}[Properties of $\bW_{u,v}^{(N)}$]{theorem}{bigpoly}
    \label{thm:properties_of_polynomial}
    The polynomial approximation of the Brownian motion in \cref{def:poly_approx_brownian} has the following properties:
        \begin{enumerate}
            \item Converges to Brownian motion in the $\alpha$-H\"older distance, with $\alpha \in [0, \frac 12 )$,\label{item:poly_approx_property1}
            \item The coefficients $\bm I_{s,t}^{(n)}$ are independently and normally distributed with
            \begin{equation}
                \label{eq:polycoeff}
                \bm I_{s,t}^{(n)} \sim \mathcal N\left(\bm 0, \frac{(t-s)}{2n+1}\bm I\right),
            \end{equation}\label{item:poly_approx_property2}
            \item The coefficients $\bm I_{s,t}^{(n)}$ admit closed form Chen relations.\label{item:poly_approx_property3}
        \end{enumerate}
\end{restatable}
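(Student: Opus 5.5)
I would prove the three properties separately, starting with the Gaussian structure (property \ref{item:poly_approx_property2}), since both the convergence and the Chen relations rest on it.

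\textbf{Distribution of the coefficients.} Each $\bm I_{s,t}^{(n)}$ is a Wiener integral of a deterministic integrand, so the family $\{\bm I_{s,t}^{(n)}\}_n$ is jointly Gaussian with mean zero. By the It\^o isometry and the substitution $r=(u-s)/(t-s)$,
\[
\mathbb{E}\bigl[\bm I^{(n)}_{s,t}(\bm I^{(m)}_{s,t})^\top\bigr]=(t-s)\int_0^1\widetilde P_n(r)\widetilde P_m(r)\,\rmd r\,\bm I=\frac{(t-s)\,\delta_{nm}}{2n+1}\,\bm I .
\]
This uses the orthogonality of shifted Legendre polynomials on $[0,1]$. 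Jointly Gaussian and uncorrelated gives independence, which is \eqref{eq:polycoeff}.

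\textbf{H\"older convergence.} The approximation $\bW^{(N)}$ is the $L^2([s,t])$-orthogonal projection of the white noise $\dot\bW$ onto polynomials of degree $<N$, integrated in time. Equivalently, $\bW-\bW^{(N)}$ is the tail $\sum_{n\ge N}\frac{2n+1}{t-s}\bm I^{(n)}\int_s^{\cdot}\widetilde P_n$, and this tail is a centred Gaussian process. The plan has three steps.
\begin{enumerate}
\item Bound its increment variance. Using Parseval on $\mathbb 1_{[u,v]}$ in the normalized Legendre basis,
\[
\mathbb{E}\,\bigl|(\bW-\bW^{(N)})_{u,v}\bigr|^2=d\Bigl(|v-u|-\sum_{n<N}\tfrac{2n+1}{t-s}\Bigl(\int_u^v\widetilde P_n\Bigr)^2\Bigr).
\]
This is at most $d|v-u|$, and it also tends to $0$ as $N\to\infty$. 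Interpolating, via a bound like $\lesssim \min(|v-u|,\,(t-s)/N)$, gives $\mathbb{E}|\cdot|^2\le C|v-u|^{2\beta}N^{-\gamma}$ for some $\beta\in(\alpha,\tfrac12)$ and some $\gamma>0$.
\item Apply Gaussian hypercontractivity to upgrade this second-moment bound to $p$-th moments.
\item Feed the moment bounds into the Kolmogorov / Garsia--Rodemich--Rumsey lemma. This gives $\mathbb{E}\|\bW-\bW^{(N)}\|_\alpha^p\lesssim N^{-\gamma p/2}$, hence convergence in $L^p$. Taking $p$ large and using Borel--Cantelli gives almost sure convergence along the full sequence.
\end{enumerate}
The endpoint $\alpha<\tfrac12$ is exactly where the choice $\beta\in(\alpha,\tfrac12)$ is possible.

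\textbf{Closed-form Chen relations.} Split $[s,t]$ at $u$ and write $h_1=u-s$, $h_2=t-u$, $h=t-s$. Then
\[
\bm I_{s,t}^{(n)}=\int_s^u\widetilde P_n\Bigl(\tfrac{r-s}{h}\Bigr)\rmd\bW_r+\int_u^t\widetilde P_n\Bigl(\tfrac{r-s}{h}\Bigr)\rmd\bW_r .
\]
On $[s,u]$ the argument equals $\tfrac{h_1}{h}\cdot\tfrac{r-s}{h_1}$, so $\widetilde P_n$ is a polynomial of degree $n$ in the local variable and can be re-expanded in the local basis $\widetilde P_m\bigl(\tfrac{r-s}{h_1}\bigr)$, $m\le n$. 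The same holds on $[u,t]$ with an affine shift. This yields
\[
\bm I^{(n)}_{s,t}=\sum_{m\le n}\bigl(a_{nm}\,\bm I^{(m)}_{s,u}+b_{nm}\,\bm I^{(m)}_{u,t}\bigr),
\]
with explicit coefficients $a_{nm},b_{nm}$ depending only on $h_1/h$. They are computable from the Legendre dilation/translation formulas, or as $(2m+1)\int_0^1\widetilde P_n(\lambda x)\widetilde P_m(x)\,\rmd x$. Because this is a finite linear identity, the relation is exact and closed form.

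\textbf{Main obstacle.} I expect the H\"older step to be the hard one, specifically the uniform-in-$(u,v)$ variance bound that interpolates between $|v-u|$ and a decaying-in-$N$ term. My first attempt would be the direct estimate
\[
\sum_{n\ge N}(2n+1)\Bigl(\int_x^y\widetilde P_n\Bigr)^2\lesssim \min\bigl(y-x,\;1/N\bigr),
\]
obtained from $\int\widetilde P_n=\tfrac{1}{2(2n+1)}(\widetilde P_{n+1}-\widetilde P_{n-1})$ together with the classical bound $|P_n(x)|\lesssim (n\sqrt{1-x^2})^{-1/2}$. Near the endpoints of $[0,1]$ the boundary blow-up of this bound needs care. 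If that becomes messy, I would cite the convergence result of \textcite{Habermann_2021} directly.
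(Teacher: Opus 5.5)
Properties 2 and 3 follow the paper's route. For property 2 you use the same It\^o isometry together with $\int_0^1\widetilde P_n^2=\frac{1}{2n+1}$. Your cross-covariance computation is in fact what justifies independence; the paper computes only each coefficient's variance and leaves independence implicit.

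For property 3 the paper does exactly your re-expansion, but only for the midpoint split. It computes $c_{n,m}=(2m+1)\int_0^1\widetilde P_n(x/2)\widetilde P_m(x)\,\rmd x$ in closed form via Rodrigues' formula and $m$-fold integration by parts. It then obtains the second-half coefficients as $(-1)^{n+m}c_{n,m}$ from the reflection $\widetilde P_n(x)=(-1)^n\widetilde P_n(1-x)$. You allow an arbitrary split point but leave $a_{nm},b_{nm}$ as integrals.

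Property 1 is where the routes genuinely differ. The paper never estimates the tail. It uses the identity $\mathbb{W}^{(N)}_{u,v}=\mathbb{E}[\mathbb{W}_{u,v}\mid\mathscr F_N]$ to transfer a Garsia--Rodemich--Rumsey H\"older constant for $\mathbb{W}$ to all $\mathbb{W}^{(N)}$ at once (a martingale, controlled by Doob), which gives equicontinuity. It then concludes with Arzel\`a--Ascoli, the pointwise convergence known from Foster--Lyons--Oberhauser, and interpolation between a $\beta$-H\"older bound and uniform convergence. That needs no Legendre asymptotics but gives no rate.

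Your Gaussian route (tail variance, moments, Kolmogorov, Borel--Cantelli) is more hands-on and yields an explicit $L^p$ rate. The step you flag as the main obstacle is harmless, because you only need some decay $N^{-\gamma'}$. Your Bernstein-type bound gives a tail $\lesssim (N\sqrt{x(1-x)})^{-1}$ for $\int_0^x$. Bessel's inequality gives a tail $\le\min(x,1-x)$, using $\int_0^1\widetilde P_n=0$ for $n\ge1$. Together these give $N^{-2/3}$ uniformly in $x$.

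There is one difference in scope. In the paper's appendix the $\alpha$-H\"older distance is the rough-path metric on the Stratonovich lifts. So property 1, as the paper proves it, also asserts $\pi_2(\mathbb{W}^{(N)})\to\pi_2(\mathbb{W})$, and your plan only controls the first level. Your machinery extends directly:
\begin{itemize}
\item Let $\Pi_N$ be the $L^2([s,t])$-projection onto polynomials of degree $<N$. For $i\neq j$, the difference of the $(i,j)$ entries is a double Wiener integral with kernel $(I-\Pi_N\otimes\Pi_N)\mathbf{1}_{\{u<r<r'<v\}}$.
\item Its variance is therefore at most $2\int_u^v\bigl(\|(I-\Pi_N)\mathbf{1}_{[u,r]}\|^2+\|(I-\Pi_N)\mathbf{1}_{[r,v]}\|^2\bigr)\,\rmd r\lesssim|v-u|\min(|v-u|,N^{-\gamma'})$, by your level-one estimate.
\item The diagonal entries are $\tfrac12$ times a difference of squares, which level one already controls.
\item Here hypercontractivity in the second chaos is genuinely needed; at level one ordinary Gaussian moments suffice. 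After that, the rough-path Kolmogorov criterion finishes exactly as before.
\end{itemize}
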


\subsection{Training}
Given the polynomial approximation in \cref{def:poly_approx_brownian}, we can write the Strong Stochastic Flow Map as
\begin{equation}
    \bm\Psi_{s,t}^\theta(\bX_s, \bm I_{s,t}^{(N)}) = \bX_s + \bmf_{s,t}^\theta(\bX_s, \bm I_{s,t}^{(N)})(t - s) + \bg_{s,t}^\theta(\bm I_{s,t}^{(N)})(\bW_t - \bW_s),
\end{equation}
where the coefficients up to degree $N$, given by $\bm I_{s,t}^{(N)} = \{\bm I_{s, t}^{(n)}\}_{n\leq N}$, are passed to the neural network terms $\bmf_{s,t}^\theta, \bg_{s,t}^\theta$. Note that $\bW_t - \bW_s = \bm I_{s, t}^{(0)}$.

\subsubsection{Self-distillation objective}
We have seen that the stochastic flow map in \eqref{eq:em-sfm} \emph{is} the It\^o map if and only if the diagonal integrals are equal to the SDE coefficients, $\bm f_{t, t}(x_t) = \bm f(t, x_t), \bm g_{t, t} = \bm g(t)$, and the semigroup condition is satisfied.

These two properties allow us to write down an objective for the It\^o map. This is shown by Theorem \ref{thm:objective}.

\begin{restatable}[Self-distillation Objective]{theorem}{objective}
    \label{thm:objective}
    Let $\bm\Psi_{s, t}(\bX_s, \bW_{[s, t]})$ denote the It\^o map for \eqref{eq:sde}. Then, this map is given by the strong stochastic flow map in \eqref{eq:em-sfm} where $\bm v_{s, t} = [\bm f_{s, t}, \bm g_{s, t}]$ is the unique global minimizer over $\hat{\bm v}$ of
    \begin{equation}
        \label{eq:objective}
        \mathcal{L}_\text{SD}(\hat{\bm v}) = \mathcal{L}_{\bm f, \bm g}(\hat{\bm v}) + \mathcal{L}_\text{D}(\hat{\bm v}),
    \end{equation}
    where 
    \begin{align}
        \mathcal{L}_{\bm f, \bm g}(\hat{\bm v}) &= \mathbb{E}_{t, \bX_t} \left[\lVert \bm f(t, \bX_t) - \hat{\bm f}_{t, t}(\bX_t)\rVert_2^2 + \lVert \bm g(t) - \hat{\bm g}_{t, t}\rVert^2_2 \right],  \\
        \mathcal{L}_\text{D}(\hat{\bm v}) &= \mathbb{E}_{s, u, t, \bX_s, \bW_{[s, t]}} \left[ \lVert \hat{\bm \Psi}_{s, t}(\bX_s, \bW_{[s, t]}) - \hat{\bm \Psi}_{u, t}(\hat{\bm \Psi}_{s, u}(\bX_s, \bW_{[s, u]}), \bW_{[u, t]})\rVert_2^2 \right].
    \end{align}
\end{restatable}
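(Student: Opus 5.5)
The plan is to show two things: the true pair $\bm v=[\bm f_{s,t},\bm g_{s,t}]$ coming from the It\^o map makes $\mathcal{L}_\text{SD}$ vanish, and any $\hat{\bm v}$ with $\mathcal{L}_\text{SD}(\hat{\bm v})=0$ produces the It\^o map. Since both $\mathcal{L}_{\bm f,\bm g}$ and $\mathcal{L}_\text{D}$ are expectations of squared norms, $\mathcal{L}_\text{SD}\ge 0$. A zero of the loss is therefore automatically a global minimizer, and uniqueness reduces to characterizing the zero set.

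\textbf{Existence.} Take the It\^o map $\bm\Psi_{s,t}$. Its coefficients in the form \eqref{eq:em-sfm} are obtained by writing
\begin{equation*}
\bm\Psi_{s,t}(\bX_s,\bW_{[s,t]})-\bX_s=\int_s^t \bm f(r,\bX_r)\,\rmd r+\int_s^t \bm g(r)\,\rmd\bW_r .
\end{equation*}
Set $\bm f_{s,t}=(t-s)^{-1}\int_s^t \bm f(r,\bX_r)\,\rmd r$. For $\bm g_{s,t}$, take the normalized diffusion integral, defined so that $\bm g_{s,t}(\bW_t-\bW_s)=\int_s^t\bm g(r)\,\rmd\bW_r$. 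This is understood componentwise or via a path-dependent choice, which is degenerate only when $\bW_t=\bW_s$; I would handle that null event by continuity or by taking a measurable selection. As $s\to t$ these normalized integrals reduce to $\bm f(t,\bX_t)$ and $\bm g(t)$, consistent with Lemma~\ref{lem:tangent}. Proposition~\ref{prop:em-sfm} then gives $\bm f_{t,t}=\bm f(t,\cdot)$ and $\bm g_{t,t}=\bm g(t)$, so $\mathcal{L}_{\bm f,\bm g}(\bm v)=0$. The It\^o map satisfies the semigroup identity \eqref{eq:semigroup} by pathwise uniqueness of strong solutions (Assumption~\ref{ass:unique}). Indeed, solving on $[s,u]$ and then restarting from $\bX_u$ on $[u,t]$ gives a strong solution on $[s,t]$, which must coincide with the direct one. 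Hence $\mathcal{L}_\text{D}(\bm v)=0$.

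\textbf{Uniqueness.} Suppose $\mathcal{L}_\text{SD}(\hat{\bm v})=0$. Then $\hat{\bm f}_{t,t}(\bX_t)=\bm f(t,\bX_t)$ holds for a.e.\ $(t,\bX_t)$ under the sampling law, and $\hat{\bm g}_{t,t}=\bm g(t)$ for a.e.\ $t$. By the assumed continuity of $\hat{\bm f},\hat{\bm g}$, these identities extend to everywhere on the support, assuming the sampling distribution has full support (e.g.\ a Gaussian-smoothed marginal). Proposition~\ref{prop:em-sfm} then shows that $\hat{\bm\Psi}$ satisfies the tangent condition \eqref{eq:tangent}. Likewise, $\mathcal{L}_\text{D}=0$ gives the semigroup identity almost surely for a.e.\ $(s,u,t)$, and continuity upgrades this to all $s<u<t$. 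Proposition~\ref{prop:semigroup} then identifies $\hat{\bm\Psi}$ with the It\^o map. Because $(\hat{\bm f}_{s,t},\hat{\bm g}_{s,t})$ enter \eqref{eq:em-sfm} linearly, the last step is to check that two coefficient pairs producing the same map must agree. I would do this by matching the $(t-s)$ and $(\bW_t-\bW_s)$ parts, using that $\hat{\bm g}_{s,t}$ depends only on the path and not on $\bX_s$.

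\textbf{Main obstacle.} The delicate step is the passage from ``zero in expectation'' to ``identity everywhere.'' This needs full support of the sampling distributions over $(t,\bX_t)$, over $s<u<t$, and over Brownian paths, which holds by the Cameron--Martin support theorem. It also uses the continuity hypotheses of Proposition~\ref{prop:em-sfm}. A related subtlety is uniqueness of the decomposition into $\bm f_{s,t}$ and $\bm g_{s,t}$: if that decomposition is not unique, I would state uniqueness at the level of the map $\bm\Psi$ itself, which is what the theorem really asserts.
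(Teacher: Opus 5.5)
Your main line is the paper's proof. Both terms of $\mathcal{L}_\text{SD}$ are nonnegative. The It\^o map makes both vanish: its diagonal coefficients are the SDE coefficients by Lemma~\ref{lem:tangent} and Proposition~\ref{prop:em-sfm}, and it satisfies \eqref{eq:semigroup}. Any zero of $\mathcal{L}_\text{SD}$ has the right diagonal coefficients and satisfies \eqref{eq:semigroup}, so Propositions~\ref{prop:em-sfm} and~\ref{prop:semigroup} identify $\hat{\bm\Psi}$ with the It\^o map. Your a.e.-to-everywhere discussion is care the paper simply omits. However, two of your additional sub-steps would not go through as written.

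First, you cannot get coefficient-level uniqueness by separating the $(t-s)$ and $(\bW_t-\bW_s)$ parts. Because $\hat{\bm f}_{s,t}$ may depend on $\bW_{[s,t]}$, for any constant $\bm A\in\R^{d\times w}$ the pair $\big(\bm f_{s,t}-\bm A(\bW_t-\bW_s),\ \bm g_{s,t}+(t-s)\bm A\big)$ produces the same map with the same diagonal values. The $\bX_s$-independence of $\hat{\bm g}$ does not help, since this perturbation involves only the path. A literal pathwise Lipschitz-in-$t$ requirement would exclude this perturbation. But generically the It\^o map's own averaged drift $(t-s)^{-1}\int_s^t\bm f(r,\bX_r)\,\rmd r$ also fails that requirement near $t=s$. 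So go directly to your fallback, uniqueness of $\hat{\bm\Psi}$, which is all the paper's proof actually concludes.

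Second, your normalized diffusion integral does not have the limit you claim. Taken componentwise, it is a quotient $\int_s^t g^{ij}(r)\,\rmd W^j_r\big/(W^j_t-W^j_s)$. For non-constant $\bm g$ this is unbounded near the zeros of $s\mapsto W^j_t-W^j_s$, and these zeros accumulate at $t$. So neither continuity nor a measurable selection yields $\bm g_{s,t}\to\bm g(t)$ pathwise. Take instead $\bm g_{s,t}:=\bm g(t)$ and absorb the remainder into the drift. For $C^1$ $\bm g$, $\int_s^t(\bm g(r)-\bm g(t))\,\rmd\bW_r=-\int_s^t\bm g'(r)(\bW_r-\bW_s)\,\rmd r$. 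Its norm divided by $t-s$ is at most $\sup|\bm g'|\,\sup_{r\in[s,t]}|\bW_r-\bW_s|\to0$, so the diagonal values $\bm f(t,\cdot)$ and $\bm g(t)$ hold pathwise.
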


The objective in \eqref{eq:objective} is written in terms of the drift and diffusion coefficients of the SDE for which we wish to learn the It\^o map. In the case of flow models (flow matching, diffusion, stochastic interpolants) the drift coefficient is known conditioned on an end-point sample, $\bX_1$ (and/or $\bX_0$), obtained from the marginal at $t=1$ (and/or $t=0$). 

The results presented in this section are proved in \cref{app:proofs}.

\begin{algorithm}
    \scriptsize
    \caption{Strong Stochastic Flow Map Training}\label{alg:ssfm}
    \begin{algorithmic}[1]
        \Require Batch size $M$, split $\eta \in (0,1)$, polynomial degree $N$,
                 threshold $\Delta t$, EMA decay $\beta \in (0,1)$
        \Repeat

        \State Sample $s \sim \mathcal{U}[0, 1]$, \;
               $\bX_s \sim p_s$
               \hfill $\triangleright$ Simulate or interpolate

        \Statex $\triangleright$ Matching objective \textnormal{(batch size $\lfloor \eta M \rfloor$)}
        \State Sample $t \sim \mathcal{U}[s,\, s {+} \Delta t]$ and
               $\bm{I}_{s,t}^{(N)} \sim$ \eqref{eq:polycoeff}
        \State $\hat\bX_{t} \leftarrow \bX_s
               + \bm{f}(s, \bX_s)(t {-} s)
               + \bm{g}(s)(\bW_{t} {-} \bW_{s})$
        \State $\mathcal{L}_{\bm{f},\bm{g}} \leftarrow
               (t {-} s)^{-1}\,\big\|
               \hat\bX_{t} -
               \bm{\Psi}_{s,t}^{\theta}(\bX_s,\bm{I}_{s,t}^{(N)})
               \big\|^2$

        \Statex $\triangleright$ Distillation objective \textnormal{(batch size $M - \lfloor \eta M \rfloor$)}
        \State Sample $t \sim \mathcal{U}[s {+} \Delta t,\, 1]$ and set
               $u \leftarrow \tfrac{1}{2}(s {+} t)$
        \State Sample $\bm{I}_{s,u}^{(N)}, \bm{I}_{u,t}^{(N)} \sim$ \eqref{eq:polycoeff}
               and compute $\bm{I}_{s,t}^{(N)}$ via \eqref{eq:polychen}
        \State $\bX_{\mathrm{tgt}} \leftarrow
               \texttt{stopgrad}\big(\bm{\Psi}_{s,t}^{\theta}(\bX_s,
               \bm{I}_{s,t}^{(N)})\big)$
        \State $\bX_{\mathrm{pred}} \leftarrow
               \bm{\Psi}_{u,t}^{\theta}\!\big(
               \bm{\Psi}_{s,u}^{\theta}(\bX_s, \bm{I}_{s,u}^{(N)}),\,
               \bm{I}_{u,t}^{(N)}\big)$
        \State $\mathcal{L}_{\mathrm{D}} \leftarrow
               (t {-} s)^{-1}\,\big\|
               \bX_{\mathrm{tgt}} - \bX_{\mathrm{pred}}
               \big\|^2$

        \Statex $\triangleright$ Update
        \State $\theta \leftarrow \theta - \lambda\,\nabla_\theta
               (\mathcal{L}_{\bm{f},\bm{g}} + \mathcal{L}_{\mathrm{D}})$
        \State $\hat\theta \leftarrow \beta\hat\theta + (1{-}\beta)\,\theta$

        \Until{$\theta$ converges}
    \end{algorithmic}
\end{algorithm}

\subsubsection{General training method}

By Theorems \ref{thm:objective} and \ref{thm:properties_of_polynomial}, this parameterization trained according to \eqref{eq:objective} converges to the It\^o map as $N \rightarrow \infty$. In practice, as the variance of the coefficient terms decay with $1/n$, we can obtain a sufficient approximation using a finite number of coefficients.

Note that $\mathcal{L}_{\bm f, \bm g}$ provides a training signal for $s = t$ (\ie $\bm f^\theta_{t, t}, \bm g^\theta_{t, t}$), but $\mathcal{L}_\text{D}$ evaluates the model at $t > s$. This forces the network to generalize from $s = t$ to $t > s$ via continuity. To alleviate this, we instead match a small Euler-Maruyama step (with $t>s$) of the ground truth SDE rather than matching coefficients. This can be shown to result in a weighted coefficient matching objective (see Lemma \ref{appsubsec:em-objective}).

The general training algorithm is given in \cref{alg:ssfm}. This requires a ground truth for $\bm f$ and $\bm g$ which can be constructed based on the task; for diffusion this is obtained by the reverse SDE with $\bm f(t, \bX_t)$ derived from the conditional score -- resulting in a simulation-free objective (see \cref{appsubsec:diffusion-sde}).

\section{Experiments}
\label{sec:experiments}
We consider three experiments that demonstrate the properties and performance of the SSFM model. Firstly, we ablate the algorithmic properties on a non-linear SDE and verify the effectiveness of the polynomial approximation to Brownian motion.
Second, we apply the model to CIFAR-10 and CelebA-64 image generation and show that SSFMs outperform previous deterministic and stochastic flow maps.
Third, we consider generation of equilibrium molecular conformations on the Alanine-Dipeptide dataset, where the SSFM model is capable of generating accurate samples in as few as two network evaluations. 

\subsection{Non-linear SDE}
We first investigate the performance of the SSFM model on a toy system: a non-linear drift, additive-noise SDE, given by
\begin{equation}
    \label{eq:nonlinear-sde}
    \rmd \bX_t = [\bX_t - \bX_t^3]\; \rmd t + \sqrt{\beta_t}\; \rmd \bW_t,
\end{equation}
where $\beta_t$ is a linear interpolation between $\beta_\text{min}=0.1$ and $\beta_\text{max}=20$. This system is intended to mimic the variance preserving reverse diffusion SDE. The SSFM is learned via \cref{alg:ssfm} with the ground truth constructed from \eqref{eq:nonlinear-sde}.

\begin{wrapfigure}{r}{0.5\textwidth}
    \vspace{-3em}
    \centering
    \includegraphics[width=\linewidth]{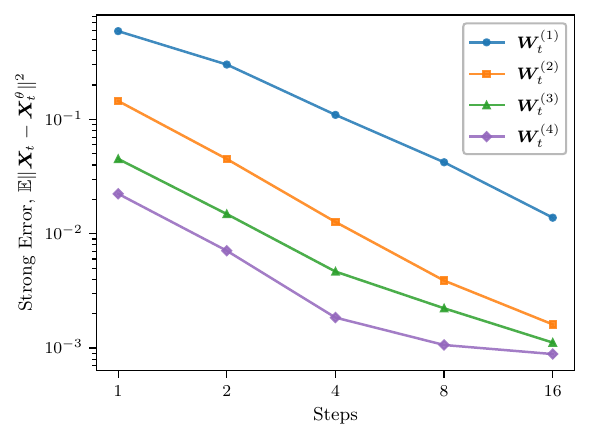}
    \captionof{figure}{Strong error of the SSFM as a function of the polynomial degree.}
    \label{fig:coarse_errors}
    \vspace{-1em}
\end{wrapfigure}

In \cref{fig:coarse_errors}, we ablate the SSFM accuracy as a function of the polynomial degree. As the number of coefficients passed to the model increases, the strong convergence error decreases; this is most pronounced at larger step sizes. It can also been seen that the accuracy gained by each additional coefficient added diminishes, but is far from saturated at $N=4$.

In \cref{fig:poly_brownian}, we show the underlying 4-th degree polynomial expansion of Brownian motion and the learned flow map evaluated at 16-steps. We see that the SSFM is capable of accurately coarsening the SDE dynamics. We include additional plots for this system in \cref{appsubsec:non_linear}.

\begin{figure}
    \centering
    \includegraphics[width=0.85\linewidth]{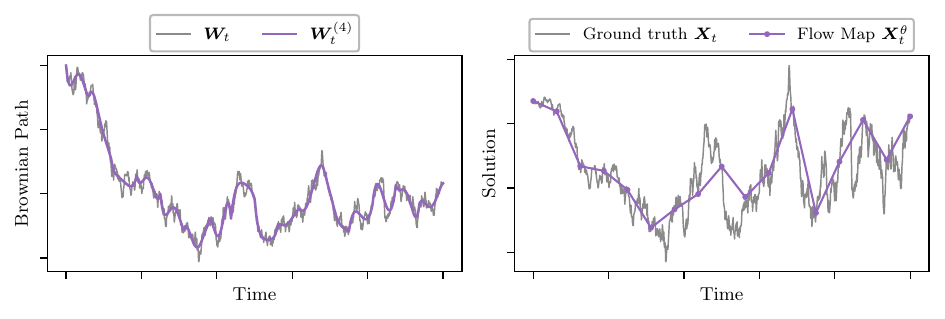}
    \caption{\emph{Left}: Ground-truth Brownian path $\bW_t$ and 4-th degree polynomial approximation ${\bW}_t^{(4)}$ over 16 intervals. \emph{Right}: Ground-truth SDE solution driven by $\bW_t$ and learned 16-step flow map driven by ${\bW}_t^{(4)}$.}
    \label{fig:poly_brownian}
\end{figure}

\subsection{Image generation}
We demonstrate the strong stochastic flow map construction for image generation on the CIFAR-10 and CelebA-64 datasets \parencite{krizhevsky2009learning, liu2015faceattributes}. The training procedure follows \cref{alg:ssfm}, where the ground truth SDE is obtained via the reverse time variance preserving diffusion SDE \parencite{song2021scorebased}. See \cref{appsubsec:diffusion-sde} for a description of this SDE. The $\bm f^\theta, \bm g^\theta$ networks are independently parameterized with the EDM2 architecture \parencite{karras2024analyzing}. The number of polynomial coefficients was chosen to be $N=3$. 

\begin{table}
    \centering
    \small
    \caption{FID ($\downarrow$) on CIFAR-10 and CelebA-64 across different NFE values. We provide deterministic flow maps as a reference, but compare to other stochastic methods.}
    \label{tab:cifar10_celeba64_fid}
    \begin{tabular}{clcccc}
        \toprule
        & & \multicolumn{4}{c}{NFE} \\
        \cmidrule(lr){3-6}
        Dataset & Method & 2 & 4 & 8 & 16 \\
        \midrule

        \multirow{10}{*}{\rotatebox[origin=c]{90}{\centering CIFAR-10}}
        & \textcolor{gray}{\textit{Deterministic flows}} & & & & \\
        & \textcolor{gray}{\hspace{1em} Consistency training \parencite{song2023consistency}} & \textcolor{gray}{5.83} & \textcolor{gray}{---} & \textcolor{gray}{---} & \textcolor{gray}{---} \\
        & \textcolor{gray}{\hspace{1em} Flow map (LSD) \parencite{boffi2025build}}            & \textcolor{gray}{4.37} & \textcolor{gray}{3.34} & \textcolor{gray}{3.33} & \textcolor{gray}{3.57} \\
        & \textcolor{gray}{\hspace{1em} Flow map (PSD-M) \parencite{boffi2025build}}          & \textcolor{gray}{8.43} & \textcolor{gray}{5.96} & \textcolor{gray}{5.07} & \textcolor{gray}{4.64} \\
        & \textcolor{gray}{\hspace{1em} Flow map (PSD-U) \parencite{boffi2025build}}          & \textcolor{gray}{7.95} & \textcolor{gray}{6.03} & \textcolor{gray}{5.32} & \textcolor{gray}{5.16} \\
        & \textcolor{gray}{\hspace{1em} Flow map \parencite{holderrieth2026diamond}}           & \textcolor{gray}{4.60} & \textcolor{gray}{4.18} & \textcolor{gray}{4.88} & \textcolor{gray}{---} \\
        \cmidrule(lr){2-6}
        & \textit{Weak stochastic flows} & & & & \\
        & \hspace{1em} GLASS \parencite{holderrieth2026glass}               & 157.55 & 39.47 & 11.60 & --- \\
        & \hspace{1em} Diamond Map \parencite{holderrieth2026diamond}       & 5.80 & 5.80 & 6.73 & --- \\
        & \textit{Strong stochastic flows} & & & & \\
        & \hspace{1em} SSFM (Ours)                                          & \textbf{4.93} & \textbf{3.49} & \textbf{3.29} & \textbf{3.35} \\

        \midrule

        \multirow{10}{*}{\rotatebox[origin=c]{90}{\centering CelebA-64}}
        & \textcolor{gray}{\textit{Deterministic flows}} & & & & \\
        & \textcolor{gray}{\hspace{1em} Flow map (LSD) \parencite{boffi2025build}}            & \textcolor{gray}{5.74} & \textcolor{gray}{3.18} & \textcolor{gray}{2.18} & \textcolor{gray}{1.96} \\
        & \textcolor{gray}{\hspace{1em} Flow map (PSD-M) \parencite{boffi2025build}}          & \textcolor{gray}{11.75} & \textcolor{gray}{7.89} & \textcolor{gray}{6.06} & \textcolor{gray}{5.09} \\
        & \textcolor{gray}{\hspace{1em} Flow map (PSD-U) \parencite{boffi2025build}}          & \textcolor{gray}{11.02} & \textcolor{gray}{7.47} & \textcolor{gray}{6.00} & \textcolor{gray}{5.63} \\
        & \textcolor{gray}{\hspace{1em} Flow map \parencite{holderrieth2026diamond}}           & \textcolor{gray}{4.05} & \textcolor{gray}{3.08} & \textcolor{gray}{3.15} & \textcolor{gray}{---} \\
        \cmidrule(lr){2-6}
        & \textit{Weak stochastic flows} & & & & \\
        & \hspace{1em} GLASS \parencite{holderrieth2026glass}               & 95.25 & 51.80 & 26.33 & --- \\
        & \hspace{1em} Diamond Map \parencite{holderrieth2026diamond}       & 9.16 & 6.74 & 5.78 & --- \\
        & \textit{Strong stochastic flows} & & & & \\
        & \hspace{1em} SSFM (Ours)                                          & \textbf{5.65} & \textbf{3.89} & \textbf{3.60} & \textbf{3.79} \\

        \bottomrule
    \end{tabular}
\end{table}

\begin{wrapfigure}{r}{0.5\textwidth}
    \vspace{-1em}
    \centering
    \includegraphics[width=\linewidth]{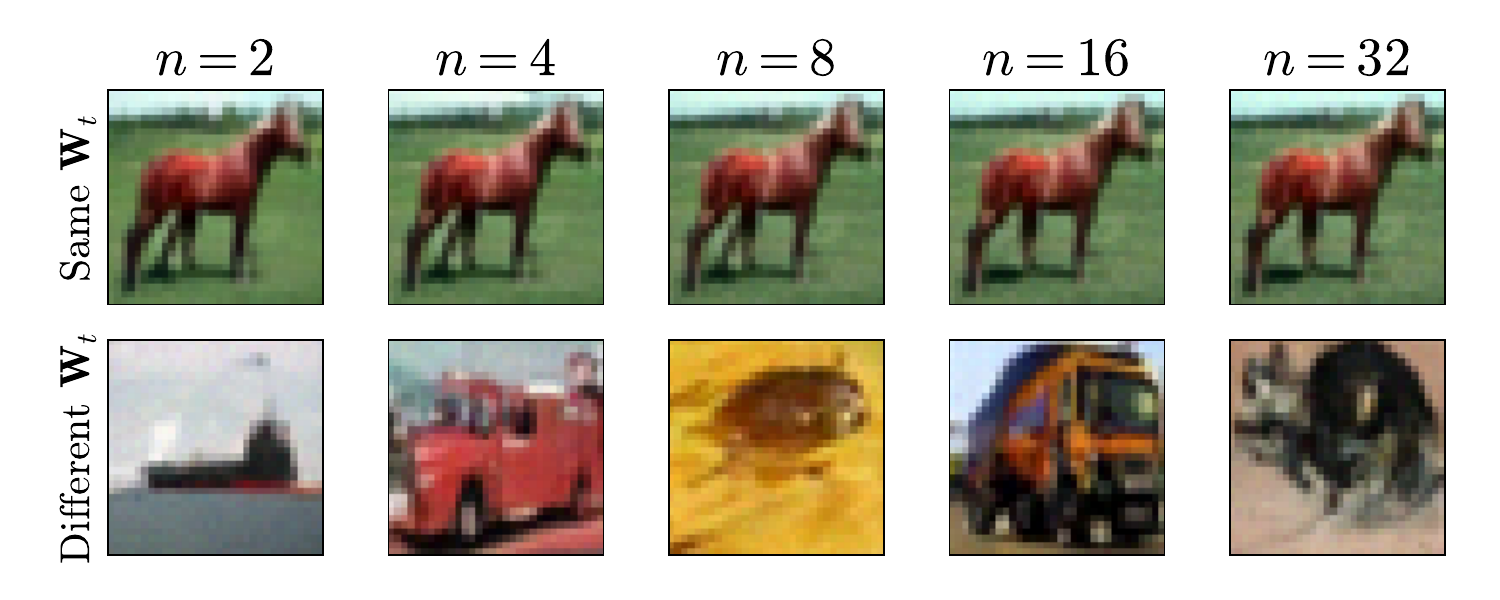}
    \captionof{figure}{\emph{Top}: SSFM with fixed $\bX_0$ and same $\bW_t$ across step counts. \emph{Bottom}: SSFM with fixed $\bX_0$, different $\bW_t$ across step counts.}
    \label{fig:cifar-plot}
\end{wrapfigure}

The results can be seen in \cref{tab:cifar10_celeba64_fid}. The strong stochastic flow map models outperform the weak formulations across all step sizes considered. Additionally, under the strong construction we find that stochastic flow maps become competitive with deterministic flow maps and often outperform.

We verify the strong convergence property of the SSFM in \cref{fig:cifar-plot}. Here, we see that for a fixed initial condition $\bX_0$ and the same Brownian path $\bW_t$, the resulting sample is the same for all step counts. Further, the stochasticity of the flow map is retained; for the same $\bX_0$ with different $\bW_t$, we obtain different generated samples.

\subsection{Molecular systems}
We demonstrate the strong stochastic flow map on the molecular system Alanine Dipeptide (ALDP) \parencite{plainer2026consistent} and all-atom Chignolin. See \cref{appsubsec:molecules} for full details. We compare SSFM to regular diffusion baselines established in \parencite{plainer2026consistent} for ALDP. For all-atom Chignolin, we compare SSFM to a \textit{de novo} denoising diffusion model with comparable parameter count. The results can be seen in \cref{tab:aldp} and \cref{tab:chignolin_w2}. 

\begin{table}
  \centering
  \small
  \caption{PMF squared error ($\downarrow$) and JS divergence ($\downarrow$) on Alanine-Dipeptide across step counts.}\label{tab:aldp}
  \begin{tabular}{lllrrrrrr}
  \toprule
   & & & \multicolumn{6}{c}{NFE} \\
  \cmidrule(lr){4-9}
   & & & 2 & 4 & 10 & 20 & 100 & 1000 \\
  \midrule
  \multirow{6}{*}{PMF Error}
   & Two-for-One & & 16.682          & 15.714          & 11.410          & 3.564          & 0.087          & 0.068          \\
   & Diffusion   & & 16.728          & 15.667          & 11.347          & 3.400          & 0.084          & 0.066          \\
   & Mixture     & & 22.070          & 17.266          & 11.556          & 3.394          & 0.078          & \textbf{0.058} \\
   & Fokker-Planck & & 15.475          & 15.709          & 11.345          & 3.498          & 0.092          & 0.069          \\
   & Both        & & 21.830          & 17.094          & 11.571          & 3.393          & 0.087          & 0.065          \\
   & \textbf{SSFM}        & & \textbf{0.235}  & \textbf{0.168}  & \textbf{0.101}  & \textbf{0.089} & \textbf{0.067} & 0.062          \\
  \midrule
  \multirow{6}{*}{JS ($\times 10^{-2}$)}
   & Two-for-One & & 48.188          & 48.569          & 38.476          & 16.783         & 0.813          & 0.665          \\
   & Diffusion   & & 48.735          & 48.377          & 38.309          & 16.293         & 0.787          & 0.618          \\
   & Mixture     & & 52.432          & 49.500          & 38.709          & 16.279         & 0.770          & 0.609          \\
   & Fokker-Planck & & 47.157          & 48.250          & 38.307          & 16.569         & 0.836          & 0.638          \\
   & Both        & & 52.340          & 49.270          & 38.660          & 16.370         & 0.830          & 0.640          \\
   & \textbf{SSFM}        & & \textbf{1.990}  & \textbf{1.480}  & \textbf{0.950}  & \textbf{0.860} & \textbf{0.640} & \textbf{0.590} \\
  \bottomrule
  \end{tabular}
\end{table}

For ALDP, as expected, the SSFM enables markedly more efficient sampling than the diffusion baselines for step counts $<1000$. As $\text{Steps}\rightarrow 1000$, the methods converge to achieving comparable performance. Notably, the SSFM at $100$ steps is competitive with the diffusion baselines at $1000$ steps, a factor $10$ reduction in NFEs.

For all-atom Chignolin, the SSFM achieves lower error relative to the diffusion baseline across all step counts for the projected backbone torus, and for low step counts in the tICA projection. As expected, the methods converge to achieve comparable performance in the many-step limit. Notably, the SSFM is able to sample accurate configurations in the 1-4 NFE regime. Pooled Ramachandran and tICA plots are shown in \cref{app:experiments}.

\begin{figure}
    \centering
    \includegraphics[width=0.9\linewidth]{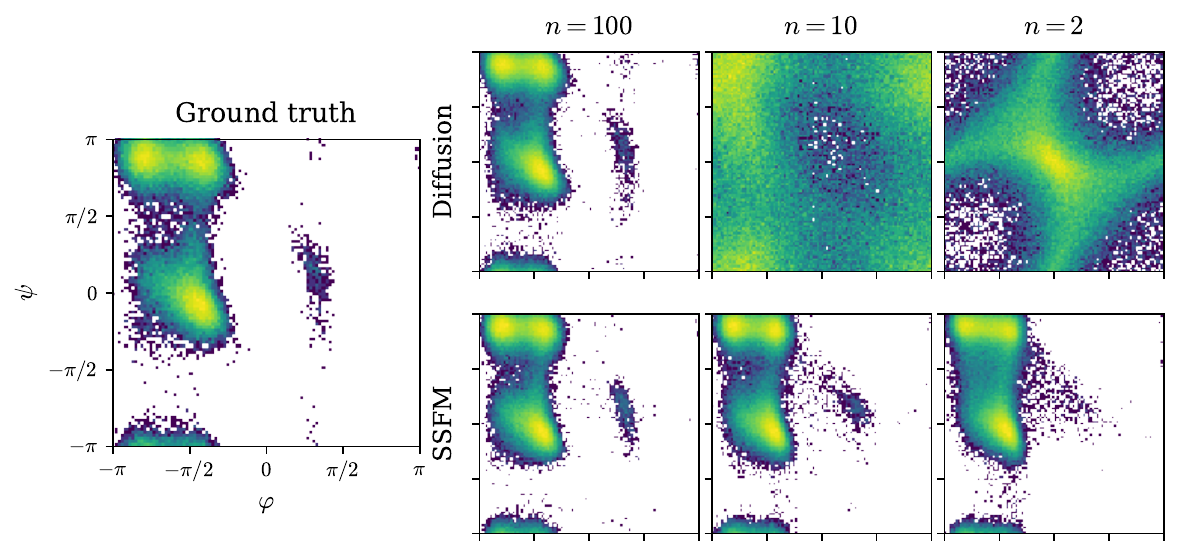}
    \caption{Ramachandran plots for Alanine-Dipeptide showing ground truth data, diffusion mixture baseline and SSFM prediction across step counts.}
    \label{fig:ramas-aldp}
\end{figure}

\begin{table}[t]
  \centering
  \small
  \caption{Chignolin Wasserstein metrics ($\downarrow$) across step counts.}
  \label{tab:chignolin_w2}
  \begin{tabular}{llrrrrrrrr}
  \toprule
   & & \multicolumn{8}{c}{NFE} \\
  \cmidrule(lr){3-10}
   Metric & Method & 1 & 2 & 4 & 8 & 16 & 32 & 64 & 100 \\
  \midrule
   \multirow{2}{*}{$\mathbb{T}$-W$_2$} & Diffusion & 4.737 & 5.882 & 5.587 & 2.716 & 1.793 & 1.714 & 1.699 & 1.735 \\
    & \textbf{SSFM} & \textbf{2.439} & \textbf{2.060} & \textbf{1.824} & \textbf{1.764} & \textbf{1.733} & \textbf{1.707} & \textbf{1.694} & \textbf{1.670} \\
   \midrule
   \multirow{2}{*}{tICA-W$_2$} & Diffusion & 1.976 & 19.147 & 3.236 & 0.712 & \textbf{0.323} & \textbf{0.262} & \textbf{0.266} & 0.330 \\
    & \textbf{SSFM} & \textbf{0.758} & \textbf{0.530} & \textbf{0.414} & \textbf{0.368} & 0.335 & 0.316 & 0.290 & \textbf{0.304} \\
  \bottomrule
  \end{tabular}
\end{table}

\section{Conclusion}
In this work, we have introduced a novel theoretical framework for learning the It\^o map to any additive-noise SDE.
This framework was used to construct a class of flow maps, termed \textsc{Strong Stochastic Flow Maps} (SSFMs), that approximate the It\^o map to compute a strong solution to additive-noise SDEs.
On image generation experiments, this parameterization was shown to outperform previous stochastic flow map models.
When applied to sampling molecular conformations, SSFMs obtained accurate results in as few as 1-2 network evaluations and matched the performance of current diffusion based generative models in the many-step regime. We expect this work to open up further improvements in reward alignment of generative models via pathwise estimators, and to accelerate molecular simulation and generative modeling tasks.

\newpage
\newrefcontext[sorting=nyt]
\printbibliography[heading=bibintoc]

\newpage
\appendix
\crefalias{section}{appendix}
\crefalias{subsection}{appendix}
\crefalias{subsubsection}{appendix}

\startcontents[]
\printcontents[]{l}{1}[3]{{\bfseries \large Appendices}}

\section{Proofs}
\label{app:proofs}
In this section we prove the results provided in the main text. Throughout, we assume that the stochastic differential equations studied satisfy the following assumption.
\begin{assumption}[Unique strong solution]
    \label{ass:unique}
    We assume that the additive-noise stochastic differential equation given by
    \begin{equation*}
        d\bX_t = \bm f(t, \bX_t)\;\rmd t + \bm g(t)\; \rmd \bW_t,
    \end{equation*}
    where $\bm f : \mathbb{R} \times \mathbb{R}^d \rightarrow \mathbb{R}^d, \bm g : \mathbb{R} \rightarrow \mathbb{R}^{d \times w}$ with $\bW_t \in \mathbb{R}^w$ and initial condition $\bX_0$ with finite second moment, has the following properties such that a unique continuous solution $\bX_t$ exists. Let $\bm f, \bm g$ be measurable functions satisfying
    \begin{align*}
        |\bm f(t, \bm x)| + |\bm g(t)| &\leq C(1 + |\bm x|) \\ 
        |\bm f(t, \bm x) - \bm f(t, \bm y)| &\leq D |\bm x - \bm y|
    \end{align*}
    for constants $C, D$. See \textcite{oksendal2003stochastic} for a full discussion on the existence and unique of strong solutions to SDEs.
\end{assumption}

\subsection{Constructing the It\^o map}

\subsubsection{Proof of \texorpdfstring{\cref{lem:tangent}}{Tangent condition}}
\label{proof:tangent}

We first prove a simple lemma that relates the It\^o map to the SDE.

\begin{restatable}[Stochastic Lagrangian equation]{lemma}{lagrangian}
    \label{lem:lagrangian}
    Let $\bm\Psi_{s, t}(\bX_s, \bW_{[s, t]})$ denote the It\^o map for \eqref{eq:sde}. Then,
    \begin{equation}
    \label{eq:lagrangian}
    \rmd \bm \Psi_{s, t}(\bX_s, \bW_{[s, t]}) = \bm f(t, \bm \Psi_{s, t}(\bX_s, \bW_{[s, t]}))\rmd t + \bm g(t)\rmd \bW_t.
    \end{equation}
\end{restatable}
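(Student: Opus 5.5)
The plan is to read \eqref{eq:lagrangian} as a statement about the process $t \mapsto \bm\Psi_{s,t}(\bX_s, \bW_{[s,t]})$ for a fixed start time $s$ and initial condition $\bX_s$. The differential $\rmd$ is taken in the forward variable $t$. The key observation is that, by definition of the It\^o map, $\bm\Psi_{s,t}(\bX_s, \bW_{[s,t]})$ is exactly the value at time $t$ of the unique strong solution of \eqref{eq:sde} started from $\bX_s$ at time $s$ and driven by the given Brownian path. Existence and uniqueness of that solution come from \cref{ass:unique}. So the lemma should reduce to unpacking this definition.

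First I will write the strong solution in integral form on $[s,t]$:
\begin{equation*}
    \bX_t = \bX_s + \int_s^t \bm f(r, \bX_r)\,\rmd r + \int_s^t \bm g(r)\,\rmd \bW_r .
\end{equation*}
The stochastic integral is well defined pathwise, because $\bm g$ is deterministic. One can integrate by parts to get $\bm g(t)\bW_t - \bm g(s)\bW_s - \int_s^t \bW_r\,\rmd\bm g(r)$ whenever $\bm g$ has bounded variation. This is the reason the It\^o map is a genuine function of $(\bX_s,\bW_{[s,t]})$ in the additive-noise case.

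Second, I will substitute $\bX_r = \bm\Psi_{s,r}(\bX_s,\bW_{[s,r]})$ for every $r\in[s,t]$. This substitution requires a consistency check: the It\^o map evaluated at the intermediate time $r$, using only the restricted path $\bW_{[s,r]}$, must coincide with the solution process at time $r$. This follows from adaptedness and pathwise uniqueness, since the solution on $[s,r]$ depends only on the noise on $[s,r]$. The substitution gives
\begin{equation*}
    \bm\Psi_{s,t} = \bX_s + \int_s^t \bm f(r,\bm\Psi_{s,r})\,\rmd r + \int_s^t \bm g(r)\,\rmd\bW_r .
\end{equation*}

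Third, I will pass to differential notation in $t$. By the definition of stochastic differential notation, this integral identity is exactly \eqref{eq:lagrangian}. Continuity of $r\mapsto \bm f(r,\bm\Psi_{s,r})$ in the first integrand is not needed, only measurability and the linear growth bound, so that the Lebesgue integral exists.

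The only real obstacle is the consistency step: showing that $\bm\Psi_{s,r}$ with truncated input agrees with the solution process. I plan to handle it by invoking pathwise uniqueness under the Lipschitz condition of \cref{ass:unique}. Concretely, a Gronwall argument applied to the difference of two candidate solutions on $[s,r]$ driven by the same path shows the difference vanishes.
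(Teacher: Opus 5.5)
Your proposal is correct and follows the paper's proof, which identifies $\bm\Psi_{s,t}(\bX_s,\bW_{[s,t]})$ with $\bX_t$, writes $\rmd\bX_t$ via \eqref{eq:sde}, and substitutes the It\^o map back into the drift. Your integral-form version also makes explicit the consistency step the paper leaves implicit: that $\bm\Psi_{s,r}$ fed the truncated path $\bW_{[s,r]}$ coincides with $\bX_r$, which follows from pathwise uniqueness.
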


\begin{proof}
    Consider the derivative of the stochastic flow map,
    \begin{align*}
        \rmd \bm \Psi_{s, t}(\bX_s, \bW_{[s, t]}) &= \rmd \bX_t, \\
        &= \bm f(t, \bX_t)\rmd t + \bm g(t)\rmd \bW_t, \\
        &= \bm f(t, \bm \Psi_{s, t}(\bX_s, \bW_{[s, t]}))\rmd t + \bm g(t)\rmd \bW_t.
    \end{align*}
\end{proof}

Now, we are able to prove \cref{lem:tangent} which we restate here.

\begin{theorembox}
    \tangent*
\end{theorembox}

\begin{proof}
    By Lemma \ref{lem:lagrangian} we have that the It\^o map satisfies the stochastic Lagrangian equation. Taking the limit as $s\rightarrow t$, then given the continuity of the It\^o map we have
    \begin{align*}
        \lim_{s \rightarrow t} \rmd \bm \Psi_{s, t}(\bX_s, \bW_{[s, t]}) &= \lim_{s \rightarrow t} \bm f(t, \bm \Psi_{s, t}(\bX_s, \bW_{[s, t]}))\rmd t + \bm g(t)\rmd \bW_t \\
        &= \bm f(t, \bm \Psi_{t, t}(\bX_t, \bW_{[t, t]}))\rmd t + \bm g(t)\rmd \bW_t \\
        &= \bm f(t, \bX_t)\rmd t + \bm g(t)\rmd \bW_t
    \end{align*}
\end{proof}

\subsubsection{Proof of \texorpdfstring{\cref{prop:em-sfm}}{Strong stochastic flow map}}
\label{proof:em-sfm}

Next, we prove \cref{prop:em-sfm}.
\begin{theorembox}
    \ssfm*
\end{theorembox}

\begin{proof}
    Consider the application of It\^o's lemma to $\bm \Psi_{s, t}(\bX_s, \bW_{[s, t]})$, given by
    \begin{align*}
        \rmd \bm \Psi_{s, t}(\bX_s, \bW_{[s, t]}) &= \left(\partial_t \bm \Psi_{s, t}(\bX_s, \bW_{[s, t]}) + \frac{1}{2} \partial^2_{\bW_t} \bm \Psi_{s, t}(\bX_s, \bW_{[s, t]})\right)\rmd t + \partial_{\bW_t}\bm \Psi_{s, t}(\bX_s, \bW_{[s, t]}) \rmd \bW_t, \\
        &= \bigg(\bm f_{s, t}(\bX_s, \bW_{[s, t]}) + (t-s)\partial_t \bm f_{s, t}(\bX_s, \bW_{[s, t]}) + (\bW_t - \bW_s)\partial_t \bm g_{s, t}(\bW_{[s, t]}) \\ 
        &+ \frac{1}{2}(t-s)\partial^2_{\bW_t}\bm f_{s, t}(\bX_s, \bW_{[s, t]}) + \frac{1}{2}(\bW_t - \bW_s)\partial^2_{\bW_t}\bm g_{s, t}(\bW_{[s, t]}) + \partial_{\bW_t}\bm g_{s, t}(\bW_{[s, t]})\bigg)\rmd t \\
        &+ \Big( (t-s)\partial_{\bW_t}\bm f_{s, t}(\bX_s, \bW_{[s, t]}) + \bm g_{s, t}(\bW_{[s, t]}) + (\bW_t - \bW_s)\partial_{\bW_t}\bm g_{s, t}(\bW_{[s, t]})\Big)\rmd \bW_t.
    \end{align*}
    Now, taking the limit $s\rightarrow t$, we obtain
    \begin{equation*}
        \lim_{s\rightarrow t} \rmd \bm \Psi_{s, t}(\bX_s, \bW_{[s, t]}) = \Big(\bm f_{t, t}(\bX_t, \bW_{[t, t]}) + \partial_{\bW_t}\bm g_{t, t}(\bW_{[t, t]})\Big) \rmd t + \bm g_{t, t}\rmd \bW_t.
    \end{equation*}
    Finally, using the conditions that $\bm f_{t, t}(\bX_t, \bW_{[t, t]}) = \bm f_{t, t}(\bX_t)$ and $\bm g_{t, t}(\bW_{[t, t]}) = \bm g_{t, t}$, this reduces to
    \begin{equation*}
        \lim_{s\rightarrow t} \rmd \bm \Psi_{s, t}(\bX_s, \bW_{[s, t]}) = \bm f_{t, t}(\bX_t)\rmd t + \bm g_{t, t}\rmd \bW_t.
    \end{equation*}
    To satisfy the tangent condition \eqref{eq:tangent} we must have $\bm f_{t, t}(\bX_t) = \bm f(t, \bX_t)$ and $\bm g_{t, t} = \bm g(t)$. Conversely, substituting $\bm f_{t, t}(\bX_t) = \bm f(t, \bX_t), \bm g_{t, t} = \bm g(t)$ recovers the tangent condition.
\end{proof}

\subsubsection{Proof of \texorpdfstring{\cref{prop:semigroup}}{Semigroup condition}}
\label{proof:semigroup}

Now onto the proof that the strong stochastic flow map attains the It\^o map when satisfying the tangent and semigroup conditions.
\begin{theorembox}
    \semigroup*
\end{theorembox}

\begin{proof}
    First observe that the It\^o map satisfies the semigroup condition,
    \begin{align*}
        \bm \Psi_{s, t}(\bX_s, \bW_{[s, t]}) &= \bm \Psi_{u, t}(\bm \Psi_{s, u}(\bX_s, \bW_{[s, u]}), \bW_{[u, t]}) \\
        &= \bm \Psi_{u, t}(\bX_u, \bW_{[u, t]}) \\
        &= \bX_t
    \end{align*}
    Next, we want to show the inverse implication. Let $\bm \Psi_{s, t}(\bX_s, \bW_{[s, t]})$ denote the strong stochastic flow map satisfying \eqref{eq:tangent}, \eqref{eq:em-sfm} and \eqref{eq:semigroup}. We show that this map is the It\^o map for \eqref{eq:sde}. This follows by considering the semigroup condition,
    \begin{align*}
        \bm \Psi_{s, t+h}(\bX_s, \bW_{[s, t+h]}) &= \bm \Psi_{t, t+h}(\bm \Psi_{s, t}(\bX_s, \bW_{[s, t]}), \bW_{[t, t+h]}) \\
        &= \bm \Psi_{s, t}(\bX_s, \bW_{[s, t]}) + h\bm f_{t, t+h}(\bm \Psi_{s, t}(\bX_s, \bW_{[s, t]}), \bW_{[t, t+h]}) \\
        &+ (\bW_{t+h} - \bW_t)\bm g_{t, t+h}(\bW_{[t, t+h]}) \\
    \end{align*}
    Define $\hat{\bX}_t = \bm \Psi_{s, t}(\bX_s, \bW_{[s, t]})$ and re-arrange to get
    \begin{equation*}
        \hat{\bX}_{t+h} - \hat{\bX}_t = h\bm f_{t, t+h}(\hat{\bX}_t, \bW_{[t, t+h]}) + (\bW_{t+h} - \bW_t)\bm g_{t, t+h}(\bW_{[t, t+h]}).
    \end{equation*}
    Next consider taking the following sum over a partition $t = u_0 < u_1 < \dots < u_n = t+h$, 
    \begin{equation*}
        \sum_{i} \hat{\bX}_{u_{i+1}} - \hat{\bX}_{u_i} = \sum_{i} (u_{i+1} - u_i) \bm f_{u_i, u_{i+1}}(\hat{\bX}_{u_i}, \bW_{[u_i, u_{i+1}]}) + \sum_{i} (\bW_{u_{i+1}} - \bW_{u_i}) \bm g_{u_i, u_{i+1}}(\bW_{[u_i, u_{i+1}]})
    \end{equation*}
    The LHS of this sum telescopes to $\hat{\bX}_{t+h} - \hat{\bX}_t$. We now take the limit of this partition with $\max_i (u_{i+1} -u_i)\rightarrow 0$. 
    
    Since we have $\bm f_{u, u}(\hat{\bX}_u, \bW_{[u, u]}) = \bm f_{u, u}(\hat{\bX}_u)$ and $\bm g_{u, u}(\bW_{[u, u]}) = \bm g_{u, u}$ by the independence assumption on \eqref{eq:em-sfm}, consider the following equivalent expression obtained by adding and subtracting the diagonal coefficients,
    \begin{align*}
        \hat{\bX}_{t+h} - \hat{\bX}_t &= \sum_{i} (u_{i+1} - u_i) \bm f_{u_i, u_i}(\hat{\bX}_{u_i}) + \sum_{i} (u_{i+1} - u_i) \Big( \bm f_{u_i, u_{i+1}}(\hat{\bX}_{u_i}, \bW_{[u_i, u_{i+1}]}) - \bm f_{u_i, u_i}(\hat{\bX}_{u_i})\Big) \\
        &+ \sum_{i} (\bW_{u_{i+1}} - \bW_{u_i}) \bm g_{u_i, u_i} + \sum_{i} (\bW_{u_{i+1}} - \bW_{u_i}) \Big( \bm g_{u_i, u_{i+1}}(\bW_{[u_i, u_{i+1}]}) - \bm g_{u_i, u_i}\Big)
    \end{align*}
    The two difference terms (in both $\bm f$ and $\bm g$) decay to zero as $\max_i (u_{i+1} -u_i)\rightarrow 0$. This follows from the Lipschitz-in-time assumption on $\bm f, \bm g$. Specifically, for the $\bm f$ residual we have
    \begin{align*}
        \Big|\sum_{i} (u_{i+1} - u_i) \Big( \bm f_{u_i, u_{i+1}}(\hat{\bX}_{u_i}, \bW_{[u_i, u_{i+1}]}) - \bm f_{u_i, u_i}(\hat{\bX}_{u_i})\Big) \Big| &\leq  L_f \sum_{i} |u_{i+1} - u_i||u_{i+1} - u_i|, \\
        &\leq L_f \max_i (|u_{i+1} - u_i|) \sum_i |u_{i+1} - u_i|, \\
        &\leq L_f h \max_i (|u_{i+1} - u_i|),
    \end{align*}
    which tends to zero as $\max_i (u_{i+1} -u_i)\rightarrow 0$. For the $\bm g$ residual, we have
    \begin{align*}
        \Big| \sum_{i} (\bW_{u_{i+1}} - \bW_{u_i}) \Big( \bm g_{u_i, u_{i+1}}(\bW_{[u_i, u_{i+1}]}) - \bm g_{u_i, u_i}\Big) \Big| &\leq L_g \sum_i | (\bW_{u_{i+1}} - \bW_{u_i})| |u_{i+1} - u_i| \\
        &\leq L_g \max_i (|\bW_{u_{i+1}} - \bW_{u_i}|) \sum_i |u_{i+1} - u_i|, \\
        &= hL_g \max_i (|\bW_{u_{i+1}} - \bW_{u_i}|),
    \end{align*}
    which tends to zero as $\max_i (u_{i+1} -u_i)\rightarrow 0$ by the uniform continuity of $\bW_t$. We are left with
    \begin{equation*}
        \hat{\bX}_{t+h} - \hat{\bX}_t = \lim_{\max_i (u_{i+1} -u_i)\rightarrow 0}\sum_{i} (u_{i+1} - u_i) \bm f_{u_i, u_i}(\hat{\bX}_{u_i}) + \sum_{i} (\bW_{u_{i+1}} - \bW_{u_i}) \bm g_{u_i, u_i}.
    \end{equation*}
    
    Therefore, both Riemann and It\^o (Stratonovich, equivalently) converge \parencite{oksendal2003stochastic} to give
    \begin{align*}
        \hat{\bX}_{t+h} &= \hat{\bX}_t + \int_t^{t+h}\bm f_{u, u}(\hat{\bX}_u)\rmd u + \int_t^{t+h}\bm g_{u, u}\rmd \bW_u \\
        &= \hat{\bX}_t + \int_t^{t+h}\bm f(u, \hat{\bX}_u)\rmd u + \int_t^{t+h}\bm g(u)\rmd \bW_u,
    \end{align*}
    where $\bm f_{t, t}(\bX_t) = \bm f(t, \bX_t)$ and $\bm g_{t, t} = \bm g(t)$ follows from Proposition \ref{prop:em-sfm}. Since this holds for any $t\geq s$ and $h\geq 0$, $\hat{\bX}_t$ is a strong solution to the SDE with coefficients $\bm f(t, \bX_t), \bm g(t)$. By uniqueness, this implies that $\bm \Psi_{s, t}(\bX_s, \bW_{[s, t]})$ is the It\^o map for \eqref{eq:sde}.
\end{proof}

\subsubsection{Proof of \cref{thm:objective}}
Given the results above, we are able to prove \cref{thm:objective}.

\begin{theorembox}
    \objective*
\end{theorembox}

\begin{proof}
    We have that for any $\hat{\bm v}$,
    \begin{align*}
        \mathcal{L}_{\bm f, \bm g}(\hat{\bm v}_{t, t}) &\geq \mathcal{L}_{\bm f, \bm g}(\bm v_{t, t}) = 0, \\
        \mathcal{L}_\text{D}(\hat{\bm v}) &\geq 0.
    \end{align*}
    This follows since $\mathcal{L}_{\bm f, \bm g}$ is convex in $\hat{\bm v}_{t, t}$ with unique global minimizer $\bm v_{t, t}$. Note that the It\^o map satisfies $\mathcal{L}_\text{D}(\bm v) = 0$. This implies that for the It\^o map we obtain the minimum of the objective, given by
    \begin{equation*}
        \mathcal{L}_\text{SD}(\bm v) = 0.
    \end{equation*}
    To show that the minimizer is unique, consider any $\hat{\bm v}$ (and associated $\hat{\bm \Psi}$) that obtains the minimum,
    \begin{equation*}
        \mathcal{L}_{SD}(\hat{\bm v}) = 0
    \end{equation*}
    Then we must have
    \begin{align*}
        \hat{\bm v}_{t, t} &= \bm v_{t, t} \\
        \mathcal{L}_\text{D}(\hat{\bm v}) &= 0.
    \end{align*}
    By Propositions \ref{prop:em-sfm} (since $\hat{\bm v}_{t, t} = \bm v_{t, t}$, the tangent condition holds) and \ref{prop:semigroup} (since $\mathcal{L}_\text{D}=0$, the semigroup condition holds), this implies $\hat{\bm \Psi}$ is the It\^o map for \eqref{eq:sde}.
\end{proof}

\subsection{Constructing the Brownian motion}
\subsubsection{Primer on rough path theory}
A (step-2) rough path is an element of \(G^2(\mathbb{R}^d) = \mathbb{R}^d\oplus\mathbb{R}^{d\times d}\). We define a metric on \(\mathbb{X}=(\mathbb{X}^{(1)},\mathbb{X}^{(2)})\in G^2(\mathbb{R}^d)\) by,
\[
    d(\mathbb{X}_s,\mathbb{X}_t)\coloneq |\mathbb{X}_{s,t}^{(1)}|+|\mathbb{X}^{(2)}_{s,t}|^\frac 12,
\]
where we have used \(|\cdot |\) to indicate the \textit{Euclidean norm on \(\mathbb{R}^{\otimes k}\)} for the appropriate choice of \(k\in\{1,2\}\). To measure the ``difference'' between two rough paths \(\mathbb{X},\mathbb{Y}\in G^2(\mathbb{R}^2)\) we define a pathwise metric, for each \(\alpha\in[0,\frac 12)\), the \(\alpha\)-H\"older metric by,
\[
    d_{\alpha\text{-H\"ol}}(\mathbb{X},\mathbb{Y}) \coloneq \sup_{0\leq s<t\leq 1}\Big (\frac{|\mathbb{X}_{s,t}^{(1)}-\mathbb{Y}_{s,t}^{(1)}| + |\mathbb{X}_{s,t}^{(2)}-\mathbb{Y}_{s,t}^{(2)}|^{\frac 12}}{|t-s|^\alpha}\Big ).
\]
To avoid ambiguity of indices being used to represent both the \textit{level} of a rough path, and in  \cref{def:poly_approx_brownian} the order of our approximation, we will sometimes use the projection operator, \(\pi_k:G^2(\mathbb{R}^d)\rightarrow (\mathbb{R}^d)^{\otimes k}\) for \(k=1,2\), to explicitly distinguish the first and second levels of the rough path respectively, i.e. \(\pi_k(\mathbb{X})=\mathbb{X}^{(k)}\). 

For both Brownian motion \(\boldsymbol{W}_t\) and the polynomial approximation \(\boldsymbol{W}_t^{(N)}\) from \cref{def:poly_approx_brownian}, we lift the paths to rough paths \(\mathbb{W}\) and \(\mathbb{W}^{(N)}\) respectively by defining the canonical Stratonovich lifts,
\[
\begin{aligned}
    \pi_2(\mathbb{W}_{s,t})&:=\int_s^t \boldsymbol{W}_{s,u}\otimes\,\text{d}\boldsymbol{W}_u,\\
    \pi_2(\mathbb{W}_{s,t}^{(N)})&:=\int_s^t \boldsymbol{W}_{s,u}^{(N)}\otimes\,\text{d}\boldsymbol{W}_u^{(N)}.
\end{aligned}
\]

\subsubsection{Proof of \cref{thm:properties_of_polynomial}}
\begin{theorembox}
    \bigpoly*
\end{theorembox}
\begin{proof}
    Property \ref{item:poly_approx_property2} is simply an application of It\^o's isometry, using the fact that the \(n^\text{th}\) (shifted) Legendre polynomial has a normalisation constant of \(\frac{1}{2n+1}\),
    \[
        \boldsymbol{I}_{s,t}^{(n)}=\int_s^t\widetilde{P}_n\Big (\frac{r-s}{t-s}\Big )\,\text{d}\boldsymbol{W}_r\sim \mathcal{N}\big (0,K(s,t)\big ),
    \]
    where,
    \[
    \begin{aligned}
        K(s,t) &= \mathbb{E}\Big [\int_s^t\widetilde{P}_n\Big (\frac{r-s}{t-s}\Big )\,\text{d}\boldsymbol{W}_r\otimes \int_s^t\widetilde{P}_n\Big (\frac{r-s}{t-s}\Big )\,\text{d}\boldsymbol{W}_r\Big ]\\
        &=\boldsymbol{I}\int_s^t\widetilde{P}_n\Big (\frac{r-s}{t-s}\Big )^2\,\text{d}r\\
        &=\frac{t-s}{2n+1}\boldsymbol{I}.
    \end{aligned}
    \]
    Property \ref{item:poly_approx_property1} and \ref{item:poly_approx_property3} require slightly more work, so are proven individually as \cref{prop:polynomial-convergence} and \cref{prop:chen} respectively.
\end{proof}
The following lemma will be required to establish rough-path convergence of the polynomial approximation \(\boldsymbol{W}_t\).
\begin{restatable}[Uniform bounds on \(\mathbb{W}\)]{lemma}{unif_bounds_W}\label{lem:unif_bounds_W}
    For \(\alpha\in[0,\frac 12)\), there exists \(C_\alpha\in L^2(\mathbb{P})\) such that for all \([u,v]\subseteq[s,t]\),
    \begin{equation}\label{eq:unif_bounds_w}
        \|\mathbb{W}_{u,v}\|\leq C_\alpha|v-u|^{\alpha}
    \end{equation}
    where \(\|\cdot \|:G^2(\mathbb{R}^d)\rightarrow \mathbb{R}\) denotes the norm,
    \[
        \|\mathbb{X}_{u,v}\|=\max\Big (\big \|\mathbb{X}_{u,v}^{(1)}\big \|_{L^2}, \big \|\mathbb{X}_{u,v}^{(2)}\big \|_{L^2}^{\frac 12}\Big ).
    \]
\end{restatable}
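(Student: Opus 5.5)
The plan is to obtain \eqref{eq:unif_bounds_w} from a Kolmogorov-type continuity criterion for rough paths, applied to the Stratonovich lift $\mathbb{W}$. The moment inputs come from Brownian scaling, and the chaining uses Chen's relation. I read the statement pathwise: the constant $C_\alpha$ is a random variable, and $|\cdot|$ is the Euclidean norm evaluated $\omega$-by-$\omega$. The $L^2$ in the definition of $\|\cdot\|$ then only enters through the integrability of $C_\alpha$.

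If instead one reads $\|\cdot\|$ literally, with deterministic $L^2(\mathbb{P})$ norms, the claim follows from the first step alone. We would have $\|\mathbb{W}^{(1)}_{u,v}\|_{L^2}=\sqrt{d}\,|v-u|^{1/2}$ and $\|\mathbb{W}^{(2)}_{u,v}\|_{L^2}^{1/2}=c\,|v-u|^{1/2}$. Since $|v-u|\le 1$, both are bounded by $C|v-u|^{\alpha}$ for every $\alpha\le\frac12$.

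\emph{Step 1 (moment bounds).} By Brownian scaling and stationarity of increments, $\mathbb{W}^{(1)}_{u,v}\overset{d}{=}|v-u|^{1/2}\mathbb{W}^{(1)}_{0,1}$. Likewise $\mathbb{W}^{(2)}_{u,v}=\int_u^v \bW_{u,r}\otimes\circ\,\rmd\bW_r\overset{d}{=}|v-u|\,\mathbb{W}^{(2)}_{0,1}$. The variable $\mathbb{W}^{(2)}_{0,1}$ lies in the sum of the zeroth and second Wiener chaoses (the It\^o integral plus $\tfrac12\bm I$), so it has moments of every order; this also follows from Gaussian hypercontractivity. Hence for every $q\ge 2$ there is $c_q<\infty$ with
\[
\mathbb{E}\big|\mathbb{W}^{(1)}_{u,v}\big|^q\le c_q|v-u|^{q/2},\qquad \mathbb{E}\big|\mathbb{W}^{(2)}_{u,v}\big|^{q/2}\le c_q|v-u|^{q/2}.
\]

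\emph{Step 2 (Kolmogorov for rough paths).} I will apply the rough-path Kolmogorov criterion (e.g.\ Friz--Hairer, Thm.~3.1) with $\beta=\frac12$. Its conclusion is that for every $\alpha<\beta-\frac1q$ the $\alpha$-H\"older constants $\sup_{u<v}|\mathbb{W}^{(1)}_{u,v}|/|v-u|^\alpha$ and $\sup_{u<v}|\mathbb{W}^{(2)}_{u,v}|/|v-u|^{2\alpha}$ lie in $L^q$ and $L^{q/2}$ respectively. Given $\alpha<\frac12$, I choose $q\ge4$ large enough that $\alpha<\frac12-\frac1q$. I then set
\[
C_\alpha=\sup_{u<v}\frac{|\mathbb{W}^{(1)}_{u,v}|}{|v-u|^\alpha}+\Big(\sup_{u<v}\frac{|\mathbb{W}^{(2)}_{u,v}|}{|v-u|^{2\alpha}}\Big)^{1/2}.
\]
This lies in $L^q\subset L^2$, since $q\ge 2$ and $q/2\ge 2$. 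It gives $\|\mathbb{W}_{u,v}\|\le C_\alpha|v-u|^\alpha$ simultaneously for all $[u,v]\subseteq[s,t]$.

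\emph{Main obstacle.} The delicate part is the second level in the chaining argument, since $\mathbb{W}^{(2)}$ is not additive. On dyadic intervals, Chen's relation
\[
\mathbb{W}^{(2)}_{u,v}=\mathbb{W}^{(2)}_{u,w}+\mathbb{W}^{(2)}_{w,v}+\mathbb{W}^{(1)}_{u,w}\otimes\mathbb{W}^{(1)}_{w,v}
\]
produces a cross term. I would bound it by the product of the first-level dyadic maxima. Each such maximum is controlled in $L^q$ by summing $2^{n}\cdot 2^{-nq/2}$ over levels, using a union bound. Those sums are finite precisely because $\alpha<\frac12-\frac1q$. If invoking the cited theorem is acceptable, Step 2 reduces to checking its hypotheses, which Step 1 supplies.
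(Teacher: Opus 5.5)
Your pathwise reading of $\|\cdot\|$ is the one the paper's proof uses. Your argument is correct, but it rests on a different key lemma.

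The paper does no dyadic chaining. It applies the Garsia--Rodemich--Rumsey lemma (in the metric-space form of Friz--Victoir, Prop.~A.8) directly to $\mathbb{W}$, viewed as a single $G^2$-valued path with the homogeneous distance $|\mathbb{X}^{(1)}_{u,v}|+|\mathbb{X}^{(2)}_{u,v}|^{1/2}$. It takes $p(r)=r^{1/2}$ and $\Psi(x)=x^q$ with $q\ge 4/(1-2\alpha)$. This gives the explicit constant $C_\alpha=8q(4B)^{1/q}/(q-4)$, where $B=\int_0^1\int_0^1\|\mathbb{W}_{u,v}\|^q|v-u|^{-q/2}\,\rmd u\,\rmd v$.

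That route controls both levels with one real-variable inequality applied $\omega$ by $\omega$. Chen's relation is absorbed into the triangle inequality on the group, so no cross term appears. (Strictly, this needs the Carnot--Carath\'eodory metric; the homogeneous quasi-norm is only equivalent to it.)

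Your route works level by level with the inhomogeneous H\"older constants. It needs no metric on $G^2$ and uses Chen's relation algebraically in the chaining. Citing Friz--Hairer, Thm.~3.1, reduces it to checking hypotheses, and it gives $K_\alpha\in L^q$, $\mathbb{K}_\alpha\in L^{q/2}$ directly. One correction to your chaining sketch: the union-bound series should carry the weight $2^{n\alpha q}$, i.e.\ $\sum_n 2^{n\alpha q}2^{n}2^{-nq/2}$. That is the series that converges exactly when $\alpha<\frac12-\frac1q$.

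Your Step~1 is also the input both routes need. In the GRR argument, $C_\alpha\in L^2(\mathbb{P})$ follows from $\mathbb{E}[C_\alpha^2]\lesssim\mathbb{E}[B^{2/q}]\le(\mathbb{E}B)^{2/q}$ together with $\mathbb{E}B<\infty$. By Brownian scaling, $\mathbb{E}B<\infty$ is exactly your moment bound at some order $q>4$.

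The paper's displayed check instead bounds $\mathbb{E}[B^{2/q}]$ by an integral of second moments. By Lyapunov's inequality on $[0,1]^2$, that inequality actually goes the other way. So it is the higher moments you supply that make this step sound.
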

\begin{proof}
    This is a standard application of the Garsia–Rodemich–Rumsey lemma \parencite[Lemma 1.1]{garsia1970}, which can be generalised to any metric as seen in \parencite[Proposition A.8]{friz2010multidimensional}. Choosing \(p(u)=|u|^{1/2}\), \(\Psi(u)=u^q\) for some \(q\geq \frac{4}{1-2\alpha}\) and \(B=\int_0^1\int_0^1\Psi\big (\frac{\|\mathbb{W}_{u,v}\|}{p(v-u)}\big )\,\text{d}u\,\text{d}v\), we obtain,
    \[
        \|\mathbb{W}_{u,v}\|\leq C_\alpha|v-u|^{1/2 - 2/q}\leq C_\alpha|v-u|^{\alpha},
    \]
    where \(C_\alpha:=\frac{8q(4B)^{1/q}}{q-4}\). To verify \(C_\alpha\in L^2(\mathbb{P})\) we use \(\|\mathbb{W}_{u,v}\|_{L^2(\mathbb{P})}=|v-u|^{1/2}\)
    \[
    \begin{aligned}
        \mathbb{E}[C_\alpha^2]&\lesssim\mathbb{E}[B^{2/q}]\\
        &\leq \int_0^1\int_0^1\frac{\|\mathbb{W}_{u,v}\|^2_{L^2(\mathbb{P})}}{|v-u|^{2\alpha}}\,\text{d}u\,\text{d}v\\
        &=\int_0^1\int_0^1|v-u|^{1-2\alpha}\,\text{d}u\,\text{d}v\\
        &<\infty.
    \end{aligned}
    \]
\end{proof}
\begin{restatable}[Rough path convergence of the polynomial approximation]{proposition}{poly_convergence}
    \label{prop:polynomial-convergence}
    The polynomial expansion of Brownian motion given in \cref{def:poly_approx_brownian} converges in the rough-path sense, i.e. for any \(\alpha\in[0,\frac 12)\),
    \[
        d_{\alpha-\text{H\"ol}}(\mathbb{W},\mathbb{W}^N)\overset{\text{a.s.}}\longrightarrow 0,
    \]
    as \(N\rightarrow \infty\).
\end{restatable}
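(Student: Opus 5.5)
Fix $[s,t]=[0,1]$ without loss of generality, and write $\bm W^{(N)}$ for the path $u\mapsto \bm W^{(N)}_{0,u}$. The key structural fact is that $\bm W^{(N)}$ is the $L^2$-orthogonal projection of $\dot{\bm W}$ (formally) onto polynomials of degree $<N$. Equivalently, $\bm W_u-\bm W^{(N)}_{0,u}=\sum_{n\geq N}(2n+1)\bm I^{(n)}\int_0^u\widetilde P_n$ is a centred Gaussian process built from the tail of an orthonormal expansion. A useful consequence is that $\bm W^{(N)}$ is a conditional expectation: $\bm W^{(N)}_{0,u}=\mathbb E[\bm W_u\mid \mathcal F_N]$ with $\mathcal F_N=\sigma(\bm I^{(0)},\dots,\bm I^{(N-1)})$, since the coefficients are independent Gaussians by \cref{thm:properties_of_polynomial}. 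Likewise $\pi_2(\mathbb W^{(N)})$ is controlled by $\mathbb E[\pi_2(\mathbb W)\mid\mathcal F_N]$ up to an explicit correction. I would therefore aim to prove convergence via a martingale-plus-interpolation argument.

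\textbf{Step 1 (level one, pointwise rates).} Compute $\mathbb E|\bm W_{u,v}-\bm W^{(N)}_{u,v}|^2=d\sum_{n\geq N}(2n+1)\big(\int_u^v\widetilde P_n\big)^2$. Using the identity $\int_0^x\widetilde P_n=\frac{1}{2(2n+1)}(\widetilde P_{n+1}-\widetilde P_{n-1})(x)$ and the bound $|\widetilde P_n|\leq1$, I expect two bounds. The first is $\lesssim |v-u|$, obtained by Bessel's inequality, since the full sum equals $d|v-u|$. The second is $\lesssim 1/N$, obtained from the tail sum $\sum_{n\ge N}n^{-1}\cdot n^{-1}\cdot(\text{Legendre bound})$, where I would use the sharper bound $|\widetilde P_n(x)|\lesssim (n\sqrt{x(1-x)})^{-1/2}$ if needed. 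Interpolating the two gives $\mathbb E|\cdot|^2\lesssim |v-u|^{1-\epsilon}N^{-\epsilon}$.

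\textbf{Step 2 (level two).} Do the same for $\pi_2(\mathbb W_{u,v})-\pi_2(\mathbb W^{(N)}_{u,v})$. Both are elements of the second Wiener chaos, so I would expand them in the products $\bm I^{(m)}\otimes\bm I^{(n)}$. The difference splits into two parts: terms with $\max(m,n)\geq N$, plus the discrepancy between the Stratonovich lift of the polynomial and the projection of the Brownian area. I expect a bound of the form $\mathbb E|\cdot|^2\lesssim |v-u|^{2-\epsilon}N^{-\epsilon}$. By Gaussian hypercontractivity (finite chaos), all $L^q$ moments then follow from the $L^2$ ones, with $q$-dependent constants.

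\textbf{Step 3 (Hölder norm, Borel–Cantelli).} Take the moment bounds from Steps 1–2 with large $q$, in the form $\mathbb E[d(\mathbb W_{u,v},\mathbb W^{(N)}_{u,v})^q]\lesssim |v-u|^{q(1/2-\epsilon)}N^{-q\epsilon'}$. The inhomogeneous metric needs the usual care, namely handling cross terms through $\bm W$ and $\bm W^{(N)}$ increments. Feed this into the Garsia–Rodemich–Rumsey/Kolmogorov argument exactly as in \cref{lem:unif_bounds_W}, but applied to the difference. This yields $\mathbb E[d_{\alpha\text{-Höl}}(\mathbb W,\mathbb W^{(N)})^q]\lesssim N^{-q\epsilon'}$ for any $\alpha<\tfrac12-\epsilon$. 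Choosing $q\epsilon'>1$ makes $\sum_N\mathbb P(d>N^{-\epsilon'/2})$ summable, so Borel–Cantelli gives almost sure convergence. Alternatively, since the $\mathcal F_N$-structure gives a martingale in $N$, martingale convergence on the Banach space of Hölder rough paths could replace the rate argument.

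\textbf{Main obstacle.} The main obstacle is Step 2: obtaining a clean second-moment bound for the area difference that is simultaneously small in $N$ and of order $|v-u|^{2-\epsilon}$ on subintervals $[u,v]$ not aligned with $[0,1]$. The polynomial basis is global, so localizing to subintervals requires the Legendre antiderivative estimates. I would first try to reduce to level one by writing the area difference as $\int(\bm W-\bm W^{(N)})\otimes\rmd\bm W+\int\bm W^{(N)}\otimes\rmd(\bm W-\bm W^{(N)})$, then use the Step 1 covariance bounds together with integration by parts, since $\bm W^{(N)}$ is smooth.
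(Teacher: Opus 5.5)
Your route is genuinely different from the paper's, and it works once two steps are repaired.

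\textbf{Comparison with the paper.} The paper's proof is qualitative and goes in four steps:
\begin{itemize}
\item It uses $\mathbb W^{(N)}_{u,v}=\mathbb E[\mathbb W_{u,v}\mid\mathscr F_N]$ to transfer the Garsia--Rodemich--Rumsey bound of \cref{lem:unif_bounds_W} to $\mathbb W^{(N)}$, with the random constant $\sup_n\mathbb E[C_\alpha\mid\mathscr F_n]$.
\item That constant is finite by Doob's maximal inequality.
\item Uniform convergence follows from equicontinuity (Arzel\`a--Ascoli) together with the pointwise convergence proved by Foster et al.
\item It interpolates between uniform convergence and a uniform $\beta$-H\"older bound, $\alpha<\beta<\tfrac12$.
\end{itemize}
This needs no Legendre asymptotics and gives no rate. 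You instead prove moment rates for the difference, apply Kolmogorov/GRR to the rough-path distance, and finish with Borel--Cantelli. That costs explicit estimates but buys $\mathbb E[d_{\alpha\text{-H\"ol}}(\mathbb W,\mathbb W^{(N)})^q]\lesssim N^{-q\epsilon'}$, which is more than the paper obtains. Your caveat that level two matches the conditional expectation only up to a correction is right, and the paper glosses over it. Writing $\sigma_N^2(u,v)$ for the variance of one component of $\bW_{u,v}-\bW^{(N)}_{u,v}$, one has $\mathbb E[\pi_2(\mathbb W_{u,v})\mid\mathcal F_N]=\pi_2(\mathbb W^{(N)}_{u,v})+\tfrac12\sigma_N^2(u,v)$ times the identity matrix.

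\textbf{Step 1 needs the sharper Legendre bound.} With $|\widetilde P_n|\le 1$, your identity only gives $(2n+1)(\int_u^v\widetilde P_n)^2\le 4/(2n+1)$, which is a divergent harmonic tail. So Bernstein's bound is required, not optional. Even then it gives $\sigma_N^2(0,x)\lesssim (N\sqrt{x(1-x)})^{-1}$, which degenerates at the endpoints. Let $P_N$ be the $L^2[0,1]$-projection onto polynomials of degree $<N$.
\begin{itemize}
\item For $x<1/N$, use Bessel: $\sigma_N^2(0,x)\le x$.
\item For $x>1-1/N$, use $\sigma_N^2(0,x)=\sigma_N^2(x,1)$, which holds because $P_N\mathbf 1=\mathbf 1$.
\end{itemize}
Together these give $\delta_N:=\sup_{u<v}\sigma_N^2(u,v)\lesssim N^{-1/2}$. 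Any power of $N$ suffices for your interpolation.

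\textbf{Step 2 closes from your own first idea.} Drop the integration-by-parts fallback, since $\dot\bW^{(N)}$ is not uniformly controlled in $N$. Set $e_n=\sqrt{2n+1}\,\widetilde P_n$, $\xi_n=\sqrt{2n+1}\,\bm I^{(n)}$ and $\Delta_{u,v}=\{u<r<r'<v\}$.
\begin{itemize}
\item For $i\neq j$, the $(i,j)$ entry of $\pi_2(\mathbb W_{u,v})$ is $\sum_{m,n}\langle \mathbf 1_{\Delta_{u,v}},e_m\otimes e_n\rangle\,\xi^i_m\xi^j_n$.
\item The corresponding entry of $\pi_2(\mathbb W^{(N)}_{u,v})$ is exactly the sub-sum over $m,n<N$, so there is no further discrepancy.
\end{itemize}
By orthonormality of the products $\xi^i_m\xi^j_n$, the second moment of the difference is
\[
\big\|(\mathrm{Id}-P_N\otimes P_N)\mathbf 1_{\Delta_{u,v}}\big\|^2\le\int_u^v\sigma_N^2(u,r)\,\rmd r+\int_u^v\sigma_N^2(r,v)\,\rmd r\le 2|v-u|\min(|v-u|,\delta_N).
\]
For $i=j$, put $R=\bW-\bW^{(N)}$. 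The difference is $\tfrac12 R^i_{u,v}(W^i_{u,v}+W^{(N),i}_{u,v})$, and Cauchy--Schwarz with Gaussian fourth moments gives the same bound. This is your $|v-u|^{2-\epsilon}\delta_N^{\epsilon}$ estimate.

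The uniform bounds on $\mathbb W^{(N)}$ needed by the Kolmogorov criterion follow the same way, from $\|P_N\otimes P_N\mathbf 1_\Delta\|\le\|\mathbf 1_\Delta\|$. The diagonal entries also carry a constant (zeroth-chaos) part, but hypercontractivity still applies to chaos of order at most two.

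\textbf{Minor points.}
\begin{itemize}
\item Borel--Cantelli needs $q\epsilon'>2$, not $q\epsilon'>1$.
\item The alternative via martingale convergence in a Banach space of H\"older rough paths is not available as stated. That space is not linear, and level two is not exactly a martingale.
\end{itemize}
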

\begin{proof}
    As shown in \parencite[Theorem 2.4]{foster2020optimal}, we can define the following filtration,
    \[
        \big \{\mathscr{F}_N:=\sigma(\{\boldsymbol{I}^{(n)}_{s,t}\ :\ n=0,1,\cdots ,N\})\big \}_{N\geq 0},
    \]
    so that the polynomial approximation admits the representation,
    \[
        \mathbb{W}_{u,v}^{(N)}=\mathbb{E}[\mathbb{W}_{u,v}\,|\, \mathscr{F}_N],
    \]
    for any \([u,v]\subseteq [s,t]\). Thus, by taking expectations with respect to \(\mathscr{F}_N\) in \cref{lem:unif_bounds_W},
    \begin{equation}\label{eq:unif_bounds_wN}
        \|\mathbb{W}_{u,v}^{(N)}\|\leq \widetilde{C}_\alpha|v-u|^{\alpha},
    \end{equation}
    where \(\widetilde{C}_\alpha\coloneq \sup_{n\geq 0}\mathbb{E}[C_\alpha\,|\,\mathscr{F}_n]\) and where we continue to use \(\|\cdot \|\) as defined in \cref{lem:unif_bounds_W}. By Doob's maximal inequality \(\widetilde{C}_\alpha\) is finite a.s. since \(\mathbb{E}[C_\alpha\,|\,\mathscr{F}_n]\) is a (discrete) martingale.

    Combining \cref{eq:unif_bounds_w} and \cref{eq:unif_bounds_wN}, the sequence \(\{\mathbb{W}_{u,v}-\mathbb{W}_{u,v}^{(N)}\}_{N\geq 0}\) is uniformly bounded and uniformly equicontinuous. Thus, invoking the Arzel\`a-Ascoli theorem, we conclude that there exists a uniformly convergent subsequence and since pointwise convergence is proven in \parencite{foster2020optimal}, we know that this limit is zero, so,
    \[
        \mathbb{W}_{u,v}^{(N)}\overset{a.s.}\longrightarrow \mathbb{W}_{u,v}\qquad \text{uniformly as }N\rightarrow \infty.
    \]
        
    Finally, defining \(C^* = \max(C_\beta,\widetilde{C}_\beta)^{\frac \alpha \beta}\), use the following inequality,
    \[
    \begin{aligned}
        \frac{\|\pi_k(\mathbb{W}_{u,v}-\mathbb{W}_{u,v}^{(N)})\|_{L_2}}{|v-u|^{k\alpha}}&\leq \bigg (\frac{\|\pi_k(\mathbb{W}_{u,v}-\mathbb{W}_{u,v}^{(N)})\|_{L_2}}{|v-u|^\beta}\bigg )^{\frac \alpha \beta}\Big  (\,\underset{0\leq s<t\leq 1}{\text{sup}}\|\pi_k(\mathbb{W}_{u,v}-\mathbb{W}_{u,v}^{(N)})\|_{L_2}\Big  )^{1-\frac \alpha \beta}\\
        &\leq C^*\Big  (\,\underset{0\leq u<v\leq 1}{\text{sup}}\|\pi_k(\mathbb{W}_{u,v}-\mathbb{W}_{u,v}^{(N)})\|_{L_2}\Big  )^{1-\frac \alpha \beta}\\
        &\overset{a.s.}\longrightarrow 0\qquad \text{uniformly as }N\rightarrow \infty.
    \end{aligned}
    \]
    Thus, \(d_{\alpha-\text{H\"ol}}(\mathbb{W},\mathbb{W}^{(N)})\overset{a.s.}\longrightarrow 0\) as \(N\rightarrow \infty\).
\end{proof}

Finally, we move to deriving the explicit Chen relations for the integrals \(\boldsymbol{I}_{s,t}^{(n)} \) To achieve this, we first establish the following related dilation rules for the (shifted) Legendre polynomials.
\begin{restatable}[Dilation rule for shifted Legendre polynomials]{lemma}{legendre_dilation}\label{lem:legendre_dilation}
    Let \(\widetilde{P}_n(x)\) denote the \(n^\text{th}\) shifted Legendre polynomial on \([0,1]\). Then,
    \begin{equation}
        \widetilde{P}_n(x) = \sum_{m=0}^n c_{n,m}\,\widetilde{P}_m(2x)=\sum_{m=0}^n(-1)^{n+m}c_{n,m}\widetilde{P}_m(2x-1),
    \end{equation}
    where,
    \begin{equation*}
        c_{n,m} :=(-1)^{n}(2m+1)\sum_{k=m}^n\Big (-\frac 12\Big )^k \frac{(n+k)!}{(n-k)!(k-m)!(k+m+1)!}.
    \end{equation*}
\end{restatable}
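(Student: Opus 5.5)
The plan is to expand $\widetilde{P}_n$ in monomials, rescale, and re-expand the monomials in the shifted Legendre basis; this gives the first equality. The second equality then follows from the reflection symmetry of the Legendre polynomials. Since $\widetilde{P}_n$ is a polynomial of degree $n$ and $\{\widetilde{P}_m(2x)\}_{m\le n}$ is a basis of polynomials of degree at most $n$, some expansion $\widetilde{P}_n(x)=\sum_{m\le n} c_{n,m}\widetilde{P}_m(2x)$ certainly exists. The only task is to identify the coefficients. Both sides are polynomials, so the identity holds for all real $x$, not just on $[0,\tfrac12]$.

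\textbf{Step 1 (monomial form).} Use the explicit formula
\[
\widetilde{P}_n(x)=(-1)^n\sum_{k=0}^n\binom{n}{k}\binom{n+k}{k}(-x)^k .
\]
Substitute $x^k = 2^{-k}(2x)^k$, which gives
\[
\widetilde{P}_n(x)=(-1)^n\sum_{k=0}^n\binom{n}{k}\binom{n+k}{k}\Big(-\tfrac12\Big)^k y^k, \qquad y=2x .
\]

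\textbf{Step 2 (monomials in the Legendre basis).} Show that
\[
y^k=\sum_{m=0}^k (2m+1)\frac{(k!)^2}{(k-m)!\,(k+m+1)!}\,\widetilde{P}_m(y).
\]
By orthogonality and the normalisation $\int_0^1\widetilde{P}_m^2=\frac1{2m+1}$ (already used in the proof of \cref{thm:properties_of_polynomial}), this reduces to the moment identity
\[
\int_0^1 y^k\widetilde{P}_m(y)\,\rmd y=\frac{(k!)^2}{(k-m)!\,(k+m+1)!}\quad\text{for } m\le k .
\]
I would prove it with the Rodrigues formula $\widetilde{P}_m(y)=\frac{1}{m!}\frac{\rmd^m}{\rmd y^m}\big(y^m(y-1)^m\big)$. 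Integrating by parts $m$ times kills the boundary terms, leaving
\[
\frac{(-1)^m}{m!}\frac{k!}{(k-m)!}\int_0^1 y^{k}(y-1)^m\,\rmd y .
\]
The remaining integral is a Beta integral equal to $(-1)^m\frac{k!\,m!}{(k+m+1)!}$. Multiplying out gives the moment identity. This step is the main computational obstacle; the rest is bookkeeping.

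\textbf{Step 3 (combine).} Substitute Step 2 into Step 1 and swap the order of summation, so that $m$ runs over $0,\dots,n$ and $k$ over $m,\dots,n$. Then simplify using
\[
\binom{n}{k}\binom{n+k}{k}(k!)^2=\frac{(n+k)!}{(n-k)!} .
\]
This yields exactly
\[
c_{n,m}=(-1)^n(2m+1)\sum_{k=m}^n\Big(-\tfrac12\Big)^k\frac{(n+k)!}{(n-k)!\,(k-m)!\,(k+m+1)!} .
\]

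\textbf{Step 4 (second equality).} Use the reflection identity $\widetilde{P}_m(1-y)=(-1)^m\widetilde{P}_m(y)$, which is inherited from $P_m(-z)=(-1)^mP_m(z)$. Write
\[
\widetilde{P}_n(x)=(-1)^n\widetilde{P}_n(1-x)=(-1)^n\sum_m c_{n,m}\widetilde{P}_m\big(2-2x\big),
\]
where the second equality applies the first identity at the point $1-x$. Finally note that $\widetilde{P}_m(2-2x)=\widetilde{P}_m\big(1-(2x-1)\big)=(-1)^m\widetilde{P}_m(2x-1)$. This gives $\widetilde{P}_n(x)=\sum_m(-1)^{n+m}c_{n,m}\widetilde{P}_m(2x-1)$, as claimed.
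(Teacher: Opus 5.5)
Your proposal is correct and is essentially the paper's proof. The paper obtains $c_{n,m}=(2m+1)\int_0^1\widetilde{P}_n(u/2)\widetilde{P}_m(u)\,\rmd u$ by orthogonality and evaluates it with Rodrigues' formula, $m$-fold integration by parts, the monomial expansion of $\widetilde{P}_n$ and a Beta integral; you apply the same tools monomial by monomial and then resum, which is the same computation reordered by linearity, and your second equality via $\widetilde{P}_m(1-y)=(-1)^m\widetilde{P}_m(y)$ is identical to the paper's (with your remark that the first identity holds for all real $x$ making explicit a point the paper uses only implicitly).
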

\begin{proof}
    Using the orthogonality relations for Legendre polynomials, we derive the following integral expression for \(c_{n,m}\),
    \[
        (2m+1)\int_0^1\widetilde{P}_n\Big (\frac u2\Big )\widetilde{P}_m(u)\,\text{d}u=(2m+1)\sum_{m=0}^nc_{n,m}\int_0^1\widetilde{P}_n(u)\widetilde{P}_k(u)\,\text{d}u=c_{n,m}.
    \]
    We can evaluate the integral explicitly. By substituting Rodrigues' formula for \(\widetilde{P}_m(u)\), before applying integration by parts \(m\)-times (noting that the boundary terms always vanish),
    \[
    \begin{aligned}
        \int_0^1\widetilde{P}_n\Big (\frac u2\Big )P_m(u)\,\text{d}u&=\frac{1}{m!}\int_0^1\widetilde{P}_n\Big (\frac u2\Big )\frac{\text{d}^m}{\text{d}u^m}[x^m(x-1)^m]\,\text{d}u\\
        &=\frac{1}{m!}\int_0^1x^m(1-x)^m\frac{\text{d}^m}{\text{d}u^m}\widetilde{P}_n\Big (\frac u2\Big )\,\text{d}u.
    \end{aligned}
    \]
    Next, substituting the explicit form for the shifted Legendre polynomial,
    \[
        \widetilde{P}_n\Big (\frac u2\Big )=(-1)^n\sum_{k=0}^n\binom{n}{k}\binom{n+k}{k}\Big (-\frac x2\Big )^k,
    \]
    we obtain,
    \[
    \begin{aligned}
        \int_0^1\widetilde{P}_n\Big (\frac u2\Big )P_m(u)\,\text{d}u&=\frac{1}{m!}\int_0^1x^m(1-x)^m\frac{\text{d}^m}{\text{d}u^m}\Big [(-1)^n\sum_{k=0}^n\binom{n}{k}\binom{n+k}{k}\Big (-\frac x2\Big )^k\,\Big ]\,\text{d}u\\
        &=\frac{(-1)^{n}}{m!}\sum_{k=m}^n\Big (-\frac 12\Big )^k\binom{n}{k}\binom{n+k}{k}\int_0^1x^m(1-x)^m\frac{\text{d}^m}{\text{d}u^m} [x^k\, ]\,\text{d}u\\
        &=\frac{(-1)^{n}}{m!}\sum_{k=m}^n\Big (-\frac 12\Big )^k \frac{k!}{(k-m)!}\binom{n}{k}\binom{n+k}{k}\int_0^1x^k(1-x)^m\,\text{d}u\\
        &=\frac{(-1)^{n}}{m!}\sum_{k=m}^n\Big (-\frac 12\Big )^k \frac{k!}{(k-m)!}\binom{n}{k}\binom{n+k}{k}\frac{k!\,m!}{(k+m+1)!}\\
        &=(-1)^{n}\sum_{k=m}^n\Big (-\frac 12\Big )^k \frac{(n+k)!}{(n-k)!(k-m)!(k+m+1)!}.
    \end{aligned}
    \]
    For the second equation, use the symmetry of the Legendre polynomial,
    \[
    \begin{aligned}
        \widetilde{P}_n(x)&=(-1)^n\widetilde{P}_n(1-x)\\
        &=(-1)^n\sum_{m=0}^nc_{n,m}\widetilde{P}_m\big (2(1-x)\big )\\
        &=(-1)^n\sum_{m=0}^nc_{n,m}\widetilde{P}_m\big (1-(2x-1)\big )\\
        &=\sum_{m=0}^n(-1)^{n+m}c_{n,m}\widetilde{P}_m (2x-1).
    \end{aligned}
    \]
\end{proof}
The proof of \cref{prop:chen} now follows naturally, by splitting the domain of the integral \(\boldsymbol{I}_{s,t}^{(n)}\) into two and applying the relations we have just derived.
\begin{restatable}[Chen relations for \(\boldsymbol{I}^{(n)}\)]{proposition}{chen_relations}
    \label{prop:chen}
    Let \(u=\dfrac{s+t}{2}\) be the midpoint of times \(s\leq t\). Then,
    \begin{equation}\label{eq:polychen}
        \boldsymbol{I}_{s,t}^{(n)}=\sum_{m=0}^nc_{n,m}\Big (\boldsymbol{I}_{s,u}^{(m)}+(-1)^{n+m}\boldsymbol{I}_{u,t}^{(m)}\Big ),
    \end{equation}
    where \(c_{n,m}\) are defined as in \cref{lem:legendre_dilation}.
\end{restatable}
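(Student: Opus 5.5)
The plan is to split the stochastic integral defining \(\boldsymbol{I}_{s,t}^{(n)}\) at the midpoint \(u=\frac{s+t}{2}\), and then rewrite the integrand on each half as a combination of the shifted Legendre polynomials adapted to that half-interval, using \cref{lem:legendre_dilation}. Set \(h=t-s\), so \(u-s=t-u=h/2\). By linearity of the It\^o integral,
\[
    \boldsymbol{I}_{s,t}^{(n)}=\int_s^u\widetilde{P}_n\Big(\frac{r-s}{h}\Big)\,\rmd\bW_r+\int_u^t\widetilde{P}_n\Big(\frac{r-s}{h}\Big)\,\rmd\bW_r .
\]

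\textbf{First half.} For \(r\in[s,u]\), put \(x=\frac{r-s}{h}\in[0,\frac12]\), so that \(2x=\frac{r-s}{u-s}\in[0,1]\). The first identity in \cref{lem:legendre_dilation} gives
\[
    \widetilde{P}_n(x)=\sum_{m=0}^n c_{n,m}\widetilde{P}_m\Big(\frac{r-s}{u-s}\Big).
\]
This is a polynomial identity, so it holds for all \(x\). Integrating against \(\rmd\bW_r\) over \([s,u]\) and exchanging the finite sum with the integral yields \(\sum_m c_{n,m}\boldsymbol{I}_{s,u}^{(m)}\).

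\textbf{Second half.} For \(r\in[u,t]\), the same \(x\) lies in \([\frac12,1]\), and \(2x-1=\frac{r-u}{t-u}\in[0,1]\). Here the second identity of the lemma gives
\[
    \widetilde{P}_n(x)=\sum_m(-1)^{n+m}c_{n,m}\widetilde{P}_m\Big(\frac{r-u}{t-u}\Big).
\]
Integrating over \([u,t]\) then produces \(\sum_m(-1)^{n+m}c_{n,m}\boldsymbol{I}_{u,t}^{(m)}\). Adding the two halves gives \eqref{eq:polychen}.

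\textbf{Main obstacle.} The only delicate point is making sure \cref{lem:legendre_dilation} is used in the correct direction. The identity is \(\widetilde{P}_n(x)=\sum_m c_{n,m}\widetilde{P}_m(2x)\), with \(c_{n,m}\) the coefficients of \(x\mapsto\widetilde{P}_n(x/2)\) in the basis \(\widetilde{P}_m\) on \([0,1]\). I would check this interpretation against the lemma's integral formula \(c_{n,m}=(2m+1)\int_0^1\widetilde{P}_n(v/2)\widetilde{P}_m(v)\,\rmd v\), which follows from orthogonality of the \(\widetilde{P}_m\) on \([0,1]\) with norm \(\frac{1}{2m+1}\). I would also confirm the substitution \(v=2x\) on the first half and \(v=2x-1\) on the second.

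\textbf{Sanity check.} As a low-degree check: for \(n=0\) the relation reduces to additivity of increments. For \(n=1\), with \(\widetilde{P}_1(x)=2x-1\), we have \(\widetilde{P}_1(x/2)=\frac12\widetilde{P}_1(x)-\frac12\), so \(c_{1,0}=-\frac12\) and \(c_{1,1}=\frac12\). The relation then reads
\[
    \boldsymbol{I}_{s,t}^{(1)}=\tfrac12\big(\boldsymbol{I}_{s,u}^{(1)}+\boldsymbol{I}_{u,t}^{(1)}\big)+\tfrac12\big(\bW_{t}-\bW_u\big)-\tfrac12\big(\bW_u-\bW_s\big),
\]
which agrees with a direct computation and with the lemma's sign \((-1)^{n+m}\). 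No stochastic-analysis subtlety arises: all integrands are deterministic polynomials, so the splitting and reindexing are valid pathwise as Wiener integrals.
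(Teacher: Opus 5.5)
Your proposal is correct and follows the paper's proof essentially verbatim. Both split $\boldsymbol{I}_{s,t}^{(n)}$ at the midpoint, use $2\frac{r-s}{t-s}=\frac{r-s}{u-s}$ on $[s,u]$ and $2\frac{r-s}{t-s}-1=\frac{r-u}{t-u}$ on $[u,t]$ to apply the two dilation identities of the lemma, and sum. Your $n=1$ check ($c_{1,0}=-\tfrac12$, $c_{1,1}=\tfrac12$) also agrees with the lemma's explicit formula for $c_{n,m}$.
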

\begin{proof}
    By splitting \([s,t]\) into the two half-domains \([s,u]\) and \([u,t]\), we can apply the dilation rules for the Legendre polynomials given in \cref{lem:legendre_dilation}. We use the identities,
    \[
    \begin{aligned}
        2\Big (\frac{r-s}{t-s}\Big ) &= \frac{r-s}{u-s},\\[6pt]
        2\Big (\frac{r-s}{t-s}\Big ) - 1 &=\frac{r-u}{t-u},
    \end{aligned}
    \]
    so that,
    \[
    \begin{aligned}
        \boldsymbol{I}_{s,t}^{(n)}&=\int_s^t \widetilde{P}_n\Big (\frac{r-s}{t-s}\Big )\,\text{d}\boldsymbol{W}_r\\
        &=\sum_{m=0}^nc_{n,m}\bigg (\int_s^{u}\widetilde{P}_m\Big (\frac{r-s}{u-s}\Big )\,\text{d}\boldsymbol{W}_r+(-1)^{n+m}\int_{u}^t\widetilde{P}_m\Big (\frac{r-u}{t-u}\Big )\,\text{d}\boldsymbol{W}_r\bigg )\\
        &=\sum_{m=0}^nc_{n,m}\Big (\boldsymbol{I}_{s,u}^{(m)}+(-1)^{n+m}\boldsymbol{I}_{u,t}^{(m)}\Big ),
    \end{aligned}
    \]
    as required.
\end{proof}

\section{Training}
\label{app:training}

\subsection{Euler-Maruyama step objective}
\label{appsubsec:em-objective}
Here we show that matching an Euler-Maruyama step with the strong stochastic flow map model is equivalent to a weighted coefficient loss.

\begin{restatable}[Euler-Maruyama objective]{lemma}{emobj}
    \label{lem:emobj}
    Consider the following objective
    \begin{equation*}
        \mathcal{L}_\text{EM} = \big\|\bX_t - \hat{\bX}_t\big\|^2,
    \end{equation*}
    where $\hat{\bX}_t = \bm\Psi_{s,t}(\bX_s, \bW_{[s, t]})$ via \eqref{eq:em-sfm} and
    \begin{equation*}
        \bX_t = \bX_s + \bm f(s, \bX_s) (t-s) + \bm g(s)(\bW_t - \bW_s).
    \end{equation*}
    Since the true coefficients $\bm f(s, \bX_s), \bm g(s)$ depend only on $s$, it suffices to restrict the model to coefficients $\bm f_{s,t}(\bX_s)$ and $\bm g_{s,t}$ that are independent of $\bW_{[s, t]}$. Then
    \begin{equation*}
        \mathbb{E}[\mathcal{L}_\text{EM} \mid \bX_s] = (t-s)^2 \mathcal{L}_f + (t-s) \mathcal{L}_g,
    \end{equation*}
    where $\mathcal{L}_f = \left(\bm f(s, \bX_s) - \bm f_{s,t}(\bX_s)\right)^2$ and $\mathcal{L}_g = \left(\bm g(s) - \bm g_{s,t}\right)^2$.
\end{restatable}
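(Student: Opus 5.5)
Substitute both expressions and cancel the common initial condition $\bX_s$, using the restricted model $\hat\bX_t = \bX_s + \bm f_{s,t}(\bX_s)(t-s) + \bm g_{s,t}(\bW_t-\bW_s)$. Write $\Delta \bW = \bW_t - \bW_s$, $\bm a = \bm f(s,\bX_s) - \bm f_{s,t}(\bX_s)$ and $\bm B = \bm g(s) - \bm g_{s,t}$. Then the residual is
\begin{equation*}
    \bX_t - \hat\bX_t = (t-s)\,\bm a + \bm B\,\Delta\bW .
\end{equation*}
Here $\bm a \in \R^d$ is $\bX_s$-measurable and $\bm B \in \R^{d\times w}$ is deterministic. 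The restriction to $\bW$-independent coefficients is what makes both quantities fixed once we condition on $\bX_s$.

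Next, expand the squared norm into three terms:
\begin{equation*}
    \|\bX_t - \hat\bX_t\|^2 = (t-s)^2\|\bm a\|^2 + 2(t-s)\,\bm a^\top \bm B\,\Delta\bW + \Delta\bW^\top \bm B^\top \bm B\,\Delta\bW .
\end{equation*}
Take the conditional expectation given $\bX_s$. The increment $\Delta\bW$ is independent of $\mathscr{F}_s$, so it is independent of $\bX_s$, and $\Delta\bW \sim \mathcal N(\bm 0,(t-s)\bm I)$. This is property \ref{item:poly_approx_property2} of \cref{thm:properties_of_polynomial} with $n=0$, since $\Delta\bW = \bm I^{(0)}_{s,t}$. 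The three terms then behave as follows:
\begin{itemize}
    \item The first term is $\bX_s$-measurable, so it is unchanged.
    \item The cross term vanishes, because $\mathbb{E}[\Delta\bW\mid\bX_s]=\bm 0$.
    \item The quadratic term becomes $\operatorname{tr}(\bm B^\top\bm B\,(t-s)\bm I) = (t-s)\|\bm B\|_F^2$.
\end{itemize}
This gives $(t-s)^2\mathcal L_f + (t-s)\mathcal L_g$, reading the squares in the definitions of $\mathcal L_f$ and $\mathcal L_g$ as squared Euclidean and Frobenius norms respectively.

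The only delicate point is the justification for restricting to $\bW$-independent model coefficients; without it the cross term need not vanish. I would address this briefly. The target $\bX_t$ is affine in $\Delta\bW$ with $\bX_s$-measurable coefficients. Hence the conditional $L^2$ projection of $\bX_t$ onto functions of $(\bX_s,\bW_{[s,t]})$ of the form \eqref{eq:em-sfm} is attained with constant coefficients. The rest of the argument is routine Gaussian moment computation.
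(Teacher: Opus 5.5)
Your proposal is correct and follows the paper's proof. Both arguments cancel $\bX_s$ and expand the square of $(t-s)\bm a + \bm B\,\Delta\bW$, then use $\mathbb{E}[\Delta\bW\mid\bX_s]=\bm 0$ to kill the cross term and the second moment of the increment to obtain the $(t-s)$ factor. Your trace computation giving $(t-s)\|\bm B\|_F^2$ is a more careful matrix-valued version of the paper's scalar-style step $\mathbb{E}[(\bW_t-\bW_s)^2\mid\bX_s]=t-s$, and your closing remark justifying the restriction is not needed, since the statement takes it as a hypothesis.
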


\begin{proof}
    Under the independence assumption, $\bm f_{s,t}(\bX_s)$ and $\bm g_{s,t}$ are independent of $\bW_{[s, t]}$, so
    \begin{equation*}
        \bX_t - \hat{\bX}_t = \left(\bm f(s, \bX_s) - \bm f_{s,t}(\bX_s)\right)(t-s) + \left(\bm g(s) - \bm g_{s,t}\right)(\bW_t - \bW_s).
    \end{equation*}
    Expanding the square,
    \begin{align*}
        \mathcal{L}_\text{EM} &= (t-s)^2 \left(\bm f(s, \bX_s) - \bm f_{s,t}(\bX_s)\right)^2 \\
        &\quad + 2(t-s)\left(\bm f(s, \bX_s) - \bm f_{s,t}(\bX_s)\right)\left(\bm g(s) - \bm g_{s,t}\right)(\bW_t - \bW_s) \\
        &\quad + \left(\bm g(s) - \bm g_{s,t}\right)^2 (\bW_t - \bW_s)^2.
    \end{align*}
    Taking $\mathbb{E}[\,\cdot\mid \bX_s]$ and using $\mathbb{E}[\bW_t - \bW_s \mid \bX_s] = 0$ and $\mathbb{E}[(\bW_t - \bW_s)^2 \mid \bX_s] = t - s$,
    \begin{equation*}
        \mathbb{E}[\mathcal{L}_\text{EM} \mid \bX_s] = (t-s)^2 \left(\bm f(s, \bX_s) - \bm f_{s,t}(\bX_s)\right)^2 + (t-s) \left(\bm g(s) - \bm g_{s,t}\right)^2.
    \end{equation*}
    Substituting for $\mathcal{L}_f, \mathcal{L}_g$ gives the result.
\end{proof}

\subsection{Diffusion SDEs}
\label{appsubsec:diffusion-sde}
To apply \cref{alg:ssfm} to diffusion SDEs, we must derive the ground truth reverse diffusion SDE. Consider the variance preserving formulation,
\begin{equation*}
    \rmd \bX_t = -\frac{1}{2}\beta_t \bX_t\; \rmd t + \sqrt{\beta_t}\; \rmd \bW_t.
\end{equation*}
The reverse diffusion SDE is given by
\begin{equation*}
    \rmd \bX_t = [-\frac{1}{2}\beta_t \bX_t - \beta_t \nabla_{\bX_t}\log p(t, \bX_t)]\; \rmd t + \sqrt{\beta_t} \;\rmd \bW_t,
\end{equation*}
where $\nabla_{\bX_t}\log p(t, \bX_t)$ is the score.

Given a data sample $\bX_1$, an expression for the score can be obtained. Specifically, we sample the forward process via $\bX_t = \alpha_t \bX_1 + \sigma_t \epsilon, \, \epsilon \sim \mathcal{N}(\bm 0, \bm I)$, where
\begin{equation*}
    \alpha_t = \exp\!\left(-\tfrac{1}{2}\int_0^t \beta_s\; ds\right), \quad \sigma_t^2 = 1-\alpha_t^2.
\end{equation*}
Then, the score is given by
\begin{equation*}
    \nabla_{\bX_t}\log p(t, \bX_t) = -\frac{\epsilon}{\sigma_t}.
\end{equation*}

We are therefore able to build the ground truth SDE required for training the stochastic flow map by
\begin{equation*}
    \rmd \bX_t = [-\frac{1}{2}\beta_t \bX_t + \beta_t \frac{\epsilon}{\sigma_t}]\; \rmd t + \sqrt{\beta_t}\; \rmd \bW_t.
\end{equation*}

\section{Extended related work}
\paragraph{Few-step models.}
The training objectives for SSFMs with small step sizes are related to generator matching, \ie, matching the drift and diffusion coefficients \parencite{holderrieth2025generator}.
The semigroup loss is the stochastic analogue to the one used in deterministic flow maps studied by \parencite{boffi2025build,frans2025one}.
Consistency models \parencite{song2023consistency,song2024improved} use a small Euler step which is related to our small stochastic Euler-Maruyama step; however, the actual loss itself is quite different as we predict the small jump rather than using consistency.
Other methods \parencite{boffi2025build,geng2025mean,geng2025improved} train with losses which require time derivatives of the flow map, a requirement that does not extend naturally to the stochastic setting, as the Brownian motion is nowhere differentiable \wrt time.

\paragraph{Few-step sampling of SDEs.}
An active area of research has been accelerating inference with diffusion SDEs via more efficient numerical schemes \parencite{zhang2023fast,gonzalez2023seeds,blasingame2026rex}, thereby decreasing the NFE to use these models; however, all these techniques are at inference time and use the same models.
Recent work by \textcite{jiang2025sig} looks at using \textit{partial signatures} of the Brownian motion to distill pre-trained diffusion SDEs into taking larger step sizes by learning a hyper-solver \parencite{NEURIPS2020_f1686b4b}.
This is significantly different from our work as it requires numerically integrating the entire SDE, \ie, it does not have a simulation-free manner of training even for flow/diffusion models and relies on a pre-trained diffusion model teacher.
SSFMs on the other hand enable scalable training of stochastic flow maps and can be trained without a pre-trained teacher model.

\paragraph{Stochastic few-step models.}
Recent work by \textcite{kiyohara2025neural,passaro2026stochastic,holderrieth2026diamond,potaptchik2026meta} have looked at weak approximations to the diffusion SDE.
The later three works \textcite{passaro2026stochastic,holderrieth2026diamond,potaptchik2026meta} learn an ``inner'' flow map to learn the transition kernel as we detail in the main paper.\footnote{\textcite{passaro2026stochastic} discusses the nuances between these works in more detail.}
Beyond providing a strong solution, SSFMs also work for arbitrary additive-noise SDEs, whereas existing weak approaches are typically formulated for specific diffusion model SDEs.
The work of \textcite{kiyohara2025neural} also adopts an Euler-Maruyama type parametrization for each layer within the context discrete normalizing flow \parencite{rezende2015nfs} framework, \ie, for a flow map $\bm \Upsilon_{s,t}^\theta(\bx) = (\bmf_L^\theta \circ \bmf_{L-1}^\theta \circ \cdots \circ \bmf_1^\theta)(\bx)$, each layer $\bmf_\ell$ is given as
\begin{equation}
    \bmf_\ell(\bx; s, t) = \bx + (t-s) \bm \mu_\ell^\theta(\bx; s, t) + \sqrt{t-s} \bm \sigma_\ell^\theta(\bx; s, t) \odot \bm\varepsilon, \quad \bm\varepsilon \sim \mathcal N(\bm 0, \bm I).
\end{equation}
This map based on the normalizing flow above is then trained to learn the transition kernel of an SDE $p_{t|s}(\bx_t|\bx_s)$ by minimizing both the forward and reverse Kullback-Leibler divergence \parencite{kullback1951information}.
This work is different from the other stream of weak stochastic flow maps as they learn via a parameterized normalizing flow instead of the standard flow map losses \parencite{geng2025improved,boffi2025build}; the paper focuses on learning the solution maps for latent neural SDEs \parencite{kidger2021efficient,li2020sdes}.

\section{Experimental details}
\label{app:experiments}
In the image and molecule generation experiments we take the number of polynomial coefficients $N=3$. We use the Virtual Brownian Tree (VBT) implementation in Diffrax to generate the coefficients $\bm I_{s, t}^{(3)}$ \parencite{jelinvcivc2024single, kidger_thesis}.
It should be noted that the VBT implementation introduces a constant scaling factor on each coefficient.

\begin{figure}[h]
    \centering
    \includegraphics[width=\linewidth]{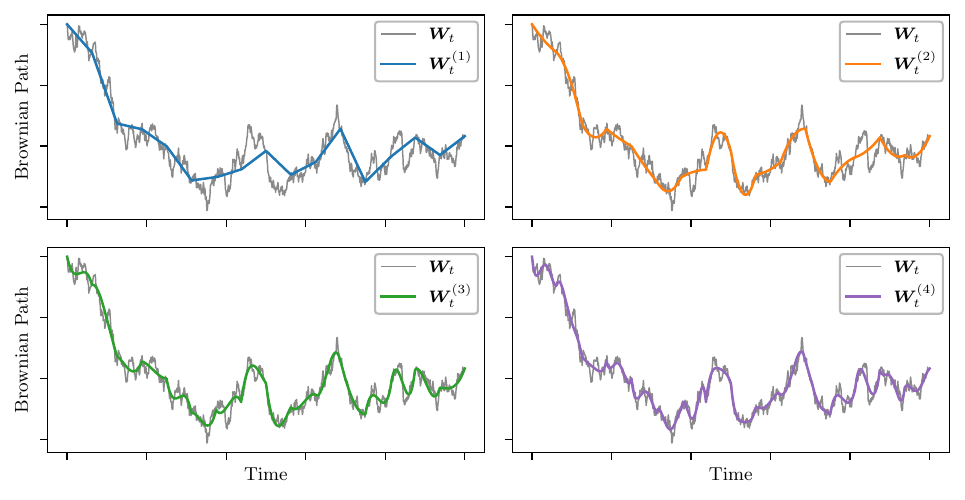}
    \caption{Brownian path $\bW_t$ and associated polynomial approximations $\bW_t^{(N)}$ over 16-steps.}
    \label{fig:polyapprox_many}
\end{figure}

\subsection{Non-linear SDE}
\label{appsubsec:non_linear}
We include additional informative plots for the non-linear SDE experiment that would not fit in the main text. In \cref{fig:polyapprox_many} we show the polynomial approximation of Brownian motion for varying degree polynomials. In \cref{fig:non_linear_many} we show the SSFM predictions for many step sizes when trained with the $N=4$ degree polynomial. 

\begin{figure}
    \centering
    \includegraphics[width=\linewidth]{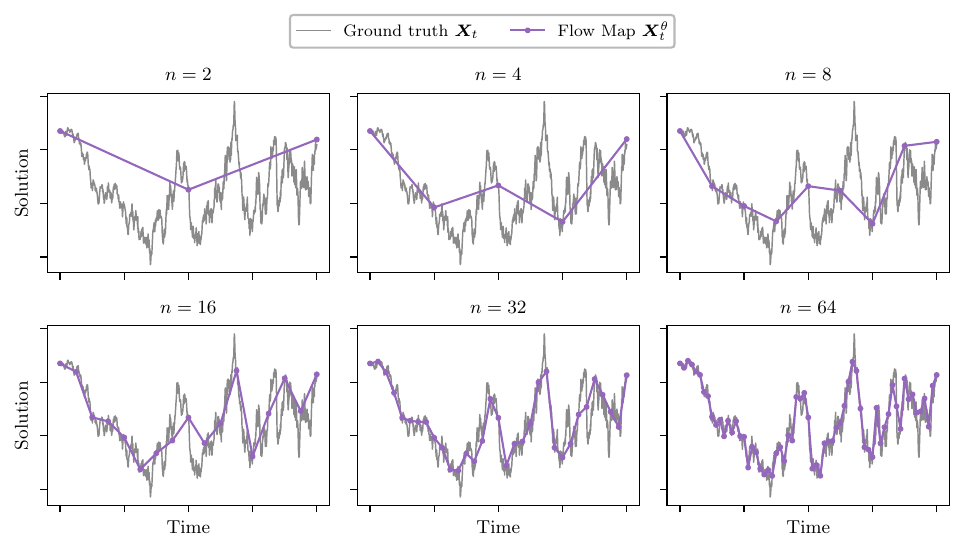}
    \caption{Ground truth SDE and SSFM prediction for many step counts.}
    \label{fig:non_linear_many}
\end{figure}

\subsection{Image generation}
For the image generation experiments we setup the variance preserving SDE (see \cref{appsubsec:diffusion-sde}) as the ground truth and follow \cref{alg:ssfm} to train the SSFM. The drift and diffusion networks are parameterized with the EDM2 architecture (Config C) \parencite{karras2024analyzing} with hyperparameters given in \cref{tab:image-hyperparams}.

For evaluation, we generate 50k samples with the SSFM and compute FID against the CIFAR-10 / CelebA datasets. Uniform step placement is used so that for $n$ steps, the step size is given by $(1 - t_\epsilon)/N$, where $t_\epsilon=10^{-5}$.

\begin{table}[h]
  \centering
  \caption{Image generation hyperparameters.}
  \label{tab:image-hyperparams}
  \begin{tabular}{lrr}
  \toprule
  Hyperparameter & CIFAR-10 & CelebA-64 \\
  \midrule
  \multicolumn{2}{l}{\emph{Drift network (EDM2 U-Net)}} \\
  Base channels               & 128 & 128 \\
  Channel multipliers         & $(2,\,2,\,2)$ & $(1, 2, 3, 4)$ \\
  Attention resolutions       & $16\times 16$ & $16\times 16, 8 \times 8$ \\
  Attention head dimension    & 64 & 64 \\
  GroupNorm groups            & 8 & 8 \\
  Dropout                     & 0.13 & 0.13 \\
  \midrule
  \multicolumn{2}{l}{\emph{Diffusion network (EDM2 U-Net)}} \\
  Base channels               & 64 & 64 \\
  Channel multipliers         & $(2,\,2,\,2)$ & $(1, 2, 3, 4)$ \\
  Attention resolutions       & $16\times 16$ & $16\times 16, 8 \times 8$ \\
  Attention head dimension    & 64 & 64 \\
  GroupNorm groups            & 8 & 8 \\
  Dropout                     & 0 & 0 \\
  \midrule
  \multicolumn{2}{l}{\emph{Loss}} \\
  Step size split $\Delta t$  & 0.01 & 0.01 \\
  Max distillation step $h_{\max}$ & 0.52 & 0.52 \\
  Batch split $\eta$       & 0.75 & 0.75 \\
  \midrule
  \multicolumn{2}{l}{\emph{Optimization}} \\
  Optimizer                   & Adam ($\beta_2 = 0.99$) & Adam ($\beta_2 = 0.99$) \\
  Peak learning rate               & $10^{-3}$ & $10^{-3}$ \\
  Min learning rate                 & $10^{-5}$ & $10^{-5}$ \\
  LR schedule                 & warmup + cosine decay & warmup + cosine decay \\
  Warmup steps                      & 5{,}000 & 5{,}000\\
  Gradient clip (global norm)       & 1.0 & 1.0 \\
  Batch size                  & 512 & 256 \\
  Training steps              & 400{,}000 & 800{,}000 \\
  EMA decay                   & 0.999 & 0.999 \\
  \bottomrule
  \end{tabular}
  \end{table}

\subsection{Molecular systems: ALDP}
\label{appsubsec:molecules}

The alanine dipeptide dataset follows \parencite{plainer2026consistent} with 50k samples from a molecular dynamics simulation in implicit solvent \parencite{kohler2021smooth}, coarse grained to five atoms [C, N, CA, C, N]. It is available from the \emph{ScoreMD} repository accompanying \parencite{plainer2026consistent}. 

The evaluation metrics calculated are the potential of mean force (PMF) squared error and the Jensen-Shannon (JS) divergence. These metrics both compare the difference in the equilibrium free energy surfaces of the ground truth system and the model prediction.

A graph transformer architecture is used with hyperparameters listed in \cref{tab:aldp-hparams}. The variance preserving SDE is used as the ground truth to construct the SSFM following \cref{alg:ssfm}.

The diffusion baseline models were obtained from the \emph{ScoreMD} repository and results collected via the evaluation code provided. The SSFM model evaluation was also computed from the \emph{ScoreMD} code to ensure a fair comparison.

\begin{table}[h]
  \centering
  \caption{ALDP generation hyperparameters.}
  \begin{tabular}{lr}
  \toprule
  Hyperparameter & ALDP \\
  \midrule
  \multicolumn{2}{l}{\textit{Drift/Diffusion network (graph-transformer)}} \\
  Hidden dim                        & 96 \\
  Transformer blocks                & 3/2 \\
  Attention heads                   & 8 \\
  Head dim                          & 64 \\
  Feed-forward multiplier           & 4 \\
  Time-embedding dim                & 64 \\
  Uncertainty MLP Fourier dim       & 64 \\
  \midrule
  \multicolumn{2}{l}{\textit{Loss}} \\
  Step size split $\Delta t$              & $10^{-3}$ \\
  Max distillation step $h_{\max}$  & 0.52 \\
  Batch split $\eta$        & 0.75 \\
  \midrule
  \multicolumn{2}{l}{\textit{Optimization}} \\
  Optimizer                         & AdamW \\
  Peak learning rate                & $10^{-3}$ \\
  Min learning rate                 & $10^{-5}$ \\
  LR schedule                       & warmup + cosine decay \\
  Warmup steps                      & 1{,}000 \\
  Gradient clip (global norm)       & 10 \\
  Batch size                        & 1{,}024 \\
  Training steps                    & 400{,}000 \\
  EMA decay                         & 0.999 \\
  \bottomrule
  \end{tabular}
  \label{tab:aldp-hparams}
  \end{table}

\subsection{Molecular systems: Chignolin}
\label{appsubsec:molecules_chignolin}

Chignolin, a $10$-residue mini-protein with sequence
\texttt{GYDPETGTWG}, was modelled in full atomic detail ($138$ atoms, all
hydrogens included). The reference ensemble is drawn from a still-internal addendum of the
\emph{many-peptides} molecular-dynamics dataset: long all-atom MD
trajectories that have been subsampled to $N = 10{,}000$ approximately
decorrelated conformers, which constitute the entire reference set used
below. Coordinates are stored in nanometres and are used without
re-centering.

We report two complementary Wasserstein distances, both estimated against the reference set with $10{,}000$ samples. The first
is the backbone torsion torus-Wasserstein distance $\T\text{-W}_2$. For
each conformer we extract the $9$ backbone $(\phi, \psi)$ torsion pairs.

The second metric is a tICA-$\text{W}_2$. We project conformers
into the dataset's precomputed $2$-dimensional time-lagged independent
component (tICA) coordinates -- built from $\mathrm{C}/\mathrm{N}/\mathrm{S}$
pairwise distances together with backbone $\phi/\psi/\omega$ sine/cosine
features ($1765$ features reduced to the two slowest components) -- and
compute the exact $2$D $\text{W}_2$ between generated and reference tICA
densities.

We compare a strong stochastic flow map (SSFM) against a diffusion
baseline, both trained on identical data with matched optimization. The
SSFM instantiates \cref{alg:ssfm} over a variance-preserving (VP) SDE. The drift and diffusion networks are parameterised as 
Diffusion Transformer (DiT) networks (see \cref{tab:chignolin-hparams} for hyperparameters). The Brownian polynomial degree is $N=3$.

The baseline is an x-prediction diffusion model: a single DiT trained
to denoise under the same VP SDE and sampled with ancestral VP reverse
steps. All transformer backbones are DiTs with adaLN-zero modulation,
sinusoidal time embeddings, and per-atom learned positional embeddings.

Additional plotted results showing the pooled Ramachandran and tICA projection are given in Figures \ref{fig:chignolin-pooled-ramachandran} and \ref{fig:chignolin-pooled-tica}.

\begin{table}[h]
  \centering
  \caption{Chignolin generation hyperparameters.}
  \label{tab:chignolin-hparams}
  \begin{tabular}{lrr}
    \toprule
     & SSFM & Diffusion \\
    \midrule
    \multicolumn{3}{l}{\textit{Data}} \\
    Sequence & \multicolumn{2}{c}{\texttt{GYDPETGTWG}} \\
    Atoms & \multicolumn{2}{c}{$138$} \\
    Reference frames & \multicolumn{2}{c}{$10{,}000$} \\
    \midrule
    \multicolumn{3}{l}{\textit{Drift/Diffusion network (DiT)}} \\
    Hidden size & $192$ & $192$ \\
    Transformer blocks & $6/2$ & $6$ \\
    Attention heads & $6$ & $6$ \\
    MLP ratio & $4$ & $4$ \\
    Time Fourier dim & $256$ & $256$ \\
    Input channels & $12$ & $3$ \\
    Dropout & $0.0$ & $0.0$ \\
    \midrule
    \multicolumn{3}{l}{\textit{Loss}} \\
    Step size split$\Delta t$ & $10^{-3}$ & --- \\
    Max distillation step $h_{\max}$ & $1.0$ & --- \\
    Batch split $\eta$ & $0.75$ & --- \\
    \midrule
    \multicolumn{3}{l}{\textit{Optimization}} \\
    Optimizer & AdamW & AdamW \\
    Peak learning rate & $2\times10^{-4}$ & $2\times10^{-4}$ \\
    Min learning rate & $10^{-5}$ & $10^{-5}$ \\
    LR schedule & \multicolumn{2}{c}{warmup + cosine decay} \\
    Warmup steps & $5000$ & $5000$ \\
    Weight decay & $10^{-4}$ & $10^{-4}$ \\
    Gradient clip (global norm) & $1.0$ & $1.0$ \\
    Batch size & $64$ & $64$ \\
    Training steps & $5\times10^{5}$ & $5\times10^{5}$ \\
    EMA decay & $0.999$ & --- \\
    \midrule
    Parameters & $5.71$M & $4.19$M \\
    \bottomrule
  \end{tabular}
\end{table}

\begin{figure}
    \centering
    \includegraphics[width=\linewidth]{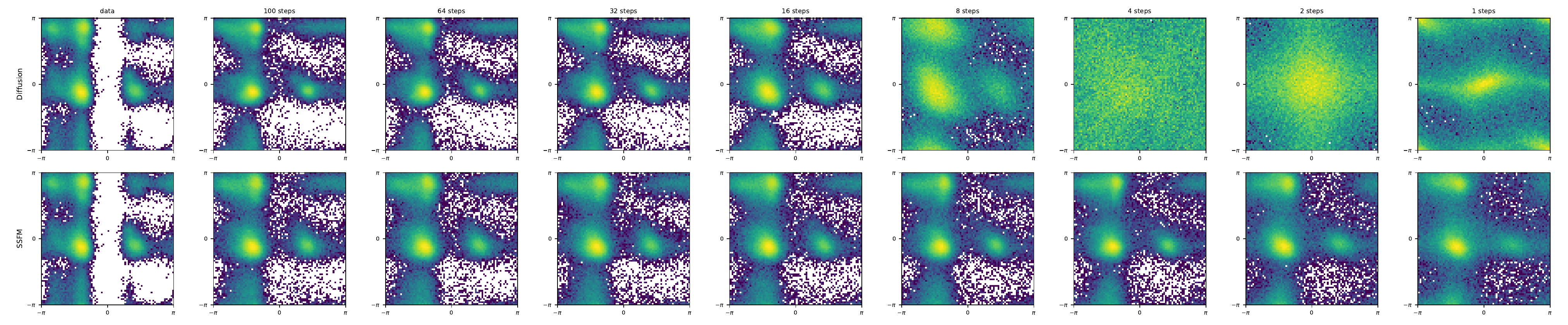}
    \caption{Pooled Chignolin Ramachandran plots. All backbone $\phi/\psi$ torsions are superimposed for ground-truth data (\emph{left-most column}) and generated samples from the diffusion baseline (\emph{top}) and SSFM (\emph{bottom}) across sampling step counts: [100, 64, 32, 16, 8, 4, 2, 1].}
    \label{fig:chignolin-pooled-ramachandran}
\end{figure}

\begin{figure}
    \centering
    \includegraphics[width=\linewidth]{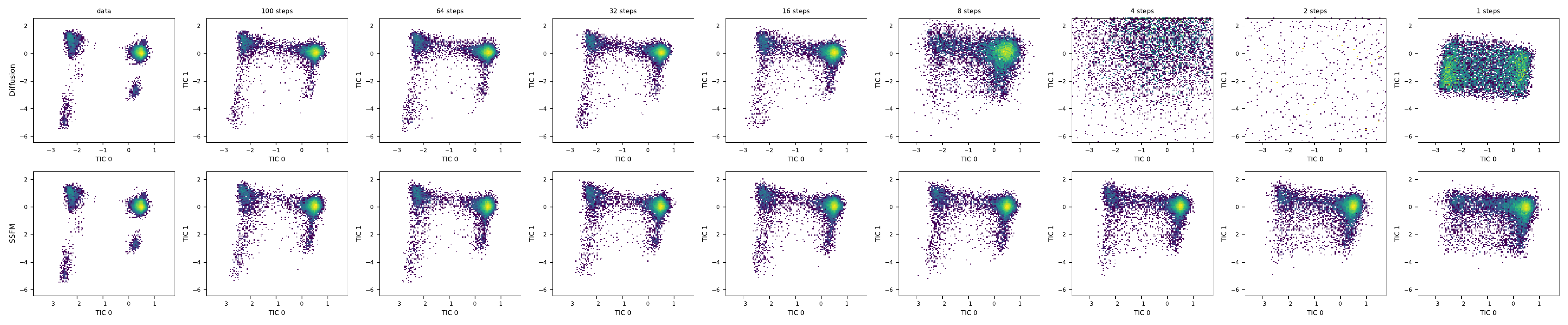}
    \caption{Chignolin tICA plots. Ground-truth data is shown in the \emph{left-most column}. The two primary tICA components and projected samples are shown from the diffusion baseline (\emph{top}) and SSFM (\emph{bottom}) across sampling step counts: [100, 64, 32, 16, 8, 4, 2, 1].}
    \label{fig:chignolin-pooled-tica}
\end{figure}

\newpage

\subsection{Hardware}
\label{app:compute}
All experiments were run on either one/two NVIDIA RTX A6000 GPUs or one/two NVIDIA H100 GPUs.

\subsection{Repositories}
\label{app:repos}
We made use of the following repositories and resources:
\begin{enumerate}
    \item \href{https://github.com/patrick-kidger/diffrax}{\texttt{patrick-kidger/diffrax}} (for VBT)
    \item \href{https://github.com/patrick-kidger/equinox}{\texttt{patrick-kidger/equinox}} (for neural networks in JAX)
    \item \href{https://github.com/Lightning-AI/torchmetrics}{\texttt{Lightning-AI/torchmetrics}} (for FID)
    \item \href{https://github.com/noegroup/ScoreMD}{\texttt{noegroup/ScoreMD}} (for Alanine-Dipeptide dataset and diffusion baselines)
\end{enumerate}

\section{Discussions}
\subsection{Contribution statement}
SM and ZB initially conceived the idea and developed it over correspondence with NR and JF.
The development of the theory was led by SM. The rough path convergence of the Brownian polynomial was shown by TH.
The image experiments and related open-source code were led by SM and ZB.
The molecular experiments and related open-source code were led by SM (ALDP) and NR (Chignolin).
SM and ZB wrote the paper, with contributions from all other authors. AT and JF guided the project.

\subsection{Broader impacts}
\label{app:broader}
We propose a framework for few-step generation of additive-noise SDEs, with demonstrated applications in image generation and molecular dynamics.
The primary positive impacts of this work is the acceleration of molecular dynamic simulations relevalent to drug discovery and other ai4science applications which use diffusion models. Improvements to image generation efficiency also reduce inference-time energy consumption. As with all advances in generative modelling, there is a risk that improved image generation could be misused to produce harmful synthetic media; however, SSFMs represent an efficiency improvement to existing pipelines rather than a new capability, and we do not believe this work introduces risks beyond those already present in deployed diffusion models.

\subsection{Limitations}
\label{app:limitations}
The current framework was demonstrated on additive-noise SDEs; extending this framework to state-dependent SDEs is possible with the theory developed here. We leave this to future work.
The polynomial approximation of the Brownian path introduces a truncation 
error controlled by the degree $N$, which in practice requires tuning as a 
hyperparameter.
Our image generation experiments are conducted on CIFAR-10 and CelebA-64; 
scaling to larger datasets and higher resolutions remains to be 
demonstrated.
Finally, the empirical comparison to weak stochastic flow maps focuses 
primarily on generative performance metrics; a more detailed empirical 
study of the pathwise consistency properties and their downstream implications is left to future work.

\end{document}

%% file: tikz/sfm_s_to_t.tex
  \begin{tikzpicture}[font=\small, >=stealth, line cap=round]

  \def\xs{1.2}
  \def\xu{5.5}
  \def\xt{10.8}
  \def\yPA{3.80}   
  \def\yPB{0.30}   
  \def\NSTEPS{200}
  \def\DXfull{0.0475}
  \def\driftStep{0.004}   

  \fill[tufte-shade,rounded corners=4pt] (0.0,2.20) rectangle (12.0,5.50);
  \fill[tufte-shade,rounded corners=4pt] (0.0,-1.50) rectangle (12.0,2.10);

  \draw[->,thick,tufte-dark] (0.7,2.12)--(11.4,2.12);
  \foreach \xx/\lb in {\xt/t}{
      \draw[tufte-rule,thin](\xx,2.02)--(\xx,2.22);
      \node[font=\footnotesize,tufte-mid] at (\xx,1.88) {$\lb$};
  }

  \node[font=\small\bfseries,tufte-dark,anchor=west]
      at (0.20,5.32)
      {Strong SFM (Ours)};
  \node[font=\small\bfseries,tufte-dark,anchor=west]
      at (0.20,1.92)
      {Weak SFM (Diamond / MFM)};


  \pgfmathsetseed{42}
  \coordinate (TRUE0) at (\xs,\yPA);
  \pgfmathsetmacro{\cY}{\yPA}\xdef\cYg{\cY}
  \foreach \k in {1,...,\NSTEPS}{
      \pgfmathsetmacro{\nY}{\cYg+\driftStep+0.055*rand}
      \xdef\cYg{\nY}
      \coordinate (TRUE\k) at ({\xs+\k*\DXfull},\nY);
  }
  \coordinate (TRUEu) at (TRUE90);   


  \begin{scope}\clip(0.0,2.20)rectangle(12.0,5.50);
      \foreach \seed in {11,23,37,53,67,79,83,97}{
          \pgfmathsetseed{\seed}
          \draw[tufte-red,line width=0.5pt,opacity=0.16]
              (\xs,\yPA)
              \foreach \k in {1,...,\NSTEPS}{
                  --++(\DXfull,\driftStep+0.055*rand)};
      }
      \foreach \seed in {13,19,31,41,59,61,71,73}{
          \pgfmathsetseed{\seed}
          \draw[tufte-red,line width=0.5pt,opacity=0.16]
              (\xs,\yPA)
              \foreach \k in {1,...,\NSTEPS}{
                  --++(\DXfull,-0.0058+0.055*rand)};
      }
      \draw[tufte-red,line width=1.9pt]
          (TRUE0)\foreach \k in {1,...,\NSTEPS}{--(TRUE\k)};
  \end{scope}

  \draw[tufte-dark,line width=1.6pt,->]
      (\xs,\yPA) -- (TRUEu)
      node[midway,above=14pt,font=\footnotesize,fill=tufte-shade,
           inner sep=2pt]{$\bm\Psi_{s,u}(\bx_s, \omega)$};
  \draw[tufte-dark,line width=1.6pt,->]
      (TRUEu) -- (TRUE200)
      node[midway,above=14pt,font=\footnotesize,fill=tufte-shade,
           inner sep=2pt]{$\bm\Psi_{u,t}(\bx_u, \omega)$};

  \draw[tufte-dark,line width=1.0pt,->,opacity=0.45]
      (\xs,\yPA) .. controls +(3.0,-0.70) and +(-3.0,-0.50) .. (TRUE200)
      node[midway,below=4pt,font=\footnotesize,fill=tufte-shade,
           inner sep=2pt]{$\bm\Psi_{s,t}(\bx_s,\omega)$};

  \node[draw=tufte-rule,fill=white,rounded corners=2pt,
        font=\footnotesize,inner sep=4pt,anchor=south east]
      at (11.80,2.28)
      {same $\omega$ $\Rightarrow$ same $\bx_t$ \checkmark};

  \filldraw[tufte-dark] (\xs,\yPA) circle(2.3pt)
      node[left,font=\footnotesize]{$\bx_s$};
  \filldraw[tufte-dark] (TRUEu) circle(2.3pt)
      node[below right=2pt and 1pt,font=\footnotesize,
           fill=tufte-shade,inner sep=2pt]{$\bx_u$};
  \filldraw[tufte-dark] (TRUE200) circle(2.8pt)
      node[right,font=\footnotesize]{$\bx_t$};


  \begin{scope}\clip(0.0,-1.50)rectangle(12.0,2.10);
      \foreach \seed in {11,23,37,53,67,79,83,97}{
          \pgfmathsetseed{\seed}
          \draw[tufte-red,line width=0.5pt,opacity=0.16]
              (\xs,\yPB)
              \foreach \k in {1,...,\NSTEPS}{
                  --++(\DXfull,\driftStep+0.055*rand)};
      }
      \foreach \seed in {13,19,31,41,59,61,71,73}{
          \pgfmathsetseed{\seed}
          \draw[tufte-red,line width=0.5pt,opacity=0.16]
              (\xs,\yPB)
              \foreach \k in {1,...,\NSTEPS}{
                  --++(\DXfull,-0.0058+0.055*rand)};
      }
      \pgfmathsetseed{42}
      \draw[tufte-red,line width=1.9pt]
          (\xs,\yPB)
          \foreach \k in {1,...,\NSTEPS}{
              --++(\DXfull,\driftStep+0.055*rand)};
  \end{scope}

  \coordinate (Ws)     at (\xs,   \yPB);
  \coordinate (WbXu)   at (\xu,   \yPB-0.70);  
  \coordinate (WbXt2)  at (\xt,   \yPB+0.60);  

  \coordinate (WbXt1)  at (\xt,   \yPB+1.10);  

  \draw[tufte-dark,line width=1.3pt,dashed,->]
      (Ws) -- (WbXu)
      node[midway,below=8pt,font=\footnotesize,fill=tufte-shade,
           inner sep=2pt]{$\bar{\bm\Psi}_{s,u}(\bx_s,\bm\epsilon_1)$};
  \draw[tufte-dark,line width=1.3pt,dashed,->]
      (WbXu) -- (WbXt2)
      node[midway,below=12pt,font=\footnotesize,fill=tufte-shade,
           inner sep=2pt]{$\bar{\bm\Psi}_{u,t}(\bar\bx_u,\bm\epsilon_2)$};

  \draw[tufte-dark,line width=1.0pt,dashed,->,opacity=0.55]
      (Ws) .. controls +(5.0,1.20) and +(-3.0,0.20) .. (WbXt1)
      node[midway,above=4pt,font=\footnotesize,fill=tufte-shade,
           inner sep=2pt]{$\bar{\bm\Psi}_{s,t}(\bx_s,\bm\epsilon_3)$};

  \filldraw[tufte-dark]     (Ws)    circle(2.3pt)
      node[left,font=\footnotesize]{$\bx_s$};
  \filldraw[tufte-dark] (WbXu)  circle(2.3pt)
      node[below right=2pt and 1pt,font=\footnotesize,
           fill=tufte-shade,inner sep=2pt]{$\bar\bx_u$};
  \filldraw[tufte-dark] (WbXt2) circle(2.8pt)
      node[right,font=\footnotesize]
          {$\bar\bx_t^{\,(2)}$};
  \filldraw[tufte-dark!70] (WbXt1) circle(2.8pt)
      node[right,font=\footnotesize,tufte-dark]
          {$\bar\bx_t^{\,(1)}$};

  \node[draw=tufte-rule,fill=white,rounded corners=2pt,
        font=\footnotesize,inner sep=4pt,anchor=south east]
      at (11.80,-1.44)
      {$\bar\bx_t^{(1)} \neq \bar\bx_t^{(2)}$ in general};

  \node[anchor=base,font=\footnotesize] at (6.0,-1.72)
      {\textcolor{tufte-red}{\rule[0.4ex]{10pt}{1.6pt}}\;$\bX_t(\omega)$
       \enspace
       \textcolor{tufte-red!40}{\rule[0.4ex]{10pt}{0.55pt}}\;other realizations
       \enspace
       {\color{tufte-dark}\rule[0.4ex]{10pt}{1.4pt}}\;strong (ours)
       \enspace
       \tikz[baseline=-0.5ex]{
           \draw[tufte-dark,line width=1.1pt,dashed]
               (0,0)--(0.38,0);}\;weak};

\end{tikzpicture}

%% file: bib.bib
@inproceedings{
holderrieth2026glass,
title={{GLASS} Flows: Efficient Inference for Reward Alignment of Flow and Diffusion Models},
author={Peter Holderrieth and Uriel Singer and Tommi Jaakkola and Ricky T. Q. Chen and Yaron Lipman and Brian Karrer},
booktitle={The Fourteenth International Conference on Learning Representations},
year={2026},
url={https://openreview.net/forum?id=vH7OAPZ2dR}
}

@inproceedings{
lipman2023flow,
title={Flow Matching for Generative Modeling},
author={Yaron Lipman and Ricky T. Q. Chen and Heli Ben-Hamu and Maximilian Nickel and Matthew Le},
booktitle={The Eleventh International Conference on Learning Representations },
year={2023},
url={https://openreview.net/forum?id=PqvMRDCJT9t}
}

@article{albergo2025stochastic,
  title={Stochastic interpolants: A unifying framework for flows and diffusions},
  author={Albergo, Michael and Boffi, Nicholas M and Vanden-Eijnden, Eric},
  journal={Journal of Machine Learning Research},
  volume={26},
  number={209},
  pages={1--80},
  year={2025}
}

@inproceedings{
liu2023flow,
title={Flow Straight and Fast: Learning to Generate and Transfer Data with Rectified Flow},
author={Xingchao Liu and Chengyue Gong and Qiang Liu},
booktitle={The Eleventh International Conference on Learning Representations },
year={2023},
url={https://openreview.net/forum?id=XVjTT1nw5z}
}

@article{
tong2024improving,
title={Improving and generalizing flow-based generative models with minibatch optimal transport},
author={Alexander Tong and Kilian FATRAS and Nikolay Malkin and Guillaume Huguet and Yanlei Zhang and Jarrid Rector-Brooks and Guy Wolf and Yoshua Bengio},
journal={Transactions on Machine Learning Research},
issn={2835-8856},
year={2024},
url={https://openreview.net/forum?id=CD9Snc73AW},
note={Expert Certification}
}

@misc{peluchetti2021,
      title={Non-Denoising Forward-Time Diffusions}, 
      author={Stefano Peluchetti},
      year={2021},
}

@misc{lipman2024flow-guide,
  title={Flow Matching Guide and Code},
  author={Lipman, Yaron and Havasi, Marton and Holderrieth, Peter and Shaul, Neta and Le, Matt and Karrer, Brian and Chen, Ricky TQ and Lopez-Paz, David and Ben-Hamu, Heli and Gat, Itai},
  year={2024}
}

@inproceedings{
song2021scorebased,
title={Score-Based Generative Modeling through Stochastic Differential Equations},
author={Yang Song and Jascha Sohl-Dickstein and Diederik P Kingma and Abhishek Kumar and Stefano Ermon and Ben Poole},
booktitle={International Conference on Learning Representations},
year={2021},
url={https://openreview.net/forum?id=PxTIG12RRHS}
}

@article{anderson1982reverse,
  title={Reverse-time diffusion equation models},
  author={Anderson, Brian DO},
  journal={Stochastic Processes and their Applications},
  volume={12},
  number={3},
  pages={313--326},
  year={1982},
  publisher={Elsevier}
}

@article{maoutsa2020interacting,
  title={Interacting particle solutions of fokker--planck equations through gradient--log--density estimation},
  author={Maoutsa, Dimitra and Reich, Sebastian and Opper, Manfred},
  journal={Entropy},
  volume={22},
  number={8},
  pages={802},
  year={2020},
  publisher={MDPI}
}

@inproceedings{
nie2024blessing,
title={The Blessing of Randomness: {SDE} Beats {ODE} in General Diffusion-based Image Editing},
author={Shen Nie and Hanzhong Allan Guo and Cheng Lu and Yuhao Zhou and Chenyu Zheng and Chongxuan Li},
booktitle={The Twelfth International Conference on Learning Representations},
year={2024},
url={https://openreview.net/forum?id=DesYwmUG00}
}

@phdthesis{kidger_thesis,
    author = {Kidger, Patrick},
    title = {On Neural Differential Equations},
    school = {Oxford University},
    type = {Ph.D. thesis},
    year = {2022},
    note = {Available at \url{https://arxiv.org/abs/2202.02435}}
}

@misc{
heek2024multistep,
title={Multistep Consistency Models},
author={Jonathan Heek and Emiel Hoogeboom and Tim Salimans},
year={2024},
url={https://openreview.net/forum?id=d7DZRNe2xG}
}

@inproceedings{
song2024improved,
title={Improved Techniques for Training Consistency Models},
author={Yang Song and Prafulla Dhariwal},
booktitle={The Twelfth International Conference on Learning Representations},
year={2024},
url={https://openreview.net/forum?id=WNzy9bRDvG}
}

@inproceedings{
kim2024consistency,
title={Consistency Trajectory Models: Learning Probability Flow {ODE} Trajectory of Diffusion},
author={Dongjun Kim and Chieh-Hsin Lai and Wei-Hsiang Liao and Naoki Murata and Yuhta Takida and Toshimitsu Uesaka and Yutong He and Yuki Mitsufuji and Stefano Ermon},
booktitle={The Twelfth International Conference on Learning Representations},
year={2024},
url={https://openreview.net/forum?id=ymjI8feDTD}
}

@inproceedings{
frans2025one,
title={One Step Diffusion via Shortcut Models},
author={Kevin Frans and Danijar Hafner and Sergey Levine and Pieter Abbeel},
booktitle={The Thirteenth International Conference on Learning Representations},
year={2025},
url={https://openreview.net/forum?id=OlzB6LnXcS}
}

@article{boffi2025build,
  title={How to build a consistency model: Learning flow maps via self-distillation},
  author={Boffi, Nicholas M and Albergo, Michael S and Vanden-Eijnden, Eric},
  journal={arXiv preprint arXiv:2505.18825},
  year={2025}
}

@article{boffi2024flow,
  title={Flow map matching},
  author={Boffi, Nicholas M and Albergo, Michael S and Vanden-Eijnden, Eric},
  journal={arXiv preprint arXiv:2406.07507},
  year={2024}
}

@inproceedings{
sabour2025align,
title={Align Your Flow: Scaling Continuous-Time Flow Map Distillation},
author={Amirmojtaba Sabour and Sanja Fidler and Karsten Kreis},
booktitle={The Thirty-ninth Annual Conference on Neural Information Processing Systems},
year={2025},
url={https://openreview.net/forum?id=pzHuesCvcO}
}

@inproceedings{
geng2025mean,
title={Mean Flows for One-step Generative Modeling},
author={Zhengyang Geng and Mingyang Deng and Xingjian Bai and J Zico Kolter and Kaiming He},
booktitle={The Thirty-ninth Annual Conference on Neural Information Processing Systems},
year={2025},
url={https://openreview.net/forum?id=uWj4s7rMnR}
}

@InProceedings{song2023consistency,
  title = 	 {Consistency Models},
  author =       {Song, Yang and Dhariwal, Prafulla and Chen, Mark and Sutskever, Ilya},
  booktitle = 	 {Proceedings of the 40th International Conference on Machine Learning},
  pages = 	 {32211--32252},
  year = 	 {2023},
  editor = 	 {Krause, Andreas and Brunskill, Emma and Cho, Kyunghyun and Engelhardt, Barbara and Sabato, Sivan and Scarlett, Jonathan},
  volume = 	 {202},
  series = 	 {Proceedings of Machine Learning Research},
  month = 	 {23--29 Jul},
  publisher =    {PMLR},
  url = 	 {https://proceedings.mlr.press/v202/song23a.html}
}

@inproceedings{chen2018neural,
 author = {Chen, Ricky T. Q. and Rubanova, Yulia and Bettencourt, Jesse and Duvenaud, David K},
 booktitle = {Advances in Neural Information Processing Systems},
 editor = {S. Bengio and H. Wallach and H. Larochelle and K. Grauman and N. Cesa-Bianchi and R. Garnett},
 pages = {},
 publisher = {Curran Associates, Inc.},
 title = {Neural Ordinary Differential Equations},
 url = {https://proceedings.neurips.cc/paper_files/paper/2018/file/69386f6bb1dfed68692a24c8686939b9-Paper.pdf},
 volume = {31},
 year = {2018}
}

@book{friz2010multidimensional,
  title={Multidimensional stochastic processes as rough paths: theory and applications},
  author={Friz, Peter K and Victoir, Nicolas B},
  volume={120},
  year={2010},
  publisher={Cambridge University Press}
}

@book{oksendal2003stochastic,
  title={Stochastic Differential Equations: An Introduction with Applications},
  author={{\O}ksendal, Bernt},
  isbn={9783662036204},
  series={Universitext},
  doi={10.1007/978-3-642-14394-6},
  year={2003},
  month={jul},
  address={Berlin, Germany},
  publisher={Springer Berlin Heidelberg}
}

@article{lyons1998differential,
  title={Differential equations driven by rough signals},
  author={Lyons, Terry J},
  journal={Revista Matem{\'a}tica Iberoamericana},
  volume={14},
  number={2},
  pages={215--310},
  year={1998}
}

@article{foster2020optimal,
  title={An optimal polynomial approximation of Brownian motion},
  author={Foster, James and Lyons, Terry and Oberhauser, Harald},
  journal={SIAM Journal on Numerical Analysis},
  volume={58},
  number={3},
  pages={1393--1421},
  year={2020},
  publisher={SIAM}
}

@article{jelinvcivc2024single,
  title={Single-seed generation of Brownian paths and integrals for adaptive and high order SDE solvers},
  author={Jelin{\v{c}}i{\v{c}}, Andra{\v{z}} and Foster, James and Kidger, Patrick},
  journal={arXiv preprint arXiv:2405.06464},
  year={2024}
}

@article{potaptchik2026meta,
  title={Meta Flow Maps enable scalable reward alignment},
  author={Potaptchik, Peter and Saravanan, Adhi and Mammadov, Abbas and Prat, Alvaro and Albergo, Michael S and Teh, Yee Whye},
  journal={arXiv preprint arXiv:2601.14430},
  year={2026}
}

@inproceedings{
passaro2026stochastic,
title={Stochastic Few-step Models},
author={Romeo Passaro and Zander W. Blasingame and Michael M. Bronstein and Alexander Tong},
booktitle={ICLR 2026 2nd Workshop on Deep Generative Model in Machine Learning: Theory, Principle and Efficacy},
year={2026},
url={https://openreview.net/forum?id=nmczKNW73P}
}

@article{holderrieth2026diamond,
  title={Diamond Maps: Efficient Reward Alignment via Stochastic Flow Maps},
  author={Holderrieth, Peter and Chen, Douglas and Eyring, Luca and Shah, Ishin and Anantharaman, Giri and He, Yutong and Akata, Zeynep and Jaakkola, Tommi and Boffi, Nicholas Matthew and Simchowitz, Max},
  journal={arXiv preprint arXiv:2602.05993},
  year={2026}
}

@inproceedings{lu2022dpmsolver,
title={{DPM}-Solver: A Fast {ODE} Solver for Diffusion Probabilistic Model Sampling in Around 10 Steps},
author={Cheng Lu and Yuhao Zhou and Fan Bao and Jianfei Chen and Chongxuan Li and Jun Zhu},
booktitle={Advances in Neural Information Processing Systems},
editor={Alice H. Oh and Alekh Agarwal and Danielle Belgrave and Kyunghyun Cho},
year={2022},
url={https://openreview.net/forum?id=2uAaGwlP_V}
}

@article{chen1954iterated,
  title={Iterated integrals and exponential homomorphisms},
  author={Chen, Kuo-Tsai},
  journal={Proceedings of the London Mathematical Society},
  volume={3},
  number={1},
  pages={502--512},
  year={1954},
  publisher={Oxford University Press}
}

@article{chen1957integration,
  title={Integration of paths, geometric invariants and a generalized Baker-Hausdorff formula},
  author={Chen, Kuo-Tsai},
  journal={Annals of Mathematics},
  volume={65},
  number={1},
  pages={163--178},
  year={1957},
  publisher={JSTOR}
}

@inproceedings{sohl2015deep,
  title={Deep unsupervised learning using nonequilibrium thermodynamics},
  author={Sohl-Dickstein, Jascha and Weiss, Eric and Maheswaranathan, Niru and Ganguli, Surya},
  booktitle={International conference on machine learning},
  pages={2256--2265},
  year={2015},
  organization={pmlr}
}

@inproceedings{ho2020diffusion,
 author = {Ho, Jonathan and Jain, Ajay and Abbeel, Pieter},
 booktitle = {Advances in Neural Information Processing Systems},
 editor = {H. Larochelle and M. Ranzato and R. Hadsell and M.F. Balcan and H. Lin},
 pages = {6840--6851},
 publisher = {Curran Associates, Inc.},
 title = {Denoising Diffusion Probabilistic Models},
 url = {https://proceedings.neurips.cc/paper_files/paper/2020/file/4c5bcfec8584af0d967f1ab10179ca4b-Paper.pdf},
 volume = {33},
 year = {2020}
}

@inproceedings{rombach2022high,
  title={High-resolution image synthesis with latent diffusion models},
  author={Rombach, Robin and Blattmann, Andreas and Lorenz, Dominik and Esser, Patrick and Ommer, Bj{\"o}rn},
  booktitle={Proceedings of the IEEE/CVF conference on computer vision and pattern recognition},
  pages={10684--10695},
  year={2022}
}

@inproceedings{blattmann2023align,
  title={Align your latents: High-resolution video synthesis with latent diffusion models},
  author={Blattmann, Andreas and Rombach, Robin and Ling, Huan and Dockhorn, Tim and Kim, Seung Wook and Fidler, Sanja and Kreis, Karsten},
  booktitle={Proceedings of the IEEE/CVF conference on computer vision and pattern recognition},
  pages={22563--22575},
  year={2023}
}

@article{watson2023novo,
  title={De novo design of protein structure and function with RFdiffusion},
  author={Watson, Joseph L and Juergens, David and Bennett, Nathaniel R and Trippe, Brian L and Yim, Jason and Eisenach, Helen E and Ahern, Woody and Borst, Andrew J and Ragotte, Robert J and Milles, Lukas F and others},
  journal={Nature},
  volume={620},
  number={7976},
  pages={1089--1100},
  year={2023},
  publisher={Nature Publishing Group UK London}
}

@inproceedings{
rehman2026falcon,
title={{FALCON}: Few-step Accurate Likelihoods for Continuous Flows},
author={Rehman, Danyal and Akhound-Sadegh, Tara and Gazizov, Artem and Bengio, Yoshua and Tong, Alexander},
booktitle={The Fourteenth International Conference on Learning Representations},
year={2026},
url={https://openreview.net/forum?id=FbssShlI4N}
}

@inproceedings{plainer2026consistent,
  title={Consistent Sampling and Simulation: Molecular Dynamics with Energy-Based Diffusion Models},
  author={Plainer, Michael and Wu, Hao and Klein, Leon and G{\"u}nnemann, Stephan and Noe, Frank},
  booktitle={The Thirty-ninth Annual Conference on Neural Information Processing Systems},
  year={2025},
}

@inproceedings{
kidger2021efficient,
title={Efficient and Accurate Gradients for Neural {SDE}s},
author={Patrick Kidger and James Foster and Xuechen Li and Terry Lyons},
booktitle={Advances in Neural Information Processing Systems},
editor={A. Beygelzimer and Y. Dauphin and P. Liang and J. Wortman Vaughan},
year={2021},
url={https://openreview.net/forum?id=b2bkE0Qq8Ya}
}

@InProceedings{li2020sdes,
  title = 	 {Scalable Gradients and Variational Inference for
 Stochastic Differential Equations },
  author =       {Li, Xuechen and Wong, Ting-Kam Leonard and Chen, Ricky T. Q. and Duvenaud, David K.},
  booktitle = 	 {Proceedings of The 2nd Symposium on
 Advances in Approximate Bayesian Inference},
  pages = 	 {1--28},
  year = 	 {2020},
  editor = 	 {Zhang, Cheng and Ruiz, Francisco and Bui, Thang and Dieng, Adji Bousso and Liang, Dawen},
  volume = 	 {118},
  series = 	 {Proceedings of Machine Learning Research},
  month = 	 {08 Dec},
  publisher =    {PMLR},
  url = 	 {https://proceedings.mlr.press/v118/li20a.html}
}

@inproceedings{
oh2024stable,
title={Stable Neural Stochastic Differential Equations in Analyzing Irregular Time Series Data},
author={YongKyung Oh and Dongyoung Lim and Sungil Kim},
booktitle={The Twelfth International Conference on Learning Representations},
year={2024},
url={https://openreview.net/forum?id=4VIgNuQ1pY}
}

@inproceedings{
walker2024log,
title={Log Neural Controlled Differential Equations: The Lie Brackets Make A Difference},
author={Benjamin Walker and Andrew Donald McLeod and Tiexin Qin and Yichuan Cheng and Haoliang Li and Terry Lyons},
booktitle={Forty-first International Conference on Machine Learning},
year={2024},
url={https://openreview.net/forum?id=0tYrMtQyPT}
}

@article{disco2026,
      title={General Multimodal Protein Design Enables DNA-Encoding of Chemistry},
      author={Jarrid Rector-Brooks and Théophile Lambert and Marta Skreta and Daniel Roth and Yueming Long and Zi-Qi Li and Xi Zhang and Miruna Cretu and Francesca-Zhoufan Li and Tanvi Ganapathy and Emily Jin and Avishek Joey Bose and Jason Yang and Kirill Neklyudov and Yoshua Bengio and Alexander Tong and Frances H. Arnold and Cheng-Hao Liu},
      year={2026},
      eprint={2604.05181},
      archivePrefix={arXiv},
      primaryClass={cs.LG},
      url={https://arxiv.org/abs/2604.05181},
}

@inproceedings{
zhang2023fast,
title={Fast Sampling of Diffusion Models with Exponential Integrator},
author={Qinsheng Zhang and Yongxin Chen},
booktitle={The Eleventh International Conference on Learning Representations },
year={2023},
url={https://openreview.net/forum?id=Loek7hfb46P}
}

@inproceedings{
gonzalez2023seeds,
title={{SEEDS}: Exponential {SDE} Solvers for Fast High-Quality Sampling from Diffusion Models},
author={Martin Gonzalez and Nelson Fernandez and Thuy Vinh Dinh Tran and Elies Gherbi and Hatem Hajri and Nader Masmoudi},
booktitle={Thirty-seventh Conference on Neural Information Processing Systems},
year={2023},
url={https://openreview.net/forum?id=V6IgkYKD8P}
}

@inproceedings{blasingame2026rex,
  title = {Rex: A Family of Reversible Exponential (Stochastic) Runge-Kutta Solvers},
  author = {Blasingame, Zander W. and Liu, Chen},
  year = {2026},
  booktitle = {Forty-third International Conference on Machine Learning},
  url = {https://openreview.net/forum?id=7pQIzVNctu},
}

@article{geng2025improved,
  title={Improved mean flows: On the challenges of fastforward generative models},
  author={Geng, Zhengyang and Lu, Yiyang and Wu, Zongze and Shechtman, Eli and Kolter, J Zico and He, Kaiming},
  journal={arXiv preprint arXiv:2512.02012},
  year={2025}
}

@article{kohler2021smooth,
  title={Smooth normalizing flows},
  author={K{\"o}hler, Jonas and Kr{\"a}mer, Andreas and No{\'e}, Frank},
  journal={Advances in Neural Information Processing Systems},
  volume={34},
  pages={2796--2809},
  year={2021}
}

@article{krizhevsky2009learning,
  title={Learning multiple layers of features from tiny images},
  author={Krizhevsky, Alex and Hinton, Geoffrey and others},
  year={2009},
  publisher={Toronto, ON, Canada}
}

@inproceedings{liu2015faceattributes,
  title = {Deep Learning Face Attributes in the Wild},
  author = {Liu, Ziwei and Luo, Ping and Wang, Xiaogang and Tang, Xiaoou},
  booktitle = {Proceedings of International Conference on Computer Vision (ICCV)},
  month = {December},
  year = {2015} 
}

@inproceedings{karras2024analyzing,
  title={Analyzing and improving the training dynamics of diffusion models},
  author={Karras, Tero and Aittala, Miika and Lehtinen, Jaakko and Hellsten, Janne and Aila, Timo and Laine, Samuli},
  booktitle={Proceedings of the IEEE/CVF conference on computer vision and pattern recognition},
  pages={24174--24184},
  year={2024}
}

@article{garsia1970,
 ISSN = {00222518, 19435258},
 URL = {http://www.jstor.org/stable/24890119},
 author = {A. M. Garsia and E. Rodemich and H. Rumsey and M. Rosenblatt},
 journal = {Indiana University Mathematics Journal},
 number = {6},
 pages = {565--578},
 publisher = {Indiana University Mathematics Department},
 title = {A Real Variable Lemma and the Continuity of Paths of Some Gaussian Processes},
 urldate = {2026-05-06},
 volume = {20},
 year = {1970}
}

@article{Habermann_2021,
   title={A semicircle law and decorrelation phenomena for iterated Kolmogorov loops},
   volume={103},
   ISSN={1469-7750},
   url={http://dx.doi.org/10.1112/jlms.12384},
   DOI={10.1112/jlms.12384},
   number={2},
   journal={Journal of the London Mathematical Society},
   publisher={Wiley},
   author={Habermann, Karen},
   year={2021},
   month=Sept, pages={558–586} }

@inproceedings{
holderrieth2025generator,
title={Generator Matching: Generative modeling with arbitrary Markov processes},
author={Peter Holderrieth and Marton Havasi and Jason Yim and Neta Shaul and Itai Gat and Tommi Jaakkola and Brian Karrer and Ricky T. Q. Chen and Yaron Lipman},
booktitle={The Thirteenth International Conference on Learning Representations},
year={2025},
url={https://openreview.net/forum?id=RuP17cJtZo}
}

@article{jiang2025sig,
  title={Sig-DEG for Distillation: Making Diffusion Models Faster and Lighter},
  author={Jiang, Lei and Ge, Wen and Cariou-Kotlarek, Niels and Yi, Mingxuan and Chen, Po-Yu and Yang, Lingyi and Buet-Golfouse, Francois and Mittal, Gaurav and Ni, Hao},
  journal={arXiv preprint arXiv:2508.16939},
  year={2025}
}

@inproceedings{NEURIPS2020_f1686b4b,
 author = {Poli, Michael and Massaroli, Stefano and Yamashita, Atsushi and Asama, Hajime and Park, Jinkyoo},
 booktitle = {Advances in Neural Information Processing Systems},
 editor = {H. Larochelle and M. Ranzato and R. Hadsell and M.F. Balcan and H. Lin},
 pages = {21105--21117},
 publisher = {Curran Associates, Inc.},
 title = {Hypersolvers: Toward Fast Continuous-Depth Models},
 url = {https://proceedings.neurips.cc/paper_files/paper/2020/file/f1686b4badcf28d33ed632036c7ab0b8-Paper.pdf},
 volume = {33},
 year = {2020}
}

@inproceedings{
kiyohara2025neural,
title={Neural Stochastic Flows: Solver-Free Modelling and Inference for {SDE} Solutions},
author={Naoki Kiyohara and Edward Johns and Yingzhen Li},
booktitle={The Thirty-ninth Annual Conference on Neural Information Processing Systems},
year={2025},
url={https://openreview.net/forum?id=PrYDDxphym}
}

@InProceedings{rezende2015nfs,
  title = 	 {Variational Inference with Normalizing Flows},
  author = 	 {Rezende, Danilo and Mohamed, Shakir},
  booktitle = 	 {Proceedings of the 32nd International Conference on Machine Learning},
  pages = 	 {1530--1538},
  year = 	 {2015},
  editor = 	 {Bach, Francis and Blei, David},
  volume = 	 {37},
  series = 	 {Proceedings of Machine Learning Research},
  address = 	 {Lille, France},
  month = 	 {07--09 Jul},
  publisher =    {PMLR},
  url = 	 {https://proceedings.mlr.press/v37/rezende15.html}
}

@article{kullback1951information,
  title={On information and sufficiency},
  author={Kullback, Solomon and Leibler, Richard A},
  journal={The annals of mathematical statistics},
  volume={22},
  number={1},
  pages={79--86},
  year={1951},
  publisher={JSTOR}
}
